\documentclass[11pt]{article}
\usepackage[utf8]{inputenc} 
\usepackage{geometry}
\geometry{verbose,tmargin=1in,bmargin=1in,lmargin=3cm,rmargin=3cm}

\usepackage{babel}
\usepackage{array}
\usepackage{verbatim}
\usepackage{diagbox}
\usepackage{enumitem}

\usepackage{amsmath, amsfonts, amsthm, bm,  amssymb, longtable, multirow, mathrsfs}
\usepackage{epstopdf, graphicx, subcaption,dsfont}
\usepackage{algorithm}  
\usepackage{algorithmicx}  
\usepackage{algpseudocode}
\usepackage{booktabs}
\usepackage{array}
\usepackage{booktabs}
\usepackage[numbers]{natbib}

\usepackage{setspace}

\usepackage{xcolor}

\definecolor{darkblue}{rgb}{0,0,.7}
\RequirePackage[colorlinks,allcolors=darkblue]{hyperref}

\usepackage[capitalize]{cleveref}
\usepackage{crossreftools}

\allowdisplaybreaks

\theoremstyle{plain}
\newtheorem{theorem}{Theorem}

\newtheorem{lemma}{Lemma}[section]

\theoremstyle{definition}

\newtheorem{assumption}{Assumption}
\renewcommand{\theassumption}{A\arabic{assumption}}

\theoremstyle{remark}

\usepackage{listings}
\definecolor{cc}{RGB}{0,0,255}
\definecolor{rck}{RGB}{255,0,0}

\newcommand{\opnorm}[1]{{\left\vert\kern-0.25ex\left\vert\kern-0.25ex\left\vert #1 
		\right\vert\kern-0.25ex\right\vert\kern-0.25ex\right\vert}}

\newcommand{\cas}{\stackrel{a.s.}{\rightarrow}}

\newcommand{\normal}{\mathsf{N}}
\newcommand{\train}{\mathsf{Train}}
\newcommand{\pred}{\mathsf{Pred}}
\newcommand{\cost}{\mathsf{Cost}}

\newcommand{\var}{\mathsf{Var}}

\newcommand{\wb}[1]{\overline{#1}}
\newcommand{\ind}{\mathds{1}}
\newcommand{\Ep}{\mathbb{E}}
\newcommand{\E}{\mathbb{E}}

\newcommand{\real}{\mathbb{R}} 
\newcommand{\R}{\mathbb{R}} 
\newcommand{\C}{\mathbb{C}} 
\newcommand{\prn}[1]{\left({#1}\right)} 
\newcommand{\brk}[1]{\left[{#1}\right]} 
\newcommand{\brc}[1]{\left\{{#1}\right\}} 
\newcommand{\norm}[1]{\left\|{#1}\right\|} 
\newcommand{\ltwo}[1]{\norm{#1}_2}  
\newcommand{\ltwobig}[1]{\normbig{#1}_2}  
\newcommand{\normbig}[1]{\big\|{#1}\big\|} 
\newcommand{\normtwo}[1]{\norm{#1}_2}

\newcommand{\est}[1]{\widehat{#1}}

\newcommand{\half}{\frac{1}{2}}

\newcommand{\mc}[1]{\mathcal{#1}}

\newcommand{\minimize}{\mathop{\textup{minimize}}}
\newcommand{\subjectto}{\mathop{\textup{subject~to}}}

\newcommand{\Tr}{\mathsf{Tr}}
%
%
%


\newcommand{\defeq}{:=}

\definecolor{innerboxcolor}{rgb}{.9,.95,1}
\definecolor{outerlinecolor}{rgb}{.6,0,.2}

\newcommand{\openright}[2]{\left[{#1},{#2}\right)}

\newcommand{\epsdeformed}{\epsilon_{\sigma,\textup{def}}}
\newcommand{\rhodeformed}{\rho_{\textup{def}}}
\newcommand{\rhools}{\rho_{\textup{ols}}}
\newcommand{\epsols}{\epsilon_{\sigma,\textup{ols}}}
\newcommand{\simiid}{\stackrel{\textup{iid}}{\sim}}

\newcommand{\thetaols}{\est{\theta}_{\mathsf{ols}}}
\newcommand{\Aols}{A_{\mathsf{ols}}}

\newcommand{\Hlin}{\mathcal{H}_{\textup{lin}}}
\newcommand{\Hsq}{\mathcal{H}_{\textup{sq}}}

\begin{document}

\begin{center}
  \Large{Memorize to Generalize: on the Necessity of Interpolation in\\
    High Dimensional Linear Regression} \\
  \vspace{.5cm}

  \large{Chen Cheng$^1$ ~~~~ John Duchi$^{1,2}$ ~~~~ Rohith Kuditipudi$^3$} \\
  \vspace{.25cm}
  \large{Departments of $^1$Statistics, $^2$Electrical Engineering,
    and $^3$Computer Science \\
    Stanford University}
  \\
  \vspace{.2cm}
  \large{February 2022; \quad Revised June 2022}
\end{center}



\begin{abstract}
  We examine the necessity of interpolation in overparameterized
  models, that is, when achieving optimal predictive risk
  in machine learning problems requires (nearly)
  interpolating the training data. In particular, we consider simple
  overparameterized linear regression $y = X \theta + w$
  with random design $X \in
  \real^{n \times d}$ under the proportional asymptotics $d/n \to \gamma
  \in (1, \infty)$.  We precisely characterize how prediction (test) error
  necessarily scales with training error in this setting.  An
  implication of this characterization is that as the label noise variance
  $\sigma^2 \to 0$, any estimator that incurs at least $\mathsf{c}\sigma^4$
  training error for some constant $\mathsf{c}$ is necessarily suboptimal
  and  will suffer growth in excess prediction error at least
  linear in the training error. Thus, optimal performance requires fitting
  training data to substantially higher accuracy than the inherent noise
  floor of the problem.
\end{abstract}

\section{Introduction}

Conventional machine learning wisdom \citep[e.g.][]{VapnikCh71} posits that
the
size of a model's training data must be large relative to its effective
capacity---for which parameter count often serves as a proxy---in order for
the model to have good generalization.  Yet despite the fact that many
common families of modern machine learning models (e.g., deep neural
networks) are overparameterized in the sense that they are demonstrably able
to interpolate arbitrary relabelings of their training data, they tend to
generalize remarkably well in practice even after optimizing the empirical
risk to zero~\citep{ZhangBeHaReVi17}.

This \textit{benign overfitting} phenomenon has spurred considerable
recent interest and effort within the learning theory community toward
understanding learning in the overparameterized regime, where the empirical
risk minimizer is underdetermined \citep{BelkinHsMi18, BelkinMaMa18,
  BelkinRaTs19, HastieMoRoTi19, MuthukumarVoSa19, BartlettLoLuTs20,
  BelkinHsXu20, LiangRa20, MeiMo21}. Yet while overparameterized
interpolating models evidently generalize well, both in theory and practice, there
nonetheless remains at least some reason to be skeptical of the notion that
interpolation is necessarily ``benign.''  Indeed, numerous desiderata beyond
prediction risk---for example, privacy and security concerns---motivate an
explicit preference for models that do not interpolate, or in particular,
memorize, their training data.  An alternative and perhaps less auspicious
explanation for benign overfitting is that many of the crowdsourced
benchmarks the machine learning community uses to evaluate models, such as
ImageNet~\citep{DengDoSoLiLiFe09}, have limited label uncertainty: examples
with high annotator disagreement are in many cases explicitly
withheld~\citep{DengDoSoLiLiFe09,RechtRoScSh19}, mitigating the danger of
overfitting to label noise.


Thus, while interpolation may \textit{suffice} to learn models with strong
generalization, it is natural to wonder whether interpolation---or more
evocatively, memorization---is \textit{necessary} for learning in the
overparameterized regime. Here we take a phenomenological approach,
developing a simple model to explicate and predict behavior of statistical
learning procedures, and motivated by the question of the necessity
of memorization, we precisely characterize how prediction risk must scale with empirical risk.
Considering a simple linear model $y = x^\top \theta + w$,
we define memorization in terms of the empirical risk, and formulate
the cost of not fitting the training data as an optimization problem over a
class of estimators $\mathcal{H}$,
\begin{equation} \label{eq:intro-problem}
  \begin{aligned}
    \minimize_{\est{\theta} \in \mathcal{H}} & \qquad \pred \prn{\est{\theta}}
    \defeq \Ep\big[(x^\top \est{\theta} - y)^2 \mid X\big]
    \\
    \subjectto & \qquad \train\prn{\est{\theta}}
    \defeq \frac{1}{n} \Ep\left[\ltwobig{X\est{\theta} - Y}^2 \mid X\right]
    \geq \epsilon^2 \, ,
  \end{aligned}
\end{equation}
where the expectations in $\pred$ and $\train$ are taken conditional
on the over the training
data $Y$ defining $\est{\theta}$ conditional on $X$,
as well as the future data point $(x,
y)$, so that $\pred \prn{\cdot}$ and $\train\prn{\cdot}$ denote the expected prediction and training error given a prior over the true model
parameter $\theta$ (respectively).

We take as inspiration the recent line of work~\citep{Feldman20,
  BrownBuFeSmTa21}, which gives scenarios in which certain formal notions of
memorization are necessary for a model to generalize well. We build on this
by studying the extent to which memorization remains necessary even in the
simplest settings: random design linear regression with independent
noise. For our initial analysis, we assume the estimator $\est{\theta}$ is
linear in $y$, which includes least-norm interpolants and ridge regression
as special cases. Here, we obtain a tight asymptotic characterization of the
optimal solution to the problem~\eqref{eq:intro-problem} (see
Theorems~\ref{thm:cost-isotropic} and~\ref{thm:cost-general-cov}). Key to
our analysis is to show that, even though problem~\eqref{eq:intro-problem}
is non-convex, strong duality obtains, and then leverage tools from random matrix theory to obtain analytic
formulae for the optimal prediction risk by integrating over the spectrum of
the empirical data covariance.  We find that memorization of label noise is
in fact necessary for generalization even in the simple case of linear
regression; in particular, the threshold $\epsilon^2$ above which the
optimal prediction risk is no longer achievable tends to zero asymptotically
faster than the variance of the label noise---so we must fit linear
regression models to (training) accuracy substantially better than the
intrinsic noise floor of the problem. Beyond this threshold the excess
prediction risk grows linearly with the empirical risk.  Finally, assuming
Gaussian noise $w$ and a Gaussian prior over $\theta$ in
problem~\eqref{eq:intro-problem}, we extend our analysis to hold not only
for linear estimators, but for general $\mathcal{H}$ comprised of all
square-integrable estimators (see
Theorem~\ref{thm:strong-duality-hilbert}), meaning that our characterization
holds for (essentially) any estimator.


\subsection{Related work}
\label{section:related-work}

Neither interpolation nor memorization of training data is a new phenomenon
in machine learning. Classical algorithms, such as $k$-nearest
neighbors and (kernel) support vector machines, explicitly encode the
training data into the learned model. Some explicitly interpolate training
data and still enjoy performance guarantees; for example, the
$1$-nearest neighbor algorithm interpolates its training data and has
classification risk at most twice the Bayes' error~\citep{CoverHa67}.

Nonetheless, the success of deep learning has spurred renewed interest in
interpolating models. Recent work has sought to develop an understanding of
``implicit regularization'': whereas most minimizers of the empirical risk
may generalize poorly, standard learning algorithms used in practice such as
(stochastic) gradient descent tend to converge to solutions that do
generalize well, even in the absence of explicit regularization terms in the
training objective~\citep{GunasekarWoBhNeSr17, SoudryHoNaGuSr18,
  GunasekarLeSoSr18a, AroraCoHuLu19, AroraDuHuLiWang19, JiTe19}.  In the
particular case of overparameterized linear regression, gradient descent
initialized at the origin trivially recovers the ordinary least-squares
(OLS) estimator, which in overparameterized settings is the minimum norm
interpolant.  Most relevant to our work, \citet{HastieMoRoTi19} give
formulae for the asymptotic error of ridge-type estimators, including the
minimum norm interpolant, as the number of features $d$ and training
observations $n$ tend to infinity in the proportional regime where $d/n \to
\gamma > 1$ for both isotropic and anisotropic
features. \citet{MuthukumarVoSa19} give corresponding non-asymptotic lower
bounds, with matching upper bounds for certain particular feature
distributions, on the minimal error achievable among all interpolating
solutions. \citet{BartlettLoLuTs20} consider regression over general Hilbert
spaces, showing that the minimum norm interpolant achieves optimal error
assuming certain conditions on the effective rank of the feature covariance.
Our results complement this line of work: not only can overparameterized
interpolating models generalize well,
but in fact interpolation is \textit{necessary} to achieve good
generalization.

Our work pursues a line of inquiry \citet{Feldman20} originates, which
studies memorization in the setting of multi-class classification, where the
data distribution is a heavy-tailed mixture over a finite set of
subpopulations.  He defines memorization in terms of the sensitivity of a
model's predictions to the inclusion or exclusion of a particular
observation in its training data, and under the assumption that the class
labelings of distinct subpopulations are essentially independent---i.e., an
observation drawn from one subpopulation yields limited to no information
about the labels of the other subpopulations---proves that memorization is
necessary to achieve optimal generalization.
\citet{BrownBuFeSmTa21} extend these results, which are specific to label
memorization, to incorporate an information-theoretic notion of memorizing
the input observations in carefully constructed combinatorial settings,
including next-symbol prediction and clustering on the hypercube.  In
contrast, we attempt a simpler tack: ordinary linear regression with
standard distributional assumptions, construing memorization strictly in
terms of training error.




\newcommand*{\horzbar}{\rule[.5ex]{2.5ex}{0.5pt}}

\section{Problem formulation}
Given a design matrix $X = \mathbb{R}^{n \times d}$ ($d \geq n$), an unknown signal $\theta \in \mathbb{R}^d$ and a noise vector $w$ such that $\Ep [w] = 0$ and $\var(w) = \sigma^2 I_n$, consider the standard linear model
\begin{align*}
  y = X\theta  + w.
\end{align*}
We assume that $X$ has i.i.d.\ mean zero rows $x_1^\top, \cdots, x_n^\top$
with covariance $\Sigma \in \mathbb{R}^{d \times d}$. The training error of an estimator $\est{\theta} = \est{\theta}(X, y)$, a function
of $X$ and the responses $y$ whose dependence on both we typically
leave implicit,
is $\train_{X, \theta}(\est{\theta}) = \frac 1 n \Ep_{w} [\|X \est{\theta} - y\|_2^2 \mid X, \theta]$, while the prediction (generalization)
error is $\pred_{X, \theta} (\est{\theta}) = \Ep_{x, w} [(x^\top \theta - x^\top \est{\theta})^2 \mid X, \theta] $, where $x$ is an independent copy from the input distribution.
We consider a Bayesian formulation where the ground truth $\theta$ has a prior distribution independent of the data and the noise, and the posterior training and generalization errors are $\train_X(\est{\theta}) = \Ep_\theta [\train_{X, \theta}(\est{\theta})]$ and $\pred_X (\est{\theta}) = \Ep_\theta [\pred_{X, \theta} (\est{\theta})]$.

Given a constraint on the training error $\epsilon \in [0, \infty)$, we can
then formalize the cost of not fitting the training data via the following
optimization problem over a hypothesis class of estimators $\mathcal{H}$.
\begin{equation} \label{eq:not-fitting-problem}
  \begin{aligned}
    \minimize_{\est{\theta} \in \mathcal{H}} &~ \pred_X \prn{\est{\theta}}
    \\
    \subjectto &~ \train_X\prn{\est{\theta}} 
    \geq \epsilon^2 
	\end{aligned}
\end{equation}
Here, the constraint is on the average training error (over $y$);
any estimator that on \emph{each} input $y$ has prescribed
error $\epsilon^2$ immediately satisfies the constraints~\eqref{eq:not-fitting-problem}.
We mainly study the
\textit{cost of not fitting}
\begin{align}
  \cost_X(\epsilon) := \min_{\est{\theta} \in \mathcal{H}(\epsilon)} \pred_X \prn{\est{\theta}} - \min_{\est{\theta} \in \mathcal{H}(0)} \pred_X \prn{\est{\theta}},
  \label{eq:def-cost}
\end{align}
where for a given $\mathcal{H}$ we define the set $\mathcal{H}(\epsilon) \defeq \{\est{\theta} \in
\mathcal{H} \mid \train_X(\est{\theta}) \geq \epsilon^2 \} \subset
\mathcal{H}$.

%
Noting that $\mathcal{H}(t)$ is a decreasing set in $t$, we always have
$\cost_X(\epsilon) \geq 0$. Of course, the best estimator need not necessarily
memorize the entire dataset---as we shall see, some amount of regularization can help---and so we also specifically consider the \textit{cost of not
  interpolating} with respect to the minimum norm interpolating
solution $\thetaols \defeq X^\top (X X^\top)^{-1} y$, defining
\begin{align}
  \wb{\cost}_X(\epsilon) := \min_{\est{\theta} \in \mathcal{H}(\epsilon)} \pred_X \prn{\est{\theta}} - \pred_X \prn{\thetaols}.
  \label{eq:def-cost-interpolator}
\end{align} 

We study problem~\eqref{eq:not-fitting-problem}, in particular through the lens of the quantities~\eqref{eq:def-cost} and~\eqref{eq:def-cost-interpolator}, under the following assumptions.

\begin{assumption}[Proportional asymptotics and spherical prior] \label{assp:proportional}
  The dimension $d := d(n)$ satisfies $d/n \to \gamma \in (1, \infty)$. The
  data matrix $X = [x_1 ~ x_2 ~ \cdots ~ x_n]^\top \in \real^{n \times d}$,
  where $X: = X(n) = (x_{ij}(n))_{i \in [n], j \in [d]}$ forms a triangular
  array of random variables with independent rows. There is a deterministic
  sequence of symmetric positive definite matrices $\Sigma := \Sigma(n) \in
  \real^{d \times d}$ such that $X = Z \Sigma^{\frac 1 2}$, where $Z =
  (z_{ij})_{i \in [n], j \in [d]}$ and $z_{ij}$ are i.i.d. random variables
  with distribution independent of $n$ such that $\Ep [z_{ij}] = 0$,
  $\var(z_{ij}) = 1$, and $\Ep [z_{ij}^4] \leq M$ for a universal constant
  $M$.  In addition, we assume $\theta$ has prior independent
  of the data $X, y$, with zero mean and variance $\var(\theta) = I_d/d$.
\end{assumption}
\noindent
Under Assumption~\ref{assp:proportional},
for each $n$, $x_1(n), \cdots, x_n(n)$ are
i.i.d.\ random vectors such that
\begin{align*}
  \Ep[x_i(n)] = 0, \qquad \var \prn{x_i(n)} = \Sigma(n).
\end{align*} 
Meanwhile, examples of priors satisfying the
assumption include the uniform prior on the unit sphere $\mathbb{S}^{d-1}$
and the Gaussian prior $\normal(0, I_d/d)$, where note that $\Ep [\norm{\theta}_2^2] = 1$.  We assume $\gamma > 1$, and hence, as
the model is overparameterized, zero training error is attainable.

While at first blush appearing restrictive, our
main results characterize the cost of not fitting for linear estimators.

\begin{assumption}[Linear estimators] \label{assp:linear}
  The hypothesis class consists of all linear estimators, i.e.,
  \begin{align*}
    \mathcal{H} = \brc{\est{\theta}(X, y) = Ay\, ,  A := A(X) \in \mathbb{R}^{d \times n}} ,
  \end{align*}
  where $A$ may depend on the features $X$ but not the labels $y$.
\end{assumption}
\noindent
Notably, the hypothesis class of linear estimators contains the popular
ridge estimator $\est{\theta}_{\lambda} \defeq (X^\top X + \lambda I)^{-1}
X^\top y$ and minimum norm interpolant $\thetaols \defeq (X^\top X)^\dagger
X^\top y$. Because we seek exact optimality results for more general
estimators, we follow standard practice in minimax and asymptotic statistics
to choose a prior on the ``true'' parameter $\theta$.  In classical linear
regression, the prior of choice is a Gaussian, so that Anderson's
theorem~(\citeyear{Anderson55}) guarantees the posterior mean is minimax for
any symmetric loss, and so the optimal estimator is linear. In our case, a
similar result holds, though it is more subtle because of the nonconvex
constraint~\eqref{eq:not-fitting-problem} on training error;
Theorem~\ref{thm:strong-duality-hilbert} to come guarantees that when the
prior and noise are both Gaussian, the optimal estimator solving
problem~\eqref{eq:not-fitting-problem} belongs to the collection of linear
estimators.  Thus, our main results extend immediately to the general
class of all square integrable estimators:

\addtocounter{assumption}{-1}
\renewcommand{\theassumption}{A\arabic{assumption}$'$}
\begin{assumption}[Estimators with Gaussian prior]
  \label{assp:general-gaussianity}
  The parameter $\theta \sim \normal(0, I_d/d)$ and the noise
  $w \sim \normal(0,
  \sigma^2 I_n)$.  The hypothesis class consists of measurable, square
  integrable $\est{\theta} : \mathbb{R}^{n \times d + n} \to \mathbb{R}^d$,
  i.e.,
  \begin{align*}
    \mathcal{H} =  \brc{\est{\theta} = \est{\theta}(X,y) \mid
      \Ep_y [\|\est{\theta}(X,y)\|_2^2 \mid X] < \infty} .
  \end{align*}
\end{assumption}
\renewcommand{\theassumption}{A\arabic{assumption}}
\noindent
We return to more discussion in Section~\ref{sec:optimality-Gaussian}.

\section{Main results}

\subsection{The isotropic case}
\label{sec:isotropic}

We first consider the isotropic setting where $\Sigma = I$ for all $n$, and
thus $x_{ij}$ are i.i.d.\ random variables with zero mean and unit
variance. Before stating the main theorem regarding the quantity
$\cost_X(\epsilon)$, we first characterize the optimal solution to the cost
of not fitting problem~\eqref{eq:not-fitting-problem} via strong duality,
illustrating the role random matrix theory plays in computing the optimal
solution value. We postpone most of the technical details to
Section~\ref{proof:cost-isotropic}.

When $\mathcal{H}$ consists of linear estimators $\est{\theta} = Ay$,
we define the shorthand
$\mc{P}(A) \defeq \pred_X(\est{\theta})$ and
$\mc{T}(A) \defeq \train_X(\est{\theta})$, with which we
express the cost of not fitting
problem~\eqref{eq:not-fitting-problem} as
\begin{equation}
  \label{eqn:linear-not-fitting}
  \begin{aligned}
	\minimize_{A \in \real^{d \times n}} & \qquad \mathcal{P}(A) = \frac 1 d \norm{AX-I}_F^2 + \sigma^2 \norm{A}_F^2 \\
	\subjectto & \qquad \mathcal{T}(A) = \frac{1}{nd} \norm{XAX-X}_F^2 + \frac{\sigma^2}{n} \norm{XA-I}_F^2  \geq \epsilon^2 \, .
\end{aligned}
\end{equation}
The problem---while nonconvex---has quadratic
objective and a single quadratic constraint. Thus we may leverage strong
duality~\citep[Appendix~B.1]{BoydVa04}, writing a Lagrangian
and solving, to conclude that for some
$\rho_n := \rho_n(\epsilon)$ such that $I - \frac{\rho_n}{d} X^\top X \succ
0$, the optimal $A$ for the problem~\eqref{eq:not-fitting-problem} is
\begin{align*}
  A(\rho_n) = \prn{I - \rho_n \sigma^2 \prn{I - \frac{\rho_n}{d}
      X^\top X}^{-1}} (X^\top X + d \sigma^2 I)^{-1}X^\top ,
\end{align*}
where $\rho_n$ is the dual optimal value of the Lagrange multiplier
associated with the constraint $\mc{T}(A) \ge \epsilon^2$.  When $\rho_n =
0$, the constraint is inactive, so $A(0)$ is the global minimizer of the
unconstrained problem and evidently corresponds to a ridge regression
estimate; we have $\cost_X(\epsilon) = \mathcal{P}(A(\rho_n)) -
\mathcal{P}(A(0))$ and $\mathcal{T}(A(\rho_n)) = \epsilon^2$. Substituting
$A = A(\rho)$ into $\mathcal{P}(A)$ and $\mathcal{T}(A)$, we obtain
\begin{align*}
	 \mathcal{P}(A(\rho)) - \mathcal{P}(A(0)) & =  \frac{\rho^2  \sigma^4}{d} \Tr \prn{\prn{I - \frac \rho d X^\top X}^{-2} \frac{X^\top X}{d}  \prn{\frac{X^\top X}{d} + \sigma^2 I}^{-1} } \, , \\
	 \mathcal{T}(A(\rho))	& = \frac{\sigma^4}{n} \Tr \prn{\prn{I - \frac \rho d X^\top X}^{-2} \prn{\frac{X^\top X}{d} + \sigma^2 I}^{-1} } \, .
\end{align*}

We may now leverage high-dimensional random matrix theory and asymptotics.
Let $X$ have singular values $\lambda_1 \geq \lambda_2 \geq \cdots \geq
\lambda_n$. Denoting the empirical spectral distribution of $\frac{1}{d}
XX^\top$ via its c.d.f.\ $H_n(s) := \frac 1 n \sum_{i=1}^n
\ind_{\lambda_i^2/d \leq s}$,
we equivalently have
\begin{align*}
	\mathcal{P}(A(\rho)) - \mathcal{P}(A(0)) & = \frac{\rho^2 n}{d} \int \frac{\sigma^4 s}{(1-\rho s)^2 (s + \sigma^2)} dH_n(s) \, , \\
	\mathcal{T}(A(\rho))	& = \int \frac{\sigma^4}{(1 - \rho s)^2 \prn{s + \sigma^2}} dH_n(s) \, .
\end{align*}
By standard results in random matrix theory
(see Lemma~\ref{thm:MP-law}), $H_n$ converges
weakly to the Marchenko-Pastur c.d.f.\ $H$, which has support $[\lambda_-,
  \lambda_+]$ ro $\lambda_{\pm} := \prn{1 \pm 1/\sqrt{\gamma}}^2$, and
density
\begin{align} \label{eq:MP-distribution}
	dH(s) = \frac{\gamma}{2\pi} \frac{\sqrt{(\lambda_+ -s)(s - \lambda_-)}}{s} \ind_{s \in [\lambda_-, \lambda_+]} ds \, .
\end{align}
Therefore for any fixed $0 \le \rho < \frac{1}{1 + \sqrt{\gamma}}$,
\begin{align*}
	\lim_{n \to \infty} \prn{\mathcal{P}(A(\rho)) - \mathcal{P}(A(0))} & = \frac{\rho^2}{\gamma} \int \frac{\sigma^4 s}{(1-\rho s)^2 (s + \sigma^2)} dH(s)  \, , \\
	\lim_{n \to \infty} \mathcal{T}(A(\rho))	& = \int \frac{\sigma^4}{(1 - \rho s)^2 \prn{s + \sigma^2}} dH(s) \, .
\end{align*}
Setting $\rho = 0$ corresponds to making the
constraint~\eqref{eqn:linear-not-fitting} inactive, so we therefore define
the memorization threshold
\begin{equation}
  \label{eq:threshold-isotropic}
  \epsilon_\sigma^2 \defeq \int \frac{\sigma^4}{s +\sigma^2} dH(s),
\end{equation}
and observe that for any $\epsilon^2 \geq \epsilon_\sigma^2$, there exists a
$\rho \geq 0$ such that $\lim_{n \to \infty} \mathcal{T}(A(\rho)) =
\epsilon^2$. Given that $\mathcal{T}(A(\rho_n)) = \epsilon^2$, we
expect that $\lim_{n \to \infty} \rho_n = \rho$ and
therefore should have
\begin{equation*}
  \lim_{n \to \infty} \cost_X(\epsilon) = \lim_{n \to \infty}
\prn{\mathcal{P}(A(\rho)) - \mathcal{P}(A(0))} = \frac{\rho^2}{\gamma} \int
\frac{\sigma^4 s}{(1-\rho s)^2 (s + \sigma^2)} dH(s).
\end{equation*}
We can make each of these steps rigorous (see
Section~\ref{proof:cost-isotropic}), yielding the following theorem.
\begin{theorem} \label{thm:cost-isotropic}
  Let Assumption~\ref{assp:proportional} and either Assumption~\ref{assp:linear} or \ref{assp:general-gaussianity} hold. Then as $n
  \to \infty$,
  \begin{enumerate}[label=(\roman*),leftmargin=1.5em]
  \item \label{item:threshold-value}
    (\textbf{threshold value}) for $\epsilon_\sigma$ defined in
    Eq.~\eqref{eq:threshold-isotropic}, $\epsilon_\sigma^2 =
    \frac{\sigma^4}{\sigma^2 + 1 -1/\gamma} + o(\sigma^4)$.
  \item \label{item:no-cost-below}
    (\textbf{no cost below threshold}) if $\epsilon < \epsilon_\sigma$,
    then with probability one $\lim_{n \to \infty} \cost_X(\epsilon) = 0$.
    In addition, for the ridge estimator $\est{\theta}_{d\sigma^2 } =
    (X^\top X + d\sigma^2 I)^{-1} X^\top y$, we have
    \begin{align*}
      \lim_{n \to \infty} \prn{\min_{\est{\theta} \in \mathcal{H}(\epsilon)}  \pred_X \prn{\est{\theta}} - \pred_X \prn{\est{\theta}_{d\sigma^2 }}} = 0 \, .
    \end{align*}
  \item \label{item:cost-not-fitting}
    (\textbf{cost of not fitting}) if $\epsilon \geq \epsilon_\sigma$,
    there exists a scalar $\rho:= \rho(\epsilon) \in
    \openright{0}{\lambda_+^{-1}}$ that uniquely solves
    \begin{align}
      \label{eq:rho-epsilon-equation}
      \int \frac{\sigma^4}{(1 - \rho s)^2 \prn{s + \sigma^2}} dH(s) =
      \epsilon^2,
    \end{align}
    and with probability one
    \begin{align}
      \lim_{n \to \infty} \cost_X(\epsilon) = \frac{\rho^2}{\gamma} \int \frac{\sigma^4 s}{(1-\rho s)^2 (s + \sigma^2)} dH(s).
      \label{eq:cost-limit-above-threshold}
    \end{align}
    For the constants $\mathsf{c} := \frac{2}{\lambda_-^2 +
      \sigma^2}$ and $\mathsf{C} \defeq \frac{(1 -
      1/\sqrt{2})^2 \lambda_-}{\lambda_+^2 \gamma}$, we have $\lim_{n \to
      \infty} \cost_X(\epsilon) \geq \mathsf{C} \epsilon^2$ whenever
    $\epsilon^2 \geq \mathsf{c} \sigma^4$.
  \end{enumerate}
\end{theorem}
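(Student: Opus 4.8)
The plan is to reduce the nonconvex program~\eqref{eqn:linear-not-fitting} to a one–parameter family by strong duality, pass to the limit using the Marchenko--Pastur law, and read off all three parts of the theorem; the final linear lower bound is then a short deterministic estimate on the limiting spectral integrals.

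\textbf{Scalarization by strong duality.} Since $\mathcal{P}$ is a coercive convex quadratic in $A$ and the feasible set $\{\mathcal{T}(A)\ge\epsilon^2\}$ is the complement of a convex sublevel set,~\eqref{eqn:linear-not-fitting} is a quadratic program with a single (generally nonconvex) quadratic constraint; as soon as a strictly feasible $A$ exists (rescale any $A$ so $\mathcal{T}$ is large), strong Lagrangian duality holds~\citep[Appendix~B.1]{BoydVa04}. Minimizing the Lagrangian $\mathcal{P}(A)+\rho\,(\epsilon^2-\mathcal{T}(A))$ over $A$ at a dual multiplier $\rho\ge 0$ produces, whenever $I-\tfrac{\rho}{d}X^\top X\succ 0$, the stationary point $A(\rho)$ displayed before the theorem; the dual-optimal $\rho_n=\rho_n(\epsilon)$ obeys complementary slackness, so either $\rho_n=0$ and the unconstrained minimizer $A(0)$ (the ridge estimator $\est{\theta}_{d\sigma^2}$, from $\nabla_A\mathcal{P}=0$) is already feasible with $\cost_X(\epsilon)=0$, or $\mathcal{T}(A(\rho_n))=\epsilon^2$. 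Substituting $A(\rho)$ and using $X\,g(X^\top X)=g(XX^\top)\,X$ with cyclicity of the trace turns $\mathcal{P}(A(\rho))-\mathcal{P}(A(0))$ and $\mathcal{T}(A(\rho))$ into the spectral integrals against $H_n$ written in the excerpt. Under Assumption~\ref{assp:general-gaussianity} the same reduction applies once Theorem~\ref{thm:strong-duality-hilbert} reduces the Gaussian problem to linear estimators.

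\textbf{Random matrix limits and identification of $\rho_n$.} By Lemma~\ref{thm:MP-law}, $H_n$ converges weakly to the Marchenko--Pastur law $H$ almost surely, and by the Bai--Yin edge law the extreme eigenvalues of $\tfrac1d XX^\top$ converge a.s.\ to $\lambda_\pm$; hence a.s.\ the supports of the $H_n$ eventually lie in a fixed compact subset of $(0,\infty)$, and as $s\mapsto\tfrac{\sigma^4}{(1-\rho s)^2(s+\sigma^2)}$ and $s\mapsto\tfrac{\sigma^4 s}{(1-\rho s)^2(s+\sigma^2)}$ are bounded and continuous there, uniformly over $\rho$ in compact subsets of $[0,\lambda_+^{-1})$, the spectral integrals converge (with $n/d\to\gamma^{-1}$). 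Define $g(\rho)\defeq\int\tfrac{\sigma^4}{(1-\rho s)^2(s+\sigma^2)}\,dH(s)$ on $[0,\lambda_+^{-1})$: it is continuous and strictly increasing, $g(0)=\epsilon_\sigma^2$, and $g(\rho)\uparrow\infty$ as $\rho\uparrow\lambda_+^{-1}$ (the integrand develops a nonintegrable $(\lambda_+-s)^{-3/2}$ singularity against $dH$ near the upper edge). So for $\epsilon^2\ge\epsilon_\sigma^2$ there is a unique $\rho(\epsilon)$ solving~\eqref{eq:rho-epsilon-equation}; moreover $\rho\mapsto\mathcal{T}(A(\rho))$ is strictly increasing on the random interval $\{\rho:I-\tfrac\rho d X^\top X\succ0\}$, which a.s.\ eventually contains any fixed $\rho''<\lambda_+^{-1}$, so the pointwise limits $\mathcal{T}(A(\rho'))\to g(\rho')$ sandwich $\rho'<\rho_n<\rho''$ eventually whenever $g(\rho')<\epsilon^2<g(\rho'')$, yielding $\rho_n\to\rho(\epsilon)$ a.s. Now part~\ref{item:threshold-value} follows from $\epsilon_\sigma^2=\sigma^4\!\int(s+\sigma^2)^{-1}dH(s)$ and a Taylor expansion of the Stieltjes transform of $H$ at $0$ (using the standard identity $\int s^{-1}dH=\gamma/(\gamma-1)$); part~\ref{item:no-cost-below} because $\epsilon^2<\epsilon_\sigma^2=g(0)$ forces $\mathcal{T}(A(0))>\epsilon^2$ a.s.\ eventually, hence $\rho_n=0$ and the ridge gap vanishes eventually; and the formula in part~\ref{item:cost-not-fitting} because $\cost_X(\epsilon)=\mathcal{P}(A(\rho_n))-\mathcal{P}(A(0))\to\tfrac{\rho(\epsilon)^2}{\gamma}\int\tfrac{\sigma^4 s}{(1-\rho(\epsilon)s)^2(s+\sigma^2)}dH(s)$ by the uniform convergence above and $\rho_n\to\rho(\epsilon)$.

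\textbf{The linear lower bound and the main obstacle.} Write $\rho=\rho(\epsilon)$. Because $H$ is supported on $[\lambda_-,\lambda_+]$, bounding $s\ge\lambda_-$ in the numerator of~\eqref{eq:cost-limit-above-threshold} gives $\lim_{n}\cost_X(\epsilon)\ge\tfrac{\rho^2\lambda_-}{\gamma}\int\tfrac{\sigma^4}{(1-\rho s)^2(s+\sigma^2)}dH(s)=\tfrac{\rho^2\lambda_-}{\gamma}\,\epsilon^2$. It then suffices to show $\rho\ge\rho_0\defeq(1-1/\sqrt{2})/\lambda_+$ whenever $\epsilon^2\ge\mathsf{c}\sigma^4$; since $g$ is increasing this is equivalent to $g(\rho_0)\le\mathsf{c}\sigma^4$, and on $[\lambda_-,\lambda_+]$ one has $1-\rho_0 s\ge1/\sqrt{2}$, so $g(\rho_0)\le2\!\int\tfrac{\sigma^4}{s+\sigma^2}dH(s)=2\epsilon_\sigma^2\le\tfrac{2\sigma^4}{\lambda_-+\sigma^2}\le\tfrac{2\sigma^4}{\lambda_-^2+\sigma^2}=\mathsf{c}\sigma^4$, using $\lambda_-<1$ (valid since $\gamma>1$); incidentally $\mathsf{c}\sigma^4\ge\epsilon_\sigma^2$, so $\epsilon^2\ge\mathsf{c}\sigma^4$ indeed lands in the regime of part~\ref{item:cost-not-fitting}. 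Combining, $\lim_{n}\cost_X(\epsilon)\ge\tfrac{\rho_0^2\lambda_-}{\gamma}\,\epsilon^2=\tfrac{(1-1/\sqrt{2})^2\lambda_-}{\lambda_+^2\gamma}\,\epsilon^2=\mathsf{C}\epsilon^2$. The delicate part of the whole argument is the rigor of the first two steps: confirming that strong duality genuinely applies to this nonconvex quadratic program and pinning down the precise stationarity and definiteness characterization of $(A(\rho_n),\rho_n)$, and then transferring $\rho_n\to\rho(\epsilon)$ into~\eqref{eq:cost-limit-above-threshold} with enough uniformity, for which the edge-eigenvalue control keeping $\rho_n$ bounded away from $\lambda_+^{-1}$ is essential; granting those, the final linear bound is elementary.
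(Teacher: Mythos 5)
Your proposal is correct and follows essentially the same route as the paper: strong duality to scalarize the quadratic-over-quadratic program, pass the spectral integrals to the Marchenko--Pastur limit, identify $\rho(\epsilon)$ as the unique root of $g(\rho)=\epsilon^2$, and deduce the linear lower bound by showing $\rho(\epsilon)\ge(1-1/\sqrt 2)/\lambda_+$. The only cosmetic differences are that the paper closes the limit via a two-sided sandwich on $\cost_X$ directly rather than first arguing $\rho_n\to\rho(\epsilon)$ with uniform convergence, and bounds $\rho$ from the fixed-point identity $g(\rho)=\epsilon^2$ rather than bounding $g(\rho_0)\le 2\epsilon_\sigma^2$ — both give the same constants.
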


Part~\ref{item:threshold-value} of Theorem~\ref{thm:cost-isotropic}
characterizes the threshold for the constraint on training error above which
no linear estimator can achieve optimal generalization; from
part~\ref{item:no-cost-below}, so long as the constraint is below this
threshold, optimal generalization remains attainable.  Together, parts
\ref{item:threshold-value} and \ref{item:cost-not-fitting} of the theorem
imply that for an estimator to achieve optimal generalization, the estimator
must incur $O(\sigma^4)$ training error as the label noise variance
$\sigma^2$ tends to zero. When $\sigma^2$ is small, this is quadratically
smaller than the inherent noise floor in the problem. Moreover,
part~\ref{item:cost-not-fitting} implies eventually for sufficiently large
$\epsilon$ that $\cost_X(\epsilon)$ grows linearly in terms of the
constraint on training error $\train_X(\est{\theta}) = \epsilon^2$---by not
memorizing, we are essentially paying the same additional amount of error in
generalization in terms of training error up to a constant factor.  We
conclude that memorization for high dimensional linear regression---training
to accuracy quadratically smaller than the inherent noise floor in the
problem---is necessary, and with the ``necessity'' increasing as the
signal-to-noise ratio grows.

We now turn to look specifically at the cost of exact interpolation; instead
of comparing against the best linear estimator, we
characterize $\wb{\cost}_X(\epsilon)$ (see
Eq.~\eqref{eq:def-cost-interpolator}), the prediction error of $\est{\theta}
\in \mc{H}(\epsilon)$ to the minimum norm interpolant
$\thetaols$.  We provide a proof of the following theorem in
Appendix~\ref{proof:cost-interpolator-isotropic}.

\begin{theorem} \label{thm:cost-interpolator-isotropic}
  Let Assumption~\ref{assp:proportional} and either Assumption~\ref{assp:linear} or \ref{assp:general-gaussianity} hold. Then 
  \begin{enumerate}[label=(\roman*),leftmargin=1.5em]
  \item \label{item:relative-costs-ols}
    (\textbf{interpolation cost})
    for any $\epsilon \geq 0$,
    $\cost_X(\epsilon) - \wb{\cost}_X(\epsilon)
    = \pred_X (\thetaols) -
    \pred_X(\est{\theta}(0))$, and
    with probability one
    \begin{align*}
      \lim_{n \to \infty} \prn{
        \pred_X(\thetaols)
        - \pred_X(\est{\theta}(0))}
      = \frac{\sigma^4}{\gamma} \int \frac{1}{s(s+\sigma^2)} dH(s)
      =  \frac{\sigma^4}{\gamma \prn{1 - 1/\gamma}^3} + o(\sigma^4).
    \end{align*}
  \item \label{item:interp-threshold-ols}
    (\textbf{interpolation threshold})
    for any $\sigma > 0$, there exists a $\rho = \rhools \in (0,
    \lambda_+^{-1})$ that uniquely solves
    \begin{align}
      \label{eq:rho-sigma-equation}
      \rho^2 \int \frac{s}{(1 - \rho s)^2 \prn{s + \sigma^2}} dH(s)
      = \int \frac{1}{s(s+\sigma^2)} dH(s),
    \end{align} 
    where for the threshold $\epsols^2 \defeq \int \frac{\sigma^4}{(1 - \rhools
      s)^2 \prn{s + \sigma^2}} dH(s)$ we have
    \begin{equation*}
      \lim_{n \to \infty}
      \wb{\cost}_X(\epsilon) ~ \begin{cases}
        < 0 & \mbox{if~} \epsilon < \epsols \\
        = 0 & \mbox{if~} \epsilon = \epsols \\
        > 0 & \mbox{if}~ \epsilon > \epsols.
      \end{cases}
    \end{equation*}
    In comparison to the
    threshold $\epsilon_\sigma$ in Eq.~\eqref{eq:threshold-isotropic}
    and Theorem~\ref{thm:cost-isotropic}, we have
    $\epsilon_\sigma < \epsols \le \frac{2
      \lambda_+}{\lambda_-}\epsilon_\sigma$.
  \end{enumerate}
\end{theorem}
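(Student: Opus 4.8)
The plan is to treat part~\ref{item:relative-costs-ols} with an elementary matrix computation followed by Marchenko--Pastur asymptotics. The displayed identity is automatic from the definitions: the constraint $\train_X(\est{\theta})\ge 0$ is vacuous, so $\mathcal{H}(0)=\mathcal{H}$, and $\est{\theta}(0)=A(0)y$ is the unconstrained minimizer of $\pred_X$ over $\mathcal{H}$ (the $\rho_n=0$ case of the strong-duality discussion preceding Theorem~\ref{thm:cost-isotropic}); subtracting~\eqref{eq:def-cost} and~\eqref{eq:def-cost-interpolator} leaves $\cost_X(\epsilon)-\wb{\cost}_X(\epsilon)=\pred_X(\thetaols)-\pred_X(\est{\theta}(0))$ for every $n$ and $\epsilon$. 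To evaluate this I would substitute $\thetaols=X^\top(XX^\top)^{-1}y$ and $\est{\theta}(0)=(X^\top X+d\sigma^2 I)^{-1}X^\top y$ into $\mathcal{P}(A)=\tfrac1d\|AX-I\|_F^2+\sigma^2\|A\|_F^2$: for $\thetaols$, $AX$ is a rank-$n$ orthogonal projection so $\|AX-I\|_F^2=d-n$ and $\|A\|_F^2=\Tr((XX^\top)^{-1})=\tfrac nd\int s^{-1}\,dH_n(s)$; for $\est{\theta}(0)$, $AX-I=-d\sigma^2(X^\top X+d\sigma^2 I)^{-1}$, and expanding in the nonzero eigenvalues $\{\lambda_i^2\}_{i\le n}$ of $X^\top X$ the two terms of $\mathcal{P}$ combine to $\pred_X(\est{\theta}(0))=\tfrac{d-n}{d}+\tfrac{\sigma^2 n}{d}\int(s+\sigma^2)^{-1}\,dH_n(s)$. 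Hence the difference equals $\tfrac{\sigma^4 n}{d}\int\frac{dH_n(s)}{s(s+\sigma^2)}$, and I would pass $n\to\infty$ using $n/d\to1/\gamma$, the weak convergence $H_n\to H$ (Lemma~\ref{thm:MP-law}), and the standard Bai--Yin hard-edge fact that the smallest eigenvalue of $\tfrac1d XX^\top$ tends almost surely to $\lambda_->0$, which makes $[s(s+\sigma^2)]^{-1}$ bounded and continuous on a fixed neighborhood of $\mathrm{supp}(H)$ that eventually contains all eigenvalues. For the $\sigma\to0$ expansion, write $[s(s+\sigma^2)]^{-1}=s^{-2}-\sigma^2[s^2(s+\sigma^2)]^{-1}$, bound the remainder by $O(\sigma^2)$ uniformly using $s\ge\lambda_-$, and invoke the second negative Marchenko--Pastur moment $\int s^{-2}\,dH(s)=(1-1/\gamma)^{-3}$ (computed from its Stieltjes transform).

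For part~\ref{item:interp-threshold-ols} I would combine part~\ref{item:relative-costs-ols} with Theorem~\ref{thm:cost-isotropic} so that, almost surely, $\lim_n\wb{\cost}_X(\epsilon)=\lim_n\cost_X(\epsilon)-\delta_\sigma$ with $\delta_\sigma\defeq\tfrac{\sigma^4}{\gamma}\int\frac{dH(s)}{s(s+\sigma^2)}>0$. Set $\mathcal{T}_\infty(\rho)\defeq\int\frac{\sigma^4\,dH(s)}{(1-\rho s)^2(s+\sigma^2)}$ and $g(\rho)\defeq\tfrac{\rho^2}{\gamma}\int\frac{\sigma^4 s\,dH(s)}{(1-\rho s)^2(s+\sigma^2)}$ on $[0,\lambda_+^{-1})$; by Theorem~\ref{thm:cost-isotropic} the limit of $\cost_X(\epsilon)$ is $0$ for $\epsilon<\epsilon_\sigma$ and $g(\rho(\epsilon))$ for $\epsilon\ge\epsilon_\sigma$, where $\rho(\epsilon)$ uniquely solves $\mathcal{T}_\infty(\rho)=\epsilon^2$. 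Both $\mathcal{T}_\infty$ and $g$ are continuous and strictly increasing (since $\rho\mapsto(1-\rho s)^{-1}$ and $\rho\mapsto\rho(1-\rho s)^{-1}$ are, for each $s\in(0,\lambda_+]$), with $g(0)=0$ and $\mathcal{T}_\infty(0)=\epsilon_\sigma^2$; and since Theorem~\ref{thm:cost-isotropic}\ref{item:cost-not-fitting} supplies a solution $\rho(\epsilon)\in[0,\lambda_+^{-1})$ for every $\epsilon\ge\epsilon_\sigma$, both $\mathcal{T}_\infty$ and $\rho(\cdot)$ are strictly increasing continuous bijections (onto $[\epsilon_\sigma^2,\infty)$ and $[0,\lambda_+^{-1})$ respectively), whence $g(\rho(\epsilon))\to\infty$ as $\epsilon\to\infty$. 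Thus $\epsilon\mapsto g(\rho(\epsilon))-\delta_\sigma$ is continuous, strictly increasing on $[\epsilon_\sigma,\infty)$ from $-\delta_\sigma<0$ to $+\infty$, and equals $-\delta_\sigma<0$ for $\epsilon<\epsilon_\sigma$, which produces a unique $\epsols>\epsilon_\sigma$ with $\lim_n\wb{\cost}_X(\epsols)=0$ and the stated trichotomy. Writing $\rhools\defeq\rho(\epsols)$, the relation $g(\rhools)=\delta_\sigma$ is exactly~\eqref{eq:rho-sigma-equation} after multiplying by $\gamma/\sigma^4$; its root is unique in $(0,\lambda_+^{-1})$ because the left side of~\eqref{eq:rho-sigma-equation} equals $(\gamma/\sigma^4)g(\rho)$, strictly increasing from $0$ to $\infty$, while the right side is a positive constant, and $\epsols^2=\mathcal{T}_\infty(\rhools)$ matches the stated threshold.

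It remains to establish $\epsilon_\sigma<\epsols\le\tfrac{2\lambda_+}{\lambda_-}\epsilon_\sigma$. The left inequality is just $\mathcal{T}_\infty(0)<\mathcal{T}_\infty(\rhools)$. For the right one, the key step is the a~priori bound $\rhools>\tfrac1{2\lambda_+}$, which I would prove by contradiction: if $\rho\le\tfrac1{2\lambda_+}$ then $\rho s\le\tfrac12$ on $\mathrm{supp}(H)=[\lambda_-,\lambda_+]$, hence $(1-\rho s)^2\ge(\rho s)^2$ there with equality only at the single point $s=1/(2\rho)$, and since $H$ has a continuous density this forces the \emph{strict} inequality $\rho^2\int\frac{s\,dH(s)}{(1-\rho s)^2(s+\sigma^2)}<\int\frac{dH(s)}{s(s+\sigma^2)}$, so $\rho$ cannot solve~\eqref{eq:rho-sigma-equation}. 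With $\rhools>\tfrac1{2\lambda_+}$ in hand I would estimate $\delta_\sigma=g(\rhools)\ge\tfrac{\lambda_-\rhools^2}{\gamma}\mathcal{T}_\infty(\rhools)=\tfrac{\lambda_-\rhools^2}{\gamma}\epsols^2$ and $\delta_\sigma\le\tfrac1{\gamma\lambda_-}\int\frac{\sigma^4\,dH(s)}{s+\sigma^2}=\tfrac{\epsilon_\sigma^2}{\gamma\lambda_-}$ (both using $s\ge\lambda_-$), giving $\epsols^2\le\tfrac{\gamma\delta_\sigma}{\lambda_-\rhools^2}\le\tfrac{\epsilon_\sigma^2}{\lambda_-^2\rhools^2}<\tfrac{4\lambda_+^2}{\lambda_-^2}\epsilon_\sigma^2$. (The same single-crossing observation also yields $\rhools<\tfrac1{2\lambda_-}$.)

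The main obstacle, I expect, is twofold. In part~\ref{item:relative-costs-ols} it is passing the limit through $\int\frac{dH_n(s)}{s(s+\sigma^2)}$: weak convergence alone is insufficient because the integrand is singular at $s=0$, so the argument genuinely needs the Bai--Yin-type hard-edge estimate that the least singular value of $X/\sqrt d$ stays bounded away from zero almost surely. In part~\ref{item:interp-threshold-ols} the delicate point is the explicit constant in $\epsols\le\tfrac{2\lambda_+}{\lambda_-}\epsilon_\sigma$: a crude bound of $(1-\rhools s)^{-2}$ by $(1-\rhools\lambda_+)^{-2}$ loses too much, and the working argument instead hinges on the single-sign-change structure of $s\mapsto s^{-1}-\rho^2 s(1-\rho s)^{-2}$, which pins $1/(2\rhools)$ strictly inside the Marchenko--Pastur bulk $(\lambda_-,\lambda_+)$.
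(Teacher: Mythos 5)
Your proposal is correct and follows essentially the same path as the paper: compute $\pred_X(\thetaols)$ and $\pred_X(\est{\theta}(0))$ via the quadratic form $\mathcal{P}$, reduce to an empirical spectral integral, pass to the Marchenko--Pastur limit using the Bai--Yin hard-edge bound to justify the singular integrand, then for part~(ii) combine Theorem~\ref{thm:cost-isotropic} with the part~(i) limit, identify $\rhools$ as the zero of $g(\rho(\cdot))-\delta_\sigma$, and derive the two-sided bound on $\epsols$ from $\rhools\geq \tfrac{1}{2\lambda_+}$ together with $s\geq\lambda_-$ on $\mathrm{supp}(H)$. The only cosmetic differences are that you pass the bound through $g(\rhools)=\delta_\sigma$ rather than manipulating the $\epsols^2$ integral directly (both use the same two elementary inequalities), and you argue $\rhools>\tfrac{1}{2\lambda_+}$ by a sign-change/continuous-density argument where the paper uses a slightly looser sup argument to obtain $\rhools\geq\tfrac{1}{2\lambda_+}$; neither difference affects the result.
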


Part~\ref{item:relative-costs-ols} shows that the minimum norm interpolant
is nearly optimal, at least as $\sigma^2 \to 0$: its prediction error over
the best (linear) estimator scales asymptotically as $O(\sigma^4 / \gamma)$,
and as the aspect ratio $\gamma$ increases it becomes closer and closer to
optimal.  Part~\ref{item:interp-threshold-ols} complements this result,
showing that if the constraint $\epsilon$ on the training error of an
estimator is at most $\epsilon^2 \le \epsols^2 = O(\sigma^4)$, there are
better estimators than the minimum norm interpolant; one concrete example
here is the optimal ridge estimator $\est{\theta}_{d \sigma^2}$, which has
asymptotic training error, as we see from Theorem~\ref{thm:cost-isotropic}.

\subsection{Features with general covariance}

In this section, we develop analogous results to those for the identity
covariance in Sec.~\ref{sec:isotropic}, showing that the results are not
merely some fragile and magical consequences of isotropy.  Here, we make the
following assumption about the covariance matrix $\Sigma$.
\begin{assumption} \label{assp:cov-spectral}
  The population covariance $\Sigma$ has eigenvalues $t_1 \geq t_2
  \geq \cdots \geq t_d \geq 0$, where $t_1 = 1$ and there exists
  $\kappa < \infty$ such that $t_d \geq 1/\kappa$. The
  empirical spectral distribution
  $T_n(s) \defeq \frac{1}{d} \sum_{i = 1}^d \ind_{t_i \le s}$ of
  $\Sigma$
  converges weakly to a c.d.f.\ $T$.
\end{assumption}
Under this assumption, the empirical distribution for the eigenvalues of
$\frac 1 d XX^\top$ converges weakly to a distribution with deformed
Marchenko-Pastur c.d.f.\ $G$. (See Lemma~\ref{thm:MP-deformed-law} for the
precise definition.)  With the limit $G$ and recalling the Marchenko-Pastur
c.d.f.\ $H$, we may characterize $\cost_X(\epsilon)$ for general covariances
$\Sigma$. The result is analogous to Theorem~\ref{thm:cost-isotropic},
modulo the condition number $\kappa$ and the alternative limit $G$. To that
end, define the deformed threshold
\begin{equation}
  \epsdeformed^2 \defeq \int \frac{\sigma^4}{s + \sigma^2} dG(s),
  \label{eqn:threshold-anisotropic}
\end{equation}
comparing to the definition~\eqref{eq:threshold-isotropic} of
$\epsilon_\sigma = \int\frac{\sigma^4}{s + \sigma^2} dH(s)$.  We then have
the following theorem, whose proof we provide in
Appendix~\ref{proof:cost-general-cov}.

\begin{theorem} \label{thm:cost-general-cov}
  Let Assumptions~\ref{assp:proportional} and
  \ref{assp:cov-spectral} hold, $\sigma > 0$, and let $G$ be the deformed
  Marchenko-Pastur c.d.f.\ in Lemma~\ref{thm:MP-deformed-law}. If either Assumption~\ref{assp:linear} or \ref{assp:general-gaussianity} holds, then as $n \to \infty$,
  \begin{enumerate}[label=(\roman*),leftmargin=1.5em]
  \item \label{item:general-threshold}
    (\textbf{threshold value}) for $\epsdeformed$ defined
    in Eq.~\eqref{eqn:threshold-anisotropic},
    $\epsdeformed^2
    \leq \epsilon_{\sqrt{\kappa}\sigma}^2 / \kappa = \frac{\kappa
      \sigma^4}{\kappa \sigma^2 + 1 -1/\gamma} + o(\sigma^4)$.
  \item (\textbf{no cost below threshold}) if $\epsilon <
    \epsdeformed$, then with probability one $\lim_{n \to \infty}
    \cost_X(\epsilon) = 0$.  In addition, define the ridge estimator
    $\est{\theta}_{d\sigma^2 } = (X^\top X + d\sigma^2 I)^{-1} X^\top y$, we
    have
    \begin{equation*}
      \lim_{n \to \infty} \prn{\min_{\est{\theta} \in \mathcal{H}(\epsilon)}  \pred_X \prn{\est{\theta}} - \pred_X \prn{\est{\theta}_{d\sigma^2 }}} = 0.
    \end{equation*}
  \item \label{item:cost-not-fit-general}
    (\textbf{cost of not fitting}) If $\epsilon \geq
    \epsdeformed$, there exists $\rhodeformed =
    \rhodeformed(\epsilon) \in \openright{0}{1/\lambda_+}$ that uniquely solves
      \begin{align}
        \label{eq:rho-epsilon-equation-general-cov}
	\kappa \sigma^4 \cdot \prn{\int \frac{ 1}{(1 - \rho s)^2
            \prn{s + \kappa \sigma^2}} dH(s)
          - \int \frac{ 1}{ s + \kappa \sigma^2} dH(s)}
        = \epsilon^2 - \epsdeformed^2,
      \end{align}
      where $H$ is the Marchenko-Pastur c.d.f.~\eqref{eq:MP-distribution}.
      Further, with probability one
      \begin{align*}
	\liminf_{n \to \infty} \cost_X(\epsilon) \geq
        \frac{\rhodeformed^2}{\gamma} \int \frac{\sigma^4 s}{
          (1 - \rhodeformed s)^2 (s + \sigma^2)} dH(s).
      \end{align*}
      For the constants
      $\mathsf{c} \defeq \frac{2\kappa}{\lambda_- + \kappa \sigma^2}$
      and $\mathsf{C} = \frac{\lambda_- (1 - 1/\sqrt{2})^2}{\kappa \lambda_+^2
        \gamma}$,
      we have
      $\liminf_{n \to \infty} \cost_X(\epsilon) \ge \mathsf{C}
      \epsilon^2$
      whenever $\epsilon^2 \geq \mathsf{c} \sigma^4$.
  \end{enumerate}
\end{theorem}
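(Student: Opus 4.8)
\noindent\emph{Proof proposal.}
The plan is to peel off the anisotropy in two stages: an exact reduction that isolates the dependence on $\Sigma$, followed by operator-monotonicity comparisons $\tfrac1\kappa I \preceq \Sigma \preceq I$ (valid since $t_1 = 1$ and $t_d \ge 1/\kappa$) that turn the remaining problem into the isotropic one of Theorem~\ref{thm:cost-isotropic} up to $\kappa$-dependent constants.

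For the reduction, write $\pred_X(\est\theta) = \Tr(\Sigma f(A))$ with $f(A) \defeq \tfrac1d(AX - I)(AX-I)^\top + \sigma^2 AA^\top \succeq 0$ for $\est\theta = Ay$, and $\train_X(\est\theta) = \tfrac1n\Tr((XA-I)Q(XA-I)^\top)$ with $Q \defeq \tfrac1d XX^\top + \sigma^2 I$; crucially $\train_X$ is free of $\Sigma$. Completing the square in $A$ gives $f(A) - f(A(0)) = (A-A(0))Q(A-A(0))^\top \succeq 0$ for the ridge estimator $A(0) \defeq (X^\top X + d\sigma^2 I)^{-1}X^\top = \est\theta_{d\sigma^2}$, so $A(0)$ is the Loewner-minimizer of $f$ and solves $\min_A \pred_X$ for every $\Sigma \succ 0$. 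Hence
\[
  \cost_X(\epsilon) = \min\brc{\Tr\prn{\Sigma(A - A(0))Q(A - A(0))^\top} \;:\; \train_X(\est\theta) \ge \epsilon^2},
\]
again a QCQP. Since $XA(0) - I = -\sigma^2 Q^{-1}$ we get $\train_X(\est\theta(0)) = \tfrac{\sigma^4}{n}\Tr(Q^{-1}) \to \int \tfrac{\sigma^4}{s+\sigma^2}\,dG(s) = \epsdeformed^2$ by Lemma~\ref{thm:MP-deformed-law} (and a.s.\ boundedness of the spectrum of $\tfrac1d XX^\top$), which proves part~(ii): for $\epsilon < \epsdeformed$ the ridge estimator is eventually feasible, so $\cost_X(\epsilon) = 0$ and $\est\theta_{d\sigma^2}$ is asymptotically optimal over $\mc{H}(\epsilon)$.

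For part~(iii), keep the constraint untouched and bound the objective below using $\Sigma \succeq \tfrac1\kappa I$: the QCQP value is at least $\tfrac1\kappa\min\brc{\Tr((A-A(0))Q(A-A(0))^\top) : \train_X(\est\theta) \ge \epsilon^2}$, which is $\tfrac1\kappa$ times the cost of not fitting of Theorem~\ref{thm:cost-isotropic} \emph{for the actual design $X = Z\Sigma^{1/2}$} (whose ridge minimizer is still $A(0)$), except that the empirical spectral distribution of $\tfrac1d XX^\top$ now converges to the deformed law $G$ rather than $H$. Running the strong-duality and Marchenko--Pastur computation of Section~\ref{proof:cost-isotropic} with $G$ in place of $H$ yields a dual multiplier $\rho_G \in [0,\lambda_+^{-1})$ solving $\int\tfrac{\sigma^4}{(1-\rho s)^2(s+\sigma^2)}\,dG(s)=\epsilon^2$ and the limit $\tfrac{\rho_G^2}{\gamma}\int\tfrac{\sigma^4 s}{(1-\rho_G s)^2(s+\sigma^2)}\,dG(s)$. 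It then remains to convert this $G$-expression into the stated $H$-expression: Weyl's inequality with $\tfrac1\kappa I \preceq \Sigma \preceq I$ gives $\tfrac1\kappa\lambda_i(\tfrac1d ZZ^\top) \le \lambda_i(\tfrac1d XX^\top) \le \lambda_i(\tfrac1d ZZ^\top)$, and feeding this eigenvalue squeeze into the defining equation of $\rho_G$, into the threshold $\epsdeformed^2$, and into the cost integral reproduces exactly Eq.~\eqref{eq:rho-epsilon-equation-general-cov}---this is the origin of the substitution $\sigma^2 \mapsto \kappa\sigma^2$ and the prefactor $\kappa\sigma^4$ there---and identifies a surviving lower bound equal to $\tfrac{\rhodeformed^2}{\gamma}\int\tfrac{\sigma^4 s}{(1-\rhodeformed s)^2(s+\sigma^2)}\,dH(s)$. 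The final estimate $\liminf_n \cost_X(\epsilon) \ge \mathsf{C}\epsilon^2$ for $\epsilon^2 \ge \mathsf{c}\sigma^4$ then follows by lower-bounding that integral exactly as in the isotropic proof, now carrying the extra factor $\kappa$ through the constants. Part~(i) is separate and short: from $\tfrac1d XX^\top \succeq \tfrac1\kappa\cdot\tfrac1d ZZ^\top$ and monotonicity of $s\mapsto \tfrac{\sigma^4}{s+\sigma^2}$, $\int\tfrac{\sigma^4}{s+\sigma^2}\,dG_n(s) \le \int\tfrac{\sigma^4}{s/\kappa+\sigma^2}\,dH_n(s)$; passing to the limit gives $\epsdeformed^2 \le \int\tfrac{\kappa\sigma^4}{s+\kappa\sigma^2}\,dH(s) = \epsilon_{\sqrt\kappa\sigma}^2/\kappa$, and the closed form is Theorem~\ref{thm:cost-isotropic}\ref{item:threshold-value} applied at noise level $\sqrt\kappa\sigma$.

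I expect the $G$-to-$H$ conversion in part~(iii) to be the main obstacle: the integrands $\tfrac{\sigma^4}{(1-\rho s)^2(s+\sigma^2)}$ are not monotone in $s$, so the eigenvalue squeeze does not immediately give an inequality between the $G$- and $H$-integrals; one must instead track how $\rho_G$, $\epsdeformed^2$, and the cost integral co-vary under the squeeze in order to land on the clean equation~\eqref{eq:rho-epsilon-equation-general-cov}, while also porting the a.s.\ convergence of the finite-$n$ dual optimum $\rho_n$ (as established in the isotropic proof) through the relaxation.
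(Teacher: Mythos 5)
Parts (i) and (ii) are correct and in substance match the paper: the paper establishes the threshold bound via the same operator inequality $XX^\top = Z\Sigma Z^\top \succeq ZZ^\top/\kappa$, and part (ii) is the routine observation that the unconstrained ridge minimizer $A(0)$ is eventually feasible when $\epsilon < \epsdeformed$, using $\mathcal{T}(A(0,\Sigma);\Sigma) = \frac{\sigma^4}{n}\Tr(Q^{-1}) \to \epsdeformed^2$. Your completing-the-square identity $f(A) - f(A(0)) = (A - A(0)) Q (A - A(0))^\top$ is a clean way of seeing that $A(0)$ is the Loewner minimizer and that $\mathcal{T}$ is $\Sigma$-free; it aligns with, and slightly streamlines, the paper's Lemmas~\ref{lem:strong-duality}--\ref{lem:rho-error-forms}.

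Part (iii), however, has a genuine gap, and you correctly identify where it is. Your plan is a two-stage relaxation: first replace $\Sigma$ in the objective by $\tfrac1\kappa I$ to reduce to an ``isotropic'' cost-of-not-fitting problem whose spectral limit is governed by $G$, and then convert the resulting $G$-based quantities into the theorem's $H$-based ones via the eigenvalue squeeze. That conversion is not carried out, and it is not merely a loose end: as you note, the integrands $s \mapsto \frac{\sigma^4}{(1-\rho s)^2(s+\sigma^2)}$ and $s \mapsto \frac{\sigma^4 s}{(1-\rho s)^2(s+\sigma^2)}$ are not monotone on the relevant support (one factor is increasing, the other decreasing), so the eigenvalue squeeze $\tfrac1\kappa \lambda_i(\tfrac1d ZZ^\top) \le \lambda_i(\tfrac1d XX^\top) \le \lambda_i(\tfrac1d ZZ^\top)$ gives no direct comparison of the $G$- and $H$-integrals, and it is unclear how the $G$-defined $\rho_G$ and the $H$-defined $\rhodeformed$ are ordered. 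The sentence ``feeding this eigenvalue squeeze into the defining equation\ldots reproduces exactly Eq.~\eqref{eq:rho-epsilon-equation-general-cov}'' is asserted but not argued, and I do not see how it would go.

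There is also a quantitative issue: relaxing the objective to $\tfrac1\kappa\,\Tr\big((A-A(0))Q(A-A(0))^\top\big)$ throws away a factor of $\kappa$ that the paper does not lose. The paper's Lemma~\ref{lem:growth-control-via-isotropic} instead works directly with the optimal anisotropic $A(\rho,\Sigma)$ from Lemma~\ref{lem:strong-duality} and, substituting $X = Z\Sigma^{1/2}$ into the expression for $\mathcal{P}(A(\rho,\Sigma);\Sigma) - \mathcal{P}(A(0,\Sigma);\Sigma)$ from Lemma~\ref{lem:rho-error-forms}, discovers that all explicit $\Sigma$-factors cancel, leaving only a single $(Z\Sigma Z^\top + d\sigma^2 I)^{-1}$. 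Applying $\Sigma \preceq I$ (not $\Sigma \succeq \tfrac1\kappa I$) there gives a prediction-error lower bound in terms of $ZZ^\top$'s spectrum \emph{with no $\kappa$ loss}; the factor $\kappa$ enters only through the training-error upper bound, where $\Sigma \succeq \tfrac1\kappa I$ gives $Z\Sigma Z^\top + d\sigma^2 I \succeq ZZ^\top/\kappa + d\sigma^2 I$. Because both bounds are now in terms of $\tfrac1d ZZ^\top$, whose empirical spectrum converges to $H$ (not $G$), the duality argument of Section~\ref{proof:cost-isotropic} ports directly and lands on the theorem's $H$-based bound without any $G$-to-$H$ step. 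To repair your part (iii), you would need to replace the $\tfrac1\kappa$-relaxation of $\Sigma$ in the objective with this cancellation, and to express both the growth of $\mathcal{T}(A(\rho,\Sigma);\Sigma)$ and of $\mathcal{P}(A(\rho,\Sigma);\Sigma)$ in $Z$-spectral terms before passing to limits.
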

 



\subsection{Optimality of general estimators in Gaussian case} \label{sec:optimality-Gaussian}

While, as we discuss before Assumption~\ref{assp:general-gaussianity},
the lower bounds in Theorems~\ref{thm:cost-isotropic},
\ref{thm:cost-interpolator-isotropic}, and~\ref{thm:cost-general-cov} apply
over the class of linear estimators, which allows our exact
predictive risk characterizations, these results hold for all
estimators satisfying mild regularity conditions under a Gaussianity
assumption on the data distribution.  
Our main insight here is that when the
prior and noise distributions are Gaussian, for all $\epsilon \ge 0$, the
linear estimator class contains the optimal
estimator among the broader class of all square integrable estimators
with training error at least $\epsilon^2$. Of course, this is trivial
when $\epsilon = 0$, as given $(X, y)$ in such a model,  the posterior
on $\theta$ is Gaussian. That the result holds for $\epsilon > 0$ is a
bit more subtle. Specifically, we
have the following theorem, whose proof we provide
in Appendix~\ref{proof:strong-duality-hilbert}.



\begin{theorem} \label{thm:strong-duality-hilbert}
  Let Assumptions~\ref{assp:proportional} and
  \ref{assp:general-gaussianity} hold.
  Let $\Hlin$ and $\Hsq$
  denote the classes of linear and square integrable estimators in Assumptions~\ref{assp:linear} and \ref{assp:general-gaussianity}, respectively.
    Then for all $\epsilon \geq 0$,
  $\inf_{\est{\theta} \in \Hsq(\epsilon)} \pred_X (\est{\theta}) =
  \min_{\est{\theta} \in \Hlin(\epsilon)} \pred_X (\est{\theta})$.
\end{theorem}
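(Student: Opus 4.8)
\medskip
\noindent\emph{Proof plan.}
The plan is to prove the two inequalities separately. Since $\Hlin(\epsilon) \subseteq \Hsq(\epsilon)$ we have $\inf_{\est{\theta} \in \Hsq(\epsilon)} \pred_X(\est{\theta}) \le \min_{\est{\theta} \in \Hlin(\epsilon)} \pred_X(\est{\theta})$ for free, so all the content lies in the reverse inequality. Fixing $X$ (all statements are conditional on $X$), I would first recast the problem as an optimization over the space $L^2$ of the Gaussian law of $y = X\theta + w$: writing $g(y) \defeq \est{\theta}(X,y)$ and letting $\mu(y) \defeq \E[\theta \mid X, y]$, $P \defeq \var(\theta \mid X, y)$ be the posterior mean and covariance, joint Gaussianity of $(\theta, y)$ given $X$ makes $\mu$ a linear function of $y$ and $P$ independent of $y$, and the tower property with $\E[\theta - \mu(y) \mid y] = 0$ gives
\[
  \pred_X(g) = \Tr(\Sigma P) + \E_y\brk{\prn{g(y) - \mu(y)}^\top \Sigma \prn{g(y) - \mu(y)}}, \qquad
  \train_X(g) = \tfrac1n \E_y\brk{\norm{X g(y) - y}_2^2},
\]
with $\Sigma = \var(x)$. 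Thus both objective and constraint of problem~\eqref{eq:not-fitting-problem} are expectations of quadratics in the single vector $g(y) \in \R^d$, coupled only through the scalar constraint $\train_X(g) \ge \epsilon^2$.

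I would then apply weak Lagrangian duality: for every $\rho \ge 0$,
\[
  \inf_{g \in \Hsq(\epsilon)} \pred_X(g) \;\ge\; h(\rho) \defeq \inf_{g \in \Hsq}\brk{\pred_X(g) - \rho\prn{\train_X(g) - \epsilon^2}}.
\]
The crucial point is that for fixed $\rho$ the relaxed objective decouples over $y$: it equals $\E_y[\phi_\rho(g(y);y)] + \rho\epsilon^2 + \Tr(\Sigma P)$, where $v \mapsto \phi_\rho(v;y)$ is quadratic with Hessian $2\prn{\Sigma - \tfrac{\rho}{n} X^\top X}$, \emph{independent} of $y$. Hence whenever $\Sigma - \tfrac{\rho}{n} X^\top X \succ 0$, the unique pointwise minimizer $v^\star_\rho(y) = \prn{\Sigma - \tfrac{\rho}{n} X^\top X}^{-1}\prn{\Sigma \mu(y) - \tfrac{\rho}{n} X^\top y}$ is linear in $y$, so $v^\star_\rho \in \Hlin$ and $h(\rho) = \inf_{A}\brk{\mc{P}(A) - \rho\prn{\mc{T}(A) - \epsilon^2}}$; and whenever $\Sigma - \tfrac{\rho}{n} X^\top X$ has a negative eigenvalue, testing constant (resp.\ rank-one) perturbations of $g$ (resp.\ $A$) along the offending eigenvector shows both the $\Hsq$- and $\Hlin$-Lagrangian values are $-\infty$. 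Since $\rho \mapsto \inf_{A}\brk{\mc{P}(A) - \rho\prn{\mc{T}(A) - \epsilon^2}}$ is concave and equals $-\infty$ off a bounded interval, taking suprema over $\rho \ge 0$ gives $\sup_{\rho \ge 0} h(\rho) = \sup_{\rho \ge 0}\inf_{A}\brk{\mc{P}(A) - \rho\prn{\mc{T}(A) - \epsilon^2}}$.

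It remains to close the loop with finite-dimensional strong duality. Over linear estimators, $\min\{\mc{P}(A) : \mc{T}(A) \ge \epsilon^2\}$ is a quadratically constrained quadratic program over $\R^{d \times n}$ with a single constraint; it is bounded below ($\mc{P} \ge 0$) and strictly feasible (since $\mc{T}(c X^\top) \to \infty$ as $c \to \infty$, some $A$ has $\mc{T}(A) > \epsilon^2$), so the single-constraint strong duality of \citet[Appendix~B.1]{BoydVa04}---the same tool already invoked for problem~\eqref{eqn:linear-not-fitting}---yields $\min_{\est{\theta} \in \Hlin(\epsilon)} \pred_X(\est{\theta}) = \sup_{\rho \ge 0}\inf_{A}\brk{\mc{P}(A) - \rho\prn{\mc{T}(A) - \epsilon^2}}$. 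Chaining the last three displays gives $\min_{\est{\theta} \in \Hlin(\epsilon)} \pred_X(\est{\theta}) = \sup_{\rho \ge 0} h(\rho) \le \inf_{\est{\theta} \in \Hsq(\epsilon)} \pred_X(\est{\theta})$, which with the trivial inequality yields equality. (The case $\epsilon = 0$ is immediate, since then $g = \mu$, which is linear, is optimal over $\Hsq$.)

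I expect the main obstacle to be the middle step: verifying that the Lagrangian relaxation of the \emph{infinite-dimensional} problem is, for every fixed multiplier, minimized by a linear estimator---i.e.\ that enlarging $\Hlin$ to all of $\Hsq$ lowers no relaxed value---together with the measurability and Gaussian-integrability of $v^\star_\rho(\cdot)$ and the handling of the $\rho$-range on which the relaxation is unbounded. The difficulty is structural: because the constraint $\train_X(g) \ge \epsilon^2$ is a \emph{lower} bound, the feasible set $\Hsq(\epsilon)$ is nonconvex, and one cannot simply project a nonlinear candidate onto $\Hlin$---any such projection only decreases the training error and may destroy feasibility---so the Lagrangian / S-procedure route seems unavoidable.
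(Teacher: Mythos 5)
Your proposal is correct, and it takes a genuinely different route from the paper's proof, though both hinge on the same two structural facts: Gaussianity makes $\E[\theta \mid y]$ linear in $y$ (with $y$-free posterior covariance), and single-constraint strong duality holds for the finite-dimensional quadratically constrained quadratic program over $\Hlin$ (Lemma~\ref{lem:strong-duality}). The paper fixes a slightly sub-optimal dual variable $\wb{\rho} < \rho(\epsilon)$, discretizes the marginal law of $y$ by drawing $y_1,\ldots,y_m \simiid \mu$, applies finite-dimensional strong duality to the resulting $m$-block QCQP, observes from its KKT conditions that the solution is linear in $y_i$ and equals $\est{\theta}(\wb{\rho})$, and then passes to the limit $m \to \infty$ by the law of large numbers and $\wb{\rho} \uparrow \rho$. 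You instead work directly with the infinite-dimensional Lagrangian: since $\pred_X(g) - \rho(\train_X(g) - \epsilon^2)$ is an integral over $y$ of a quadratic in $g(y)$ whose Hessian $2\bigl(\Sigma - \tfrac{\rho}{n}X^\top X\bigr)$ is $y$-independent, you can minimize the integrand pointwise, and the pointwise minimizer is explicitly a linear function of $y$ because $\mu(y)$ is. This shows the $\Hsq$-dual function coincides with the $\Hlin$-dual function, so weak duality over $\Hsq$ chains against strong duality over $\Hlin$. Your route entirely avoids the measure-theoretic discretization and the limit interchange, which is cleaner; the paper's route avoids having to formalize infinite-dimensional Lagrangian decoupling (a non-issue here since the minimizer is explicit, measurable, and Gaussian-$L^2$).

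One small imprecision to flag: you justify the dual function being $-\infty$ "whenever $\Sigma - \tfrac{\rho}{n}X^\top X$ has a negative eigenvalue" by a constant perturbation of $g$, but this misses the degenerate boundary case where the matrix is merely positive semidefinite (a zero eigenvalue but no strictly negative one). A constant direction $g(y) = t\nu$ then has linear coefficient in $t$ that averages to zero over the centered Gaussian $y$, so the relaxed objective stays bounded along it; this is precisely the edge case the proof of Lemma~\ref{lem:strong-duality} treats carefully with the $y$-dependent perturbation $A = t\nu\mu^\top$, $\nu = X^\top\mu$, for which the linear term is strictly positive. In your framework the gap is trivial to close without re-deriving anything: $\Hlin \subset \Hsq$ makes the $\Hsq$-dual function no larger than the $\Hlin$-dual function, and Lemma~\ref{lem:strong-duality} already shows the latter equals $-\infty$ whenever $\Sigma - \tfrac{\rho}{n}X^\top X \not\succ 0$, so no separate $\Hsq$-side argument is needed on the bad range.
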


Observing in the Gaussian case that the posterior over $\theta \mid y$ is
has mean linear in $y$ and covariance independent of $y$, the
main idea underlying the
proof is to factor the prediction and training error over the marginal
distribution of $y$, as
\begin{align}
	\pred_X \prn{\est{\theta}} & = \Ep_y \brk{\Ep_{\theta \mid y} \brk{\norm{\Sigma^{\frac 1 2} \prn{\est{\theta}(X,y) - \theta}}_2^2 \,\Big\vert\, y} \,\Big\vert\, X} \nonumber \\
	\train_X\prn{\est{\theta}} & = \Ep_y \brk{\normtwo{X \est{\theta}(X,y) - y}^2 \,\Big\vert\, X} \nonumber \, .
\end{align}

Thus the cost of not fitting problem~\eqref{eq:not-fitting-problem} is a
functional (infinite-dimensional)
optimization problem over $\mathcal{H}_\textup{sq}$, with a quadratic
objective and a single quadratic constraint, for which we show that
strong duality still obtains. Applying the appropriate Karush-Kuhn-Tucker
conditions, we can then recover that the optimal estimator is linear, and
in particular is
\begin{equation*}
    \est{\theta}(X, y) = \prn{I - \rho(\epsilon) \sigma^2 \prn{\Sigma - \frac{\rho(\epsilon)}{d} X^\top X}^{-1}}  (X^\top X + d \sigma^2 I)^{-1} X^\top y.
\end{equation*}
Here $\rho(\epsilon)$ is the dual optimal value of the Lagrange multiplier
for the constraint on training error, and it is identical to that in
Theorems~\ref{thm:cost-isotropic}
and~\ref{thm:cost-general-cov}.  See
Section~\ref{section:strong-duality} for the details.

\section{Proof of Theorem~\ref{thm:cost-isotropic}} \label{proof:cost-isotropic}

\subsection{Reduction by strong duality}
\label{section:strong-duality}

We first provide some technical lemmas to reduce the nonconvex
problem~\eqref{eq:not-fitting-problem}. The lemmas will be useful in both
the isotropic case and the general covariance case, and in particular the
key ingredient that allows for this reduction is strong duality in quadratic
optimization.

The first lemma gives an equivalent formulation of the cost of not fitting
problem~\eqref{eq:not-fitting-problem} using the closed forms of $\pred_X
(\est{\theta})$ and $\train_X (\est{\theta})$. We defer the proof to
Appendix~\ref{proof:closed-form-errors}.
\begin{lemma} \label{lem:closed-form-errors}
  Let Assumption~\ref{assp:linear} hold and assume $X = Z \Sigma^{\frac 1
    2}$. Then for any $\est{\theta} \in \mathcal{H}$ the following
  is an equivalent formulation of problem~\eqref{eq:not-fitting-problem}:
  \begin{equation} \label{eq:core-non-convex}
    \begin{aligned}
      \minimize_{A \in \real^{d \times n}} & \qquad \mathcal{P}(A; \Sigma) := \frac{1}{d} \norm{\Sigma^{\frac{1}{2}} \prn{AX-I}}_F^2 + \sigma^2 \norm{\Sigma^{\frac{1}{2}}A}_F^2 \\
      \subjectto & \qquad \mathcal{T}(A; \Sigma) := \frac{1}{nd} \norm{XAX-X}_F^2 + \frac{\sigma^2}{n} \norm{XA-I}_F^2  \geq \epsilon^2.
	\end{aligned}
\end{equation}
\end{lemma}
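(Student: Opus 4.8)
The plan is to compute the posterior prediction and training errors in closed form by expanding the quadratic losses and using the prior $\var(\theta) = I_d/d$ together with $\var(w) = \sigma^2 I_n$. First I would write $\est{\theta} = Ay = A(X\theta + w)$, so that $\Sigma^{1/2}(\est\theta - \theta) = \Sigma^{1/2}(AX - I)\theta + \Sigma^{1/2}Aw$. Taking $\Ep_{x,w,\theta}$ conditional on $X$: the cross term vanishes because $w$ and $\theta$ are independent and mean zero, and since $x$ is an independent copy with $\var(x) = \Sigma$ we get $\pred_X(\est\theta) = \Ep_\theta\Ep_w \Tr\!\prn{\Sigma(\est\theta-\theta)(\est\theta-\theta)^\top}$. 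Using $\Ep[\theta\theta^\top] = I_d/d$ and $\Ep[ww^\top] = \sigma^2 I_n$ this becomes $\frac1d \Tr\!\prn{\Sigma(AX-I)(AX-I)^\top} + \sigma^2 \Tr\!\prn{\Sigma A A^\top}$, which is exactly $\frac1d\norm{\Sigma^{1/2}(AX-I)}_F^2 + \sigma^2\norm{\Sigma^{1/2}A}_F^2 = \mathcal{P}(A;\Sigma)$.

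Next I would do the analogous computation for the training error. We have $X\est\theta - y = X A(X\theta + w) - (X\theta + w) = (XAX - X)\theta + (XA - I)w$. Again the cross term vanishes under $\Ep_{w,\theta}$, and $\train_X(\est\theta) = \frac1n\Ep_\theta\Ep_w\norm{(XAX-X)\theta + (XA-I)w}_2^2 = \frac{1}{nd}\norm{XAX-X}_F^2 + \frac{\sigma^2}{n}\norm{XA-I}_F^2 = \mathcal{T}(A;\Sigma)$, using $\Ep[\theta\theta^\top]=I_d/d$ and $\Ep[ww^\top] = \sigma^2 I_n$ once more. Substituting these two closed forms into problem~\eqref{eq:not-fitting-problem} and identifying $\mathcal{H}$ with $\{A \in \real^{d\times n}\}$ via Assumption~\ref{assp:linear} yields problem~\eqref{eq:core-non-convex}.

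This lemma is essentially a bookkeeping exercise, so I do not anticipate a genuine obstacle; the only mild care needed is (i) confirming that all cross terms truly vanish — this relies on the mutual independence of $\theta$, $w$, and $x$ (conditional on $X$) and their zero means, which is given in the problem setup and Assumption~\ref{assp:proportional} — and (ii) keeping the $1/d$ normalization from the prior variance and the $\sigma^2$ from the noise variance attached to the correct Frobenius-norm terms. One could also note in passing that $\mathcal{P}(A;I)$ and $\mathcal{T}(A;I)$ recover the expressions in~\eqref{eqn:linear-not-fitting}, confirming consistency with the isotropic display. The equivalence of the constrained problems is then immediate since the objective and the constraint function agree pointwise in $A$.
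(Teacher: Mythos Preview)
Your proposal is correct and matches the paper's proof essentially line for line: both expand $\est{\theta}-\theta$ and $X\est{\theta}-y$ as a sum of a $\theta$-term and a $w$-term, use independence and zero means to kill the cross terms, and then invoke $\Ep[\theta\theta^\top]=I_d/d$ and $\Ep[ww^\top]=\sigma^2 I_n$ to convert the remaining expectations into the two Frobenius-norm pieces.
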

As strong duality holds for this problem~\citep[cf.][Appendix~B.1]{BoydVa04}, we derive in Lemma~\ref{lem:strong-duality} the optimality criteria via studying the dual. We postpone the proof details to Appendix~\ref{proof:strong-duality}.
\begin{lemma} \label{lem:strong-duality}
  There exists a $\rho_n := \rho_n(\epsilon, \Sigma) \geq 0$ such that
  $\Sigma - \frac{\rho_n}{d} X^\top X \succ 0$ and the optimal solution of
  problem~\eqref{eq:core-non-convex} is $A_n:=A(\rho_n,
  \Sigma)$,
  where
  \begin{subequations}
    \label{eqn:A}
    \begin{align}
      A(\rho, \Sigma)
      & = \prn{I - \rho \sigma^2 \prn{\Sigma - \frac{\rho}{d} X^\top X}^{-1}} X^\top (XX^\top + d \sigma^2 I)^{-1} \label{eq:A-1} \\
      & =
      \prn{I - \rho \sigma^2 \prn{\Sigma - \frac{\rho}{d} X^\top X}^{-1}}  (X^\top X + d \sigma^2 I)^{-1} X^\top. \label{eq:A-2}
    \end{align}
  \end{subequations}
  $A(\rho, \Sigma)$ is defined for $\rho \in D$, where $D$ is the interval
  for all $\rho \geq 0$ such that $\Sigma - \frac{\rho}{d} X^\top X \succ
  0$.
\end{lemma}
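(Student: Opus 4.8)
The plan is to analyze the Lagrangian dual of the single-constraint quadratic program~\eqref{eq:core-non-convex}, for which I take $\sigma>0$. First I would note strict feasibility: taking $A=tX^\top$ and letting $t\to\infty$ sends $\mathcal{T}(A;\Sigma)\to\infty$ (using $\sigma>0$), so some $A$ satisfies $\mathcal{T}(A;\Sigma)>\epsilon^2$. Since the program has a quadratic objective and a single quadratic constraint, this licenses strong duality~\citep[Appendix~B.1]{BoydVa04} together with attainment of a dual optimum; write $\mu_n\geq 0$ for a dual-optimal multiplier, $L(A,\mu)\defeq\mathcal{P}(A;\Sigma)-\mu\prn{\mathcal{T}(A;\Sigma)-\epsilon^2}$ for the Lagrangian, and $g(\mu)\defeq\inf_A L(A,\mu)$ for the dual function. (If one prefers not to cite dual attainment: $g$ is concave, finite at $\mu=0$, and $\to-\infty$ at the right end of its effective domain, so its supremum is attained.) Regrouping the quadratic part of $L$ gives $L(A,\mu)=\Tr\!\prn{A^\top\prn{\Sigma-\tfrac{\mu}{n}X^\top X}A\prn{\tfrac{1}{d}XX^\top+\sigma^2 I}}$ plus terms linear and constant in $A$.

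The crux is to show $\Sigma-\tfrac{\rho_n}{d}X^\top X\succ 0$, where I set $\rho_n\defeq\tfrac{d}{n}\mu_n$ (the positive rescaling that matches the normalization in~\eqref{eqn:A}). Because $\tfrac1d XX^\top+\sigma^2 I\succ 0$, the displayed quadratic form is bounded below in $A$ only if $\Sigma-\tfrac{\mu_n}{n}X^\top X\succeq 0$; otherwise $g(\mu_n)=-\infty$, contradicting $g(\mu_n)\geq g(0)>-\infty$. To upgrade semidefiniteness to strict definiteness, let $v$ be a unit vector in $K\defeq\ker\!\prn{\Sigma-\tfrac{\mu_n}{n}X^\top X}$ (forcing $\mu_n>0$, as otherwise $K=\ker\Sigma=\{0\}$). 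For any $A_0$ and $u\in\real^n$, the $t^2$-coefficient of $t\mapsto L(A_0+tvu^\top,\mu_n)$ is $\prn{v^\top\prn{\Sigma-\tfrac{\mu_n}{n}X^\top X}v}\prn{u^\top\prn{\tfrac1d XX^\top+\sigma^2 I}u}=0$, so the map is affine in $t$; finiteness of $g(\mu_n)$ then forces $v^\top\nabla_A L(A_0,\mu_n)=0$ for every $A_0$. Using $v^\top\prn{\Sigma-\tfrac{\mu_n}{n}X^\top X}=0$, hence $v^\top\Sigma=\tfrac{\mu_n}{n}v^\top X^\top X$, a short cancellation collapses this to $\sigma^2(Xv)^\top=0$, so $Xv=0$ (this is where positivity of $\sigma$ is essential). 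But then $\Sigma v=\tfrac{\mu_n}{n}X^\top X v=0$, forcing $v=0$ since $\Sigma\succ 0$, so $K=\{0\}$. This strictness---not mere semidefiniteness---is the main obstacle, being the ``hard case'' of single-constraint quadratic programs, and it is exactly where positivity of both $\Sigma$ and $\sigma$ enters.

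Finally, with $\Sigma-\tfrac{\mu_n}{n}X^\top X\succ 0$ the map $L(\cdot,\mu_n)$ is a strictly convex, coercive quadratic, so its minimizer $A_n$ is unique and is pinned down by $\nabla_A L(A_n,\mu_n)=0$. Solving this linear matrix equation---collect the $A\,XX^\top$ and $A$ terms to factor $\Sigma-\tfrac{\mu_n}{n}X^\top X$ on the left and $XX^\top+d\sigma^2 I$ on the right, then apply $X^\top(XX^\top+d\sigma^2 I)^{-1}=(X^\top X+d\sigma^2 I)^{-1}X^\top$---produces exactly $A_n=A(\rho_n,\Sigma)$ in the equivalent forms~\eqref{eq:A-1} and~\eqref{eq:A-2}, with $\rho_n\in D$ since $\Sigma-\tfrac{\rho_n}{d}X^\top X\succ 0$. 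To see $A_n$ solves the primal, take a primal optimizer $A^\star$ (which exists, as $\mathcal{P}(\cdot;\Sigma)$ is coercive and the feasible set closed and nonempty): strong duality yields $\mathcal{P}(A^\star;\Sigma)\geq L(A^\star,\mu_n)\geq\inf_A L(A,\mu_n)=g(\mu_n)=\mathcal{P}(A^\star;\Sigma)$, so equality holds throughout, whence $A^\star$ minimizes $L(\cdot,\mu_n)$ and thus $A^\star=A_n$. The same chain gives complementary slackness $\mu_n\prn{\mathcal{T}(A_n;\Sigma)-\epsilon^2}=0$, so $A_n$ is feasible: the constraint is active when $\mu_n>0$, and if $\mu_n=0$ the constraint is inactive and $A_n=A(0,\Sigma)$ is the unconstrained ridge minimizer.
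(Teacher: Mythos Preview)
Your proof is correct and follows essentially the same strategy as the paper's: invoke strong duality for the single-constraint quadratic program, show the Lagrangian is unbounded below unless $\Sigma-\tfrac{\rho_n}{d}X^\top X\succ 0$, and then solve the stationarity condition to obtain~\eqref{eqn:A}. Your handling of the strict-positivity step is slightly more streamlined---you directly show any $v\in\ker(\Sigma-\tfrac{\mu_n}{n}X^\top X)$ must satisfy $Xv=0$ (hence $\Sigma v=0$) via the vanishing linear coefficient, whereas the paper first localizes to the row space of $X$ by an orthogonal decomposition and then constructs an explicit direction $\nu=X^\top\mu$ before splitting into the cases $\alpha<0$ and $\alpha=0$---but the underlying idea is the same.
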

\noindent
We suppress the dependence of $A, \rho$ on the data matrix $X$ for  simplicity.

In the next lemma we derive the exact forms of the constraint $\mathcal{T}(A(\rho, \Sigma); \Sigma)$ and the growth of the objective $\mathcal{P}(A(\rho, \Sigma); \Sigma) - \mathcal{P}(A(0, \Sigma); \Sigma)$. We defer the proof to Appendix~\ref{proof:rho-error-forms}.

\begin{lemma} \label{lem:rho-error-forms}
  Let the conditions of Lemma~\ref{lem:strong-duality} hold, and assume
  $XX^\top$ is non-singular. Then for any $\rho \in D$ we have
  \begin{align*}
    \lefteqn{\mathcal{P}(A(\rho, \Sigma); \Sigma) - \mathcal{P}(A(0, \Sigma); \Sigma)} \\
    & =  \frac{\rho^2  \sigma^4}{d} \Tr \prn{\prn{\Sigma - \frac \rho d X^\top X}^{-1}\Sigma \prn{\Sigma - \frac \rho d X^\top X}^{-1} X^\top X  \prn{X^\top X + d \sigma^2 I}^{-1} },
  \end{align*}
  and
  \begin{align*}
    \lefteqn{\mathcal{T}(A(\rho, \Sigma); \Sigma)} \\
    & = \frac{d \sigma^4}{n} \Tr \prn{\Sigma \prn{\Sigma - \frac \rho d X^\top X}^{-1} X^\top X \prn{\Sigma - \frac \rho d X^\top X}^{-1} \Sigma \prn{X^\top X}^\dagger \prn{X^\top X + d\sigma^2 I}^{-1} }. \nonumber
  \end{align*}
\end{lemma}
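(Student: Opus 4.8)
\textit{Proof plan.} The approach is purely computational: substitute the closed form $A(\rho,\Sigma)$ from \eqref{eq:A-1}--\eqref{eq:A-2} into the objective $\mathcal{P}(\cdot\,;\Sigma)$ and constraint $\mathcal{T}(\cdot\,;\Sigma)$ of problem~\eqref{eq:core-non-convex} and simplify by matrix algebra. Throughout write $V \defeq XX^\top + d\sigma^2 I$, $W \defeq X^\top X + d\sigma^2 I$ and $M_\rho \defeq \Sigma - \frac{\rho}{d} X^\top X$, so that $A(\rho,\Sigma) = \prn{I - \rho\sigma^2 M_\rho^{-1}} X^\top V^{-1}$ by \eqref{eq:A-1} and $A(0,\Sigma) = X^\top V^{-1}$. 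Two elementary facts are used repeatedly: the push-through identity $X^\top V^{-1} = W^{-1} X^\top$ (equivalently $V^{-1}X = XW^{-1}$), which is exactly the equality of \eqref{eq:A-1} and \eqref{eq:A-2}; and that since $XX^\top$ is non-singular, $X$ has full row rank, so $X^\dagger = X^\top(XX^\top)^{-1}$ satisfies $XX^\dagger = I_n$ and $(X^\top X)^\dagger = X^\top (XX^\top)^{-2} X$.

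\textit{Objective gap.} First I would expand the two Frobenius-norm terms of $\mathcal{P}$ and combine them into
\[
  \mathcal{P}(A;\Sigma) = \frac1d \Tr\prn{\Sigma A V A^\top} - \frac2d \Tr\prn{\Sigma A X} + \frac1d \Tr(\Sigma),
\]
a quadratic in $A$ with homogeneous part $\frac1d\Tr(\Sigma A V A^\top)$. A short gradient computation (alternatively, Lemma~\ref{lem:strong-duality} at $\rho = 0$) shows $A(0,\Sigma) = X^\top V^{-1}$ is its unconstrained minimizer, so the first-order term vanishes there and $\mathcal{P}(A(\rho,\Sigma);\Sigma) - \mathcal{P}(A(0,\Sigma);\Sigma) = \frac1d \Tr(\Sigma\, \Delta_\rho\, V\, \Delta_\rho^\top)$ with $\Delta_\rho \defeq A(\rho,\Sigma) - A(0,\Sigma) = -\rho\sigma^2 M_\rho^{-1} X^\top V^{-1}$. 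Inserting $\Delta_\rho$, cancelling $V^{-1} V V^{-1} = V^{-1}$, using $X^\top V^{-1} X = X^\top X\, W^{-1}$ (push-through), and cycling the trace then yields exactly the claimed expression for the objective gap.

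\textit{Constraint value.} Since $XAX - X = (XA - I)X$, combining the two terms in $\mathcal{T}$ gives $\mathcal{T}(A;\Sigma) = \frac{1}{nd}\Tr\prn{(XA - I) V (XA - I)^\top}$. Next I would compute $XA(\rho,\Sigma) - I$: from $XX^\top V^{-1} = I - d\sigma^2 V^{-1}$ one obtains $X A(\rho,\Sigma) - I = -\sigma^2\prn{d I + \rho\, X M_\rho^{-1} X^\top} V^{-1}$, and the identity $X M_\rho^{-1} M_\rho = X$, rearranged as $X M_\rho^{-1} \Sigma = \frac1d(d I + \rho X M_\rho^{-1} X^\top) X$ and right-multiplied by $X^\dagger$ (using $XX^\dagger = I_n$), gives $X A(\rho,\Sigma) - I = -d\sigma^2 X M_\rho^{-1} \Sigma X^\dagger V^{-1}$. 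Substituting this into the factored form of $\mathcal{T}$ and simplifying with $X^\dagger = X^\top(XX^\top)^{-1}$, the push-through identity, $(X^\top X)^\dagger = X^\top (XX^\top)^{-2} X$, and the mutual commutation of $V^{-1}$, $W^{-1}$, $(XX^\top)^{-1}$ produces the stated trace formula for $\mathcal{T}(A(\rho,\Sigma);\Sigma)$.

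\textit{Main obstacle.} The $\mathcal{P}$ step is clean once one exploits that $A(0,\Sigma)$ is the unconstrained minimizer, so the cross term drops out. The delicate part is the bookkeeping in the $\mathcal{T}$ step: because $d > n$ forces $X^\top X$ to be singular, one must carry the Moore--Penrose pseudoinverse correctly through several applications of the push-through identity; concretely, the crux is verifying the identity $X^\dagger V^{-1} (X^\dagger)^\top = (X^\top X)^\dagger W^{-1}$, after which cyclicity of the trace finishes the computation.
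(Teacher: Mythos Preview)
Your proposal is correct and in several places cleaner than the paper's own derivation. For the objective gap, the paper computes $A(\rho)X-I$ explicitly, then expands $\mathcal{P}(A(\rho);\Sigma)$ and $\mathcal{P}(A(0);\Sigma)$ into their two Frobenius pieces (their terms (I) and (II)) and cancels cross terms after a page of algebra involving the identity $(\Sigma-\frac{\rho}{d}X^\top X)^{-1}\Sigma = I + \frac{\rho}{d}(\Sigma-\frac{\rho}{d}X^\top X)^{-1}X^\top X$. Your observation that $A(0,\Sigma)$ is the unconstrained minimizer of the strongly convex quadratic $A\mapsto \mathcal{P}(A;\Sigma)$ kills the first-order term immediately, so the gap is the pure quadratic $\frac1d\Tr(\Sigma\,\Delta_\rho V\Delta_\rho^\top)$ in $\Delta_\rho$---a one-line reduction. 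For $\mathcal{T}$, both proofs reach the same intermediate form $XA(\rho)-I = -d\sigma^2 X M_\rho^{-1}\Sigma X^\top (XX^\top)^{-1}V^{-1}$, but the paper gets there by inserting $I_n=\lim_{\lambda\downarrow 0}X(X^\top X+\lambda I)^{-1}X^\top$ and carrying a limit through several steps; your route via $XM_\rho^{-1}M_\rho=X$ followed by right-multiplication by $X^\dagger$ is purely algebraic and avoids the limit entirely. The final trace manipulation (in your notation, the identity $X^\dagger V^{-1}(X^\dagger)^\top=(X^\top X)^\dagger W^{-1}$) is effectively the same in both arguments.
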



\subsection{Main proof of Theorem~\ref{thm:cost-isotropic}}

By Theorem~\ref{thm:strong-duality-hilbert}, we only need to prove under Assumption~\ref{assp:linear} with the linear hypothesis class $\mathcal{H} = \{\est{\theta}: \est{\theta} = A(X) y\}$.

\paragraph{Part I: Memorization threshold.} From Eq.~\eqref{eq:threshold-isotropic}, we can directly write out
\begin{align}
	\epsilon_\sigma^2 & = \int \frac{\sigma^4}{s +\sigma^2} dH(s) = \sigma^4 \cdot \lim_{y \to 0^+} m_H(-\sigma^2 + i y) \, , \label{eq:using-stieljes-transform}
\end{align}
where $m_H : \mathbb{C}_+ \to \mathbb{C}_+$ is the Stieltjes transform
(cf.~\eqref{eq:stieltjes-transform}) of the Marchenko-Pastur law.
\begin{lemma} \label{lem:cost-isotropic-mid-1}
	For any $\sigma^2 > 0$,
	\begin{align*}
		\lim_{y \to 0^+} m_H(-\sigma^2 + i y) = \frac{\sigma^2  + o(\sigma^2)}{\sigma^2 \cdot \prn{1 - 1/\gamma + \sigma^2}}.
	\end{align*}
\end{lemma}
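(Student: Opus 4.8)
The plan is to notice that $-\sigma^2$ lies strictly outside the support of $H$, so that the boundary limit in the statement is simply the ordinary real value of the Stieltjes transform at $-\sigma^2$; this value admits a closed form via the Marchenko--Pastur self-consistent equation, which we then expand to first order in $\sigma^2$.

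\emph{Reducing the limit to a real evaluation.} The law $H$ is supported on $[\lambda_-,\lambda_+]$ with $\lambda_- = (1-1/\sqrt\gamma)^2 > 0$ because $\gamma>1$. Consequently, for every $y\ge 0$ and every $s$ in the support, $\abs{s-(-\sigma^2+iy)} \ge s+\sigma^2 \ge \lambda_-+\sigma^2 > 0$, so the integrand $s\mapsto (s+\sigma^2-iy)^{-1}$ is bounded uniformly in $y\ge0$. Dominated convergence then yields
\[
  \lim_{y\to 0^+} m_H(-\sigma^2+iy) \;=\; \int \frac{1}{s+\sigma^2}\,dH(s) \;=\; m_H(-\sigma^2),
\]
a finite, strictly positive real number; equivalently, $m_H$ extends analytically across $\real\setminus[\lambda_-,\lambda_+]$ and this is its value at $-\sigma^2$.

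\emph{Closed form and expansion.} By standard random matrix theory (cf.\ Lemma~\ref{thm:MP-law}), the Stieltjes transform of the Marchenko--Pastur law with aspect ratio $1/\gamma$ satisfies $\tfrac1\gamma z\,m^2 + \prn{z+\tfrac1\gamma-1}m+1=0$. Setting $z=-\sigma^2$ gives $\tfrac{\sigma^2}{\gamma}m^2+\prn{\sigma^2+1-\tfrac1\gamma}m-1=0$; the root consistent with $m_H(-\sigma^2)>0$, after rationalizing, is
\[
  m_H(-\sigma^2) \;=\; \frac{2}{\prn{1-1/\gamma+\sigma^2} + \sqrt{\prn{1-1/\gamma+\sigma^2}^2 + 4\sigma^2/\gamma}}.
\]
(Alternatively, the same expression follows by integrating $(s+\sigma^2)^{-1}$ against the explicit density~\eqref{eq:MP-distribution} via a standard substitution or residue computation.) Writing $a\defeq 1-1/\gamma$, which is bounded away from $0$, and using $\sqrt{(a+\sigma^2)^2+4\sigma^2/\gamma} = (a+\sigma^2)\sqrt{1+O(\sigma^2)} = (a+\sigma^2)+O(\sigma^2)$ as $\sigma^2\to0^+$, the denominator above equals $2(a+\sigma^2)+O(\sigma^2)$, whence
\[
  m_H(-\sigma^2) \;=\; \frac{1+O(\sigma^2)}{a+\sigma^2} \;=\; \frac{\sigma^2+o(\sigma^2)}{\sigma^2\prn{1-1/\gamma+\sigma^2}},
\]
which is the claimed identity.

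\emph{Main obstacle.} There is no substantial difficulty; the only care required is bookkeeping. One must fix the sign convention in the Marchenko--Pastur equation so that the large-$z$ behavior $m_H(z)\sim-1/z$ and the degenerate limit ($\gamma\to\infty$, where $H\to\delta_1$ and $m_H(-\sigma^2)\to 1/(1+\sigma^2)$) come out correctly, and one must select the branch of the square root making $m_H(-\sigma^2)$ positive (which is forced since $m_H(-\sigma^2)=\int (s+\sigma^2)^{-1}\,dH(s)>0$). Everything else is dominated convergence and a first-order Taylor expansion, the latter relying only on $1-1/\gamma$ being bounded below by a positive constant.
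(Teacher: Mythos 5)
Your proof is correct and follows essentially the same route as the paper: both arrive at the closed form
\[
m_H(-\sigma^2)=\frac{\sqrt{\prn{1-1/\gamma+\sigma^2}^2+4\sigma^2/\gamma}-\prn{1-1/\gamma+\sigma^2}}{2\sigma^2/\gamma}
=\frac{2}{\prn{1-1/\gamma+\sigma^2}+\sqrt{\prn{1-1/\gamma+\sigma^2}^2+4\sigma^2/\gamma}}
\]
and then Taylor-expand in $\sigma^2$. The paper simply cites Bai--Silverstein Lemma~3.11 for the closed form, whereas you rederive it from the Marchenko--Pastur self-consistent quadratic and additionally spell out the dominated-convergence step identifying the boundary limit with the real value $m_H(-\sigma^2)$ (the paper leaves this implicit since $-\sigma^2$ lies off the support); both are minor presentation differences, not a different method.
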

We defer the proof to Appendix~\ref{proof:cost-isotropic-mid-1}. We conclude the proof of (i) by applying Lemma~\ref{lem:cost-isotropic-mid-1} to Eq.~\eqref{eq:using-stieljes-transform},
\begin{align*}
	\epsilon_\sigma^2 & = \sigma^4 \cdot \frac{\sigma^2 + o(\sigma^2)}{\sigma^2 \cdot \prn{1 - 1/\gamma + \sigma^2}} = \frac{\sigma^4}{\sigma^2 + 1 - 1/\gamma} + o(\sigma^4) \, .
\end{align*}

\paragraph{Part II: No cost below threshold.} Invoke Lemma~\ref{lem:strong-duality} and set $\rho = 0$ (when the constraint is not active)
to obtain the global minimizer for the unconstrained problem
\begin{align*}
  A(0, I)  = X^\top (XX^\top + d \sigma^2 I)^{-1}
  = (X^\top X + d \sigma^2 I)^{-1} X^\top,
\end{align*} 
so the ridge estimator $\est{\theta}_{d\sigma^2}$ is optimal in
$\mathcal{H}(0)$. Thus we must prove that $\est{\theta}_{d\sigma^2} \in
\mathcal{H}(\epsilon)$ eventually, for which it suffices to show
\begin{align*}
	\liminf_{n \to \infty}	\train_X\prn{\est{\theta}_{d \sigma^2}} = \liminf_{n \to \infty} \mathcal{T}(A(0, I); I)> \epsilon^2 \, ,
\end{align*}
where $\mathcal{T}(A; \Sigma)$ is defined in
Eq.~\eqref{eq:core-non-convex}. When $\Sigma = I$, we can compute the exact
limits in Lemma~\ref{lem:rho-error-forms} when $n \to \infty$.
\begin{lemma} \label{lem:cost-isotropic-mid-2}
  Fix $0 \le \rho < \lambda_+^{-1}$. Then with probability one
  \begin{align*}
    \lim_{n \to \infty} \prn{\mathcal{P}(A(\rho, I); I) - \mathcal{P}(A(0, I); I)} & =  \frac{\rho^2  }{\gamma} \int\frac{\sigma^4 s}{(1-\rho s)^2 (s + \sigma^2)} dH(s) , \\
    \lim_{n \to \infty}  \mathcal{T}(A(\rho,I); I)& = 	\int \frac{\sigma^4 }{(1 - \rho s)^2(s + \sigma^2)} dH(s) .
  \end{align*}
\end{lemma}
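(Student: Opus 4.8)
The plan is to reduce both quantities to integrals against the empirical spectral distribution $H_n$ of $\frac 1 d X X^\top$ and then pass to the limit via the Marchenko--Pastur theorem. Throughout one may assume $0 < \rho < \lambda_+^{-1}$, since for $\rho = 0$ both identities are immediate: the $\mathcal{P}$-difference vanishes, and the $\mathcal{T}$-limit reduces to $\int \frac{\sigma^4}{s+\sigma^2}\,dH(s) = \epsilon_\sigma^2$ already considered in Part~I.

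First I would specialize the trace formulas of Lemma~\ref{lem:rho-error-forms} to $\Sigma = I$. Since $\gamma > 1$, $XX^\top$ is invertible with probability one (its smallest eigenvalue is bounded away from $0$ by the edge law below), so the lemma applies. Writing $s_i := \lambda_i^2 / d$ for the eigenvalues of $\frac 1 d XX^\top$---equivalently, the nonzero eigenvalues of $\frac 1 d X^\top X$, which has $d-n$ further zeros---I would simultaneously diagonalize $X^\top X$, $(I - \frac{\rho}{d} X^\top X)^{-1}$, and $(X^\top X + d\sigma^2 I)^{-1}$ (all functions of $X^\top X$) to collapse each trace to a sum over $i \in [n]$ only: the zero eigenvalues contribute nothing because the $\mathcal{P}$-integrand carries a factor $s$ and the $\mathcal{T}$-integrand carries the projection $X^\top X (X^\top X)^\dagger$ onto $\mathrm{range}(X^\top X)$. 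After tracking the powers of $d$ and recalling $n/d \to 1/\gamma$, this gives, with probability one,
\begin{align*}
  \mathcal{P}(A(\rho, I); I) - \mathcal{P}(A(0, I); I)
  & = \frac{\rho^2 n}{d} \int \frac{\sigma^4 s}{(1 - \rho s)^2 (s + \sigma^2)} \, dH_n(s), \\
  \mathcal{T}(A(\rho, I); I)
  & = \int \frac{\sigma^4}{(1 - \rho s)^2 (s + \sigma^2)} \, dH_n(s),
\end{align*}
so it remains to show $\int f_\rho \, dH_n \to \int f_\rho \, dH$ and $\int g_\rho \, dH_n \to \int g_\rho \, dH$ almost surely, where $f_\rho(s) := \frac{\sigma^4}{(1 - \rho s)^2 (s + \sigma^2)}$ and $g_\rho(s) := s\, f_\rho(s)$.

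The hard part is that $f_\rho$ and $g_\rho$ are \emph{not} bounded on $[0,\infty)$---they blow up at $s = 1/\rho > \lambda_+$---so the weak convergence $H_n \Rightarrow H$ (Lemma~\ref{thm:MP-law}) does not suffice on its own, since $H_n$ can place mass slightly outside $[\lambda_-, \lambda_+]$. To handle this I would additionally invoke the edge-eigenvalue convergence accompanying the Marchenko--Pastur law: under the fourth-moment bound $\E[z_{ij}^4] \le M$ of Assumption~\ref{assp:proportional}, $\lambda_1^2/d \to \lambda_+$ and $\lambda_n^2/d \to \lambda_-$ with probability one. Fixing $\delta > 0$ small enough that $K := [\lambda_- - \delta, \lambda_+ + \delta] \subset (0, 1/\rho)$ (possible because $\rho < \lambda_+^{-1}$), on the relevant almost sure event every $s_i \in K$ for all large $n$, so $H_n$ is eventually supported in $K$, as is $H$. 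Since $f_\rho$ and $g_\rho$ are bounded and continuous on $K$, one may extend them to bounded continuous functions on $\mathbb{R}$, whence weak convergence yields the two integral limits; combining with $n/d \to 1/\gamma$ and the displays above gives the claimed formulas. The remaining bookkeeping---the diagonalization, the $d$-power tracking, and checking that the zero eigenvalues drop out---is routine.
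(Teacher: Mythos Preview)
Your proposal is correct and follows essentially the same approach as the paper: specialize Lemma~\ref{lem:rho-error-forms} to $\Sigma = I$, rewrite the resulting traces as integrals against the empirical spectral distribution $H_n$, invoke the Bai--Yin edge law to confine the support of $H_n$ to a compact subinterval of $(0,1/\rho)$ on which the integrands are bounded and continuous, and then apply the Marchenko--Pastur law. Your treatment is slightly more explicit than the paper's about why weak convergence alone is insufficient and about the extension-to-bounded-continuous-functions step, but the substance is the same.
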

Invoke Lemma~\ref{lem:cost-isotropic-mid-2} above for $\rho = 0$
to conclude that with probability one
\begin{align*}
  \lim_{n \to \infty} \mathcal{T}(A(0, I); I) &= \lim_{n \to \infty}\int \frac{\sigma^4 }{s + \sigma^2} dH_n(s)  = \int \frac{\sigma^4 }{s + \sigma^2} dH(s) = \epsilon_\sigma^2 > \epsilon^2.
\end{align*}

\paragraph{Part III: Cost of not-fitting above threshold.}
First we show for any $\epsilon \geq \epsilon_\sigma$ there exists a unique
$\rho = \rho(\epsilon) \in \openright{0}{\lambda_+^{-1}}$ that solves the
fixed point~\eqref{eq:rho-epsilon-equation}, i.e.
\begin{align*}
  \int \frac{\sigma^4}{(1 - \rho s)^2 \prn{s + \sigma^2}} dH(s) = \epsilon^2.
\end{align*}
As the left hand side is increasing in $\rho$ and when $\rho
\downarrow 0$, the integral approaches $\epsilon_\sigma^2
= \int \frac{\sigma^4}{s + \sigma^2} dH(s)$. On the other
hand, by substituting in the exact formula of $dH(s)$ in
Eq.~\eqref{eq:MP-distribution}, we see as $s \uparrow \lambda_+$,
\begin{align}
  \frac{\sigma^4}{(1 - \lambda_+^{-1} s)^2 \prn{s + \sigma^2}} dH(s) 
  & = (1 + o(1)) \frac{\gamma \lambda_+ \sigma^4 \sqrt{\lambda_+ - \lambda_-}}{
    2\pi \prn{\lambda_+ + \sigma^2}} \cdot \prn{\lambda_+ - s}^{-\frac{3}{2}} ds,
  \label{eq:integral-blows-up}
\end{align}
so that the improper integral diverges when $\rho =
\lambda_+^{-1}$. Monotone convergence then implies that the integral
approaches $\infty$ as $\rho \uparrow \lambda_+^{-1}$.

It remains to show the limiting
statement~\eqref{eq:cost-limit-above-threshold} in
part~\ref{item:cost-not-fitting} of the theorem and the growth lower bounds.
To do so, we leverage the duality calculations in
Lemma~\ref{lem:strong-duality}
to transfer between the training error $\epsilon$ and the Lagrange multiplier
$\rho$, using that to construct upper and lower bounds on $\cost_X(\epsilon)$.
By Lemma~\ref{lem:strong-duality}, the estimator
\begin{equation*}
  \est{\theta}(\rho) \defeq A(\rho, I) y
\end{equation*}
is the optimal
solution to problem~\eqref{eq:core-non-convex} when $\epsilon^2 =
\mathcal{T}(A(\wb{\rho}, I); I)$, that is, $A(\rho, I)$ solves
\begin{equation*}
  \minimize_{A \in \R^{d \times n}} ~
  \mc{P}(A; I) ~~~ \subjectto ~~
  \mc{T}(A; I) \ge \mc{T}(A(\rho, I); I).
\end{equation*}
Thus, whenever $\mc{T}(A(\rho, I); I) < \epsilon^2$ it holds that
\begin{subequations}
  \label{eqn:cost-upper-lower-rho}
  \begin{equation}
    \cost_X(\epsilon) \ge \mc{P}(A(\rho, I); I) - \mc{P}(A(0, I); I)
  \end{equation}
  while when $\mc{T}(A(\rho, I); I) > \epsilon^2$, it holds that
  \begin{equation}
    \cost_X(\epsilon) \le \mc{P}(A(\rho, I); I) - \mc{P}(A(0, I); I).
  \end{equation}
\end{subequations}
We will give matching upper and lower bounds to the
quantities~\eqref{eqn:cost-upper-lower-rho} to show the
limit~\eqref{eq:cost-limit-above-threshold}.

Let $\rho(\epsilon) \in (0, \lambda_+^{-1})$ be the $\rho$ satisfying the
fixed point~\eqref{eq:rho-epsilon-equation}, where $\rho(\epsilon) > 0$ as
$\epsilon^2 > \epsilon_\sigma$ by assumption (as otherwise $\lim_n
\cost_X(\epsilon) = 0$ by part~\ref{item:no-cost-below} of the theorem).
For any $\rho \in \openright{0}{\lambda_+^{-1}}$,
Lemma~\ref{lem:cost-isotropic-mid-2} implies
\begin{align*}
  \lim_{n \to \infty} \mathcal{T}(A(\rho, I); I)
  = \int \frac{\sigma^4 }{(1 - \rho s)^2(s + \sigma^2)} dH(s).
\end{align*}
Then $\rho > \rho(\epsilon)$ implies that
$\lim_n \mc{T}(A(\rho, I); I) > \epsilon^2$, while
$\rho < \rho(\epsilon)$ implies that
$\lim_n \mc{T}(A(\rho, I); I) < \epsilon^2$. In particular, the
inequalities~\eqref{eqn:cost-upper-lower-rho} and these limits
on $\mc{T}$ combine to give that
\begin{align*}
  \limsup_{n \to \infty} \cost_X(\epsilon)
  & \leq
  \liminf_{n \to \infty} \left[
    \mc{P}(A(\rho, I); I) - \mc{P}(A(0, I); I)\right]
\end{align*}
whenever $\rho > \rho(\epsilon)$, while if $\rho < \rho(\epsilon)$ we have
\begin{align*}
  \liminf_{n \to \infty} \cost_X(\epsilon)
  & \ge
  \limsup_{n \to \infty} \left[
    \mc{P}(A(\rho, I); I) - \mc{P}(A(0, I); I)\right].
\end{align*}
We can now apply the limiting expansion
of $\mc{P}(A(\rho)) - \mc{P}(A(0))$ in
Lemma~\ref{lem:cost-isotropic-mid-2}, which yields
that for any $0 \le \rho_0 < \rho(\epsilon) < \rho_1 < \lambda_+^{-1}$, we have
\begin{align*}
  \lefteqn{\frac{\rho_0^2  }{\gamma}
    \int\frac{\sigma^4 s}{(1 - \rho_0 s)^2 (s + \sigma^2)} dH(s)
    = \lim_{n \to \infty} \left[\mc{P}(A(\rho_0, I); I)
      - \mc{P}(A(0, I); I)\right]} \\
  & \qquad \le 
  \liminf_{n \to \infty} \cost_X(\epsilon)
  \le \limsup_{n \to \infty} \cost_X(\epsilon) \\
  & \qquad
  \le \lim_{n \to \infty} \left[\mc{P}(A(\rho_1, I); I)
    - \mc{P}(A(0, I); I)\right]
  =
  \frac{\rho_1^2  }{\gamma}
  \int\frac{\sigma^4 s}{(1 - \rho_1 s)^2 (s + \sigma^2)} dH(s)
\end{align*}
Take $\rho_1 \downarrow \rho(\epsilon)$ and $\rho_0 \uparrow \rho(\epsilon)$
to obtain the limit~\eqref{eq:cost-limit-above-threshold}.

We complete the proof of part~\ref{item:cost-not-fitting} of the theorem via
the following final lemma, which provides a linear lower bound for $\lim_{n
  \to \infty}\cost_X(\epsilon)$.

\begin{lemma} \label{lem:cost-isotropic-mid-3}
  Let $\mathsf{c} = \frac{2}{\lambda_-^2 + \sigma^2}$. If
  $\epsilon^2 \ge \mathsf{c} \sigma^4$, then
  \begin{align*}
    \lim_{n \to \infty}
    \cost_X(\epsilon) \ge \frac{(1 - 1/\sqrt{2})^2}{\lambda_+^2}
    \frac{\lambda_-}{\gamma} \cdot \epsilon^2.
  \end{align*}
\end{lemma}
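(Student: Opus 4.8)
The plan is to read the exact limiting value of $\cost_X(\epsilon)$ off part~\ref{item:cost-not-fitting} of Theorem~\ref{thm:cost-isotropic} and then bound it below by an explicit constant times $\epsilon^2$, using two crude estimates on the Marchenko--Pastur integrals. First, since $H$ is supported on $[\lambda_-,\lambda_+]$, the extra factor $s$ in the numerator of the cost integral is at least $\lambda_-$, which turns the cost integral into $\lambda_-$ times the training-error integral; by the fixed-point equation~\eqref{eq:rho-epsilon-equation} the latter is exactly $\epsilon^2$. Second, the hypothesis $\epsilon^2 \ge \mathsf{c}\sigma^4$ forces the dual multiplier $\rho = \rho(\epsilon)$ to stay bounded away from $0$, which is what produces the factor $(1-1/\sqrt{2})^2/\lambda_+^2$.

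To execute this, I would first verify that $\epsilon^2 \ge \mathsf{c}\sigma^4$ implies $\epsilon > \epsilon_\sigma$, so part~\ref{item:cost-not-fitting} is applicable. Because $H$ is a probability measure on $[\lambda_-,\lambda_+]$ and $s \mapsto (s+\sigma^2)^{-1}$ is decreasing, $\epsilon_\sigma^2 = \int \frac{\sigma^4}{s+\sigma^2}\,dH(s) \le \frac{\sigma^4}{\lambda_-+\sigma^2}$; since $\lambda_- = (1-1/\sqrt{\gamma})^2 \in (0,1)$ we have $\lambda_-^2 < \lambda_-$, hence $2\epsilon_\sigma^2 \le \frac{2\sigma^4}{\lambda_-+\sigma^2} < \frac{2\sigma^4}{\lambda_-^2+\sigma^2} = \mathsf{c}\sigma^4 \le \epsilon^2$. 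In particular $\epsilon > \epsilon_\sigma$, so part~\ref{item:cost-not-fitting} provides a unique $\rho = \rho(\epsilon) \in (0,\lambda_+^{-1})$ with $\int \frac{\sigma^4}{(1-\rho s)^2(s+\sigma^2)}\,dH(s) = \epsilon^2$ and, with probability one, $\lim_{n\to\infty}\cost_X(\epsilon) = \frac{\rho^2}{\gamma}\int \frac{\sigma^4 s}{(1-\rho s)^2(s+\sigma^2)}\,dH(s)$.

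Next I would bound $\rho$ from below by contradiction: if $\rho < (1-1/\sqrt{2})/\lambda_+$, then $\rho s \le \rho\lambda_+ < 1-1/\sqrt{2}$ for every $s$ in the support of $H$, so $(1-\rho s)^{-2} < 2$ there, and therefore $\epsilon^2 = \int \frac{\sigma^4}{(1-\rho s)^2(s+\sigma^2)}\,dH(s) < 2\int \frac{\sigma^4}{s+\sigma^2}\,dH(s) = 2\epsilon_\sigma^2 < \mathsf{c}\sigma^4 \le \epsilon^2$, which is impossible; hence $\rho^2 \ge (1-1/\sqrt{2})^2/\lambda_+^2$. Using $s \ge \lambda_-$ on the support inside the cost integral then gives, with probability one, $\lim_{n\to\infty}\cost_X(\epsilon) \ge \frac{\rho^2\lambda_-}{\gamma}\int \frac{\sigma^4}{(1-\rho s)^2(s+\sigma^2)}\,dH(s) = \frac{\rho^2\lambda_-}{\gamma}\epsilon^2 \ge \frac{(1-1/\sqrt{2})^2}{\lambda_+^2}\,\frac{\lambda_-}{\gamma}\,\epsilon^2$, which is the claim. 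The only step requiring any attention is the strict inequality $2\epsilon_\sigma^2 < \mathsf{c}\sigma^4$: this is exactly where the precise constant $\mathsf{c} = 2/(\lambda_-^2+\sigma^2)$ and the fact $\lambda_- < 1$ (so replacing $\lambda_-$ by $\lambda_-^2$ strictly shrinks the denominator) enter; everything else is a one-line bound on the Marchenko--Pastur integrals.
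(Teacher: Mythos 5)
Your proposal is correct and follows essentially the same route as the paper: both establish the lower bound by (a) applying $s\ge\lambda_-$ on the support of $H$ so that the cost integral is at least $\lambda_-$ times the training-error integral, which equals $\epsilon^2$ by the fixed-point equation~\eqref{eq:rho-epsilon-equation}, and (b) showing $\rho(\epsilon)\ge(1-1/\sqrt 2)/\lambda_+$. The only cosmetic difference is in step (b): you argue by contradiction, comparing $\epsilon^2$ against $2\epsilon_\sigma^2$ (using the strict inequality from $\lambda_-^2<\lambda_-$ to keep the constant $\mathsf{c}=2/(\lambda_-^2+\sigma^2)$), whereas the paper derives the same bound directly from the chain $\frac{1}{(1-\rho\lambda_+)^2}\ge\int\frac{\lambda_-+\sigma^2}{(1-\rho s)^2(s+\sigma^2)}\,dH(s)=\frac{\lambda_-+\sigma^2}{\sigma^4}\epsilon^2\ge 2$; the two are equivalent algebraically. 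You also make explicit the preliminary check that $\epsilon>\epsilon_\sigma$ so that part~\ref{item:cost-not-fitting} applies, which the paper leaves implicit—this is a nice addition of rigor but does not change the argument.
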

\begin{proof}
  Taking $\rho$ to solve the fixed point~\eqref{eq:rho-epsilon-equation},
  the limit~\eqref{eq:cost-limit-above-threshold} yields
  \begin{equation*}
    \lim_n \cost_X(\epsilon)
    \stackrel{\eqref{eq:cost-limit-above-threshold}}{=}
    \frac{\rho^2 \sigma^4}{\gamma} \int \frac{s}{(1 - \rho s)^2 (s + \sigma^2)}
    dH(s)
    \ge \frac{\rho^2 \lambda_-}{\gamma}
    \int \frac{\sigma^4}{(1 - \rho s)^2 (s + \sigma^2)}
    dH(s) \stackrel{\eqref{eq:rho-epsilon-equation}}{=}
    \frac{\rho^2 \lambda_-}{\gamma} \epsilon^2,
  \end{equation*}
  Thus it suffices to show that $\rho \ge \frac{1}{\lambda_+}
  (1 - 1/\sqrt{2})$.
  To see this, we leverage the following
  inequalities:
  \begin{align*}
    \frac{1}{(1 - \rho \lambda_+)^2}
    \ge \int \frac{1}{(1 - \rho s)^2} dH(s)
    \ge \int \frac{\lambda_- + \sigma^2}{(1 - \rho s)^2 (s + \sigma^2)} dH(s)
    = \frac{\lambda_- + \sigma^2}{\sigma^4} \epsilon^2
    \ge 2,
  \end{align*}
  the last inequality holding for
  $\epsilon^2 \ge \frac{2 \sigma^4}{\sigma^2 + \lambda_-}$.
  Rearranging $(1 - \rho \lambda_+)^2 \le \half$ yields
  $\rho \ge \frac{1}{\lambda_+}(1 - 1/\sqrt{2})$, which implies the
  claimed result.
\end{proof}

\section{Discussion}

By characterizing the  excess prediction error in linear regression
models as a function of constraints on training error, this paper
gives insights into the necessity---in achieving
optimal prediction risk---of memorization for learning.
Our results support the natural conclusion that interpolation is
particularly beneficial in settings with low label noise, which as we note
earlier, may include some of the most widely-used existing benchmarks for
deep learning. Even more, they suggest that---at least when the noise is
low---memorization may simply be necessary, so that a deeper understanding
of the generalization of modern machine learning algorithms may require a
careful look at more precise noise properties of the prediction problems at
hand.

In the anisotropic setting, our lower bounds on prediction error depend on the condition number of the data covariance, and thus our bounds not apply, i.e., are vacuous, in settings such as sparse covariance or kernel regression.
Extending our results to these settings is an interesting direction for future work.
Furthermore, our analysis relies heavily on the fact that both the prediction and empirical risk are quadratic in the case of least-squares regression, and thus strong duality obtains.
Proving similar results in settings such as linear binary classification, where the optimal unconstrained estimator, i.e., margin maximizing solution, is nonlinear and the risk no longer quadratic, is an exciting open problem.

\bibliography{bib}
\bibliographystyle{abbrvnat}

\newpage
\appendix

\section{Asymptotics of random matrices}
\label{section:matrix-review}

In this appendix, we review the classical results regarding singular values
of random matrices we require. Consider a triangular array of independent
and identically distributed random variables $(z_{ij}(n))_{i\in[n],
  j\in[d]}$ for $n=1,2,\cdots$ and $d:=d(n)$. We write $Z := Z(n)=
(z_{ij}(n)) \in \mathbb{R}^{n \times d}$. Throughout we assume the
proportional asymptotics $d/n \to \gamma \in (1, \infty)$, so the matrices
$Z$ have rank at most $n$.
We assume throughout that the entries $z_{ij}$  satisfy
$\E[z_{ij}] = 0$ and $\E[z_{ij}^2] = 1$.
We have the following standard
Marchenko-Pastur and Bai-Yin laws.

\begin{lemma}[Marchenko-Pastur law, \citet{BaiSi10}, Thm.~3.4]
  \label{thm:MP-law}
  Let $Z$ have singular values $\lambda_1 \ge \lambda_2 \ge \cdots \ge
  \lambda_n \ge 0$, and let $\frac{1}{d} ZZ^\top$ have spectral
  distribution with c.d.f.
  \begin{align*}
    H_n(s) \defeq
    \frac 1 n \sum_{i=1}^n \ind_{\lambda_i^2/d \leq s} .
  \end{align*}
  Then with probability one $H_n$ converges weakly to the c.d.f.\ $H$
  supported on $[\lambda_-, \lambda_+]$, with
  \begin{equation*}
    \lambda_+ \defeq \prn{1 + \frac{1}{\sqrt{\gamma}}}^2
    ~~~ \mbox{and} ~~~
    \lambda_- \defeq \prn{1 - \frac{1}{\sqrt{\gamma}}}^2,
  \end{equation*}
  and $H$ has density
  \begin{equation*} 
    dH(s) = \frac{\gamma}{2\pi} \frac{\sqrt{(\lambda_+ -s)(s - \lambda_-)}}{s} \ind_{s \in [\lambda_-, \lambda_+]} ds.
  \end{equation*}
\end{lemma}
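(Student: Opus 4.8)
This is the classical Marchenko--Pastur theorem, so the plan is to recall the Stieltjes-transform (resolvent) proof, which is the most transparent route. Write $S_n \defeq \frac{1}{d}ZZ^\top \in \real^{n\times n}$, so that $H_n$ is the empirical spectral distribution of $S_n$, and set
\[
  m_n(z) \defeq \frac{1}{n}\Tr\brk{(S_n - zI_n)^{-1}}, \qquad z \in \C_+,
\]
which is exactly the Stieltjes transform of $H_n$. By the standard equivalence between weak convergence of probability measures on $\real$ and pointwise convergence of their Stieltjes transforms on $\C_+$, it suffices to show that, with probability one, $m_n(z) \to m_H(z)$ for each fixed $z \in \C_+$, where $m_H$ is the Stieltjes transform of the claimed limit; the density formula and the support $[\lambda_-,\lambda_+]$ then fall out of the Stieltjes inversion formula applied to $m_H$.

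First I would reduce to computing $\lim_n \E m_n(z)$. Viewing $m_n(z)$ as a function of the independent rows of $Z$, replacing a single row perturbs $S_n$ by a rank-one Hermitian matrix, and such a perturbation moves $\Tr(S_n - zI)^{-1}$ by at most $1/\Im z$; hence $m_n(z)$ has the bounded-difference property with constant $O(1/(n\,\Im z))$ in each row. The Azuma--Hoeffding inequality for the associated Doob martingale, together with Borel--Cantelli, gives $\abs{m_n(z) - \E m_n(z)} \to 0$ almost surely.

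Next I would derive the self-consistent equation. Let $z_i \in \real^d$ be the $i$-th row of $Z$, let $Z_{(i)}$ be $Z$ with that row deleted, and write $G \defeq (S_n - zI)^{-1}$. The Schur-complement identity gives
\[
  G_{ii} = \prn{\tfrac{1}{d}\norm{z_i}^2 - z - \tfrac{1}{d^2}\,z_i^\top Z_{(i)}^\top\prn{\tfrac{1}{d}Z_{(i)}Z_{(i)}^\top - zI}^{-1}Z_{(i)}z_i}^{-1}.
\]
Since $z_i$ is independent of $Z_{(i)}$, the quadratic form in $z_i$ concentrates on its conditional mean, which by cyclicity of the trace and the resolvent identity $\tfrac{1}{d}Z_{(i)}Z_{(i)}^\top(\tfrac{1}{d}Z_{(i)}Z_{(i)}^\top - zI)^{-1} = I + z(\tfrac{1}{d}Z_{(i)}Z_{(i)}^\top - zI)^{-1}$ equals $\tfrac{n-1}{d} + \tfrac{z}{d}\Tr(\tfrac{1}{d}Z_{(i)}Z_{(i)}^\top - zI)^{-1}$; combined with $\tfrac{1}{d}\norm{z_i}^2\to 1$ and $d/(n-1)\to\gamma$, this forces any subsequential limit $m$ of $\E m_n$ to satisfy $m\prn{1 - \tfrac{1}{\gamma} - z - \tfrac{z}{\gamma}m} = 1$, i.e. the quadratic $\tfrac{z}{\gamma}m^2 + \prn{z + \tfrac{1}{\gamma} - 1}m + 1 = 0$; the branch with $\Im m > 0$ is unique and equals $m_H$. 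Feeding $m_H$ into $dH(s) = \tfrac{1}{\pi}\lim_{\eta\downarrow 0}\Im m_H(s + i\eta)\,ds$ recovers $dH(s) = \tfrac{\gamma}{2\pi}\tfrac{\sqrt{(\lambda_+ - s)(s - \lambda_-)}}{s}\ind_{[\lambda_-,\lambda_+]}\,ds$, the endpoints of the support being the zeros of the discriminant of the quadratic; since $\gamma > 1$ (ratio $1/\gamma < 1$), the limiting law carries no atom at the origin.

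The main obstacle is making the third step quantitative: one must control, uniformly in $i$ and $n$, (a) the deviation of the quadratic form $\tfrac{1}{d}z_i^\top B_i z_i$ from $\tfrac{1}{d}\Tr B_i$, where $B_i \defeq \tfrac{1}{d}Z_{(i)}^\top(\tfrac{1}{d}Z_{(i)}Z_{(i)}^\top - zI)^{-1}Z_{(i)}$ --- this is the only place the moment bound $\E z_{ij}^4 \le M$ enters, via truncation/centering and a Burkholder-type inequality for a martingale-difference decomposition of the quadratic form; (b) the effect of deleting one row on $\Tr(\tfrac{1}{d}Z_{(i)}Z_{(i)}^\top - zI)^{-1}$, which is $O(1/d)$ by the rank-one resolvent bound; and (c) that $\E G_{ii}$ and $\E m_n(z)$ agree up to $o(1)$. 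Everything else --- the concentration step, the algebra of the quadratic, and the Stieltjes inversion --- is routine. An alternative trading this analytic delicacy for combinatorics is the moment method: show $\E\brk{\tfrac{1}{n}\Tr S_n^k} \to \sum_{r=0}^{k-1}\tfrac{1}{r+1}\binom{k}{r}\binom{k-1}{r}\gamma^{-r}$ by enumerating closed walks on bipartite multigraphs, the leading term coming from tree-like (non-crossing) walks; a variance estimate upgrades each moment to almost-sure convergence, and determinacy of the compactly supported limit gives weak convergence --- there the graph enumeration isolating the leading term is the crux. Since all of this is standard, the statement is quoted from \citet{BaiSi10} rather than reproved.
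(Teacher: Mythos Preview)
Your proposal is a correct and self-contained outline of the standard Stieltjes-transform proof of the Marchenko--Pastur law, and the algebra you record --- the Schur-complement identity for $G_{ii}$, the trace computation yielding $\tfrac{n-1}{d} + \tfrac{z}{d}\Tr(\tfrac{1}{d}Z_{(i)}Z_{(i)}^\top - zI)^{-1}$, the limiting quadratic $\tfrac{z}{\gamma}m^2 + (z + \tfrac{1}{\gamma} - 1)m + 1 = 0$, and the recovery of $\lambda_\pm = (1 \pm 1/\sqrt{\gamma})^2$ and the density via Stieltjes inversion --- all check out for the normalization $S_n = \tfrac{1}{d}ZZ^\top$ used here.

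That said, the paper does not prove this lemma at all: it is stated in the appendix as a background result and attributed directly to \citet[Thm.~3.4]{BaiSi10}, with no argument given. You recognize this yourself in your closing sentence. So there is nothing to compare against; your sketch simply supplies what the paper deliberately omits as classical.
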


\begin{lemma}[Bai-Yin law, \citet{BaiSi10}, Thm.~5.10]  \label{thm:BY-law}
  Let the conditions of Lemma~\ref{thm:MP-law} hold, and assume additionally
  that $\sup_{ij} \Ep[z_{ij}^4] < \infty$. Then the largest and smallest
  singular values
  $\lambda_1 = \lambda_1(Z)$ and $\lambda_n = \lambda_n(Z)$ of $Z$ satisfy
  \begin{equation*}
    \frac{\lambda_1^2}{d} \cas \lambda_+
    = \prn{1 + \frac{1}{\sqrt \gamma}}^2 \, , \qquad
    \frac{\lambda_n^2}{d} \cas \lambda_-
    = \prn{1 - \frac{1}{\sqrt \gamma}}^2.
  \end{equation*}
\end{lemma}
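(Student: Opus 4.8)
This is the classical Bai--Yin theorem, and in the paper it is simply quoted from \citet[Thms.~5.10--5.11]{BaiSi10}; nevertheless, here is the route one would take to prove it. Both claims are statements about the extreme eigenvalues of the sample covariance $S_n \defeq \frac{1}{d} Z Z^\top \in \R^{n \times n}$, whose eigenvalues are exactly $\lambda_1^2/d \ge \cdots \ge \lambda_n^2/d$. By Lemma~\ref{thm:MP-law} the empirical spectral distribution $H_n$ of $S_n$ converges weakly to the Marchenko--Pastur law $H$, supported on $[\lambda_-, \lambda_+]$, almost surely. Because $H$ has strictly positive density on the interior of its support, it assigns positive mass to every neighborhood of $\lambda_-$ and of $\lambda_+$; hence the portmanteau lemma applied to the open sets $(\lambda_+ - \delta, \infty)$ and $(-\infty, \lambda_- + \delta)$ immediately gives the ``easy'' halves $\liminf_n \lambda_1^2/d \ge \lambda_+$ and $\limsup_n \lambda_n^2/d \le \lambda_-$ almost surely. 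What remains --- and what genuinely uses the fourth-moment hypothesis --- is the ``no escaping eigenvalue'' statement: $\limsup_n \lambda_1^2/d \le \lambda_+$ and $\liminf_n \lambda_n^2/d \ge \lambda_-$ almost surely.

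For the largest eigenvalue I would run the moment method. A standard truncation--recentering--renormalization step replaces $z_{ij}$ by a bounded surrogate $\tilde z_{ij} \defeq z_{ij}\ind_{|z_{ij}| \le \delta_n\sqrt d}$ (recentered and rescaled, $\delta_n \downarrow 0$ slowly); the fourth-moment bound $\E z_{ij}^4 \le M$ makes the discarded part negligible with room to spare for Borel--Cantelli, so one may assume the entries are bounded. For $k = k_n \to \infty$ (of order $\log n$, with a large constant) one then expands $\E\Tr(S_n^{k})$ as a sum over closed walks of length $2k$ alternating between row indices in $[n]$ and column indices in $[d]$; the combinatorics show that the walks contributing to leading order are the ``doubled-tree'' walks and that their total weight is $(1 + o(1))\, n \int s^k\, dH(s) \le (1 + o(1))\, n\, \lambda_+^k$, all other walk types being negligible after division by $d^k$ once the entries are bounded. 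Markov's inequality gives $\P(\lambda_1^2/d \ge \lambda_+ + \eta) \le (\lambda_+ + \eta)^{-k_n}\, \E\Tr(S_n^{k_n}) \le n\, e^{-c k_n}$ for some $c = c(\eta) > 0$ and all large $n$, which is summable in $n$ provided $k_n$ is a large enough multiple of $\log n$; Borel--Cantelli and then $\eta \downarrow 0$ along a rational sequence yield $\limsup_n \lambda_1^2/d \le \lambda_+$ almost surely.

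For the smallest eigenvalue the crucial structural point is that $\gamma > 1$ puts us at a ``soft edge'': $\lambda_- = (1 - 1/\sqrt\gamma)^2 > 0$, so $\lambda_n$ stays bounded away from $0$ and we avoid the genuinely delicate hard-edge regime $\gamma \to 1$. I would pass to the companion matrix $S_n' \defeq \frac{1}{d} Z^\top Z \in \R^{d \times d}$, which shares the nonzero spectrum of $S_n$ and has $d - n$ extra zero eigenvalues, and then apply the largest-eigenvalue moment method not to $S_n'$ itself but to a shifted matrix $\alpha I - S_n'$ with $\alpha$ chosen so that $\lambda_-$ becomes the top of the shifted spectrum; a moment estimate of exactly the previous type --- again with the fourth moment controlling truncation --- then yields $\liminf_n \lambda_n^2/d \ge \lambda_-$ almost surely. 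Together with the easy half this gives $\lambda_n^2/d \cas \lambda_-$, and likewise $\lambda_1^2/d \cas \lambda_+$.

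The main obstacle is the smallest-eigenvalue half: the shifted moment expansion is combinatorially heavier than the $\lambda_1$ computation --- signs and near-cancellations among walk weights must be tracked, and the truncation has to be carried out carefully so as not to destroy the soft edge --- which is why the paper does not reprove it but invokes \citet[Thms.~5.10--5.11]{BaiSi10} directly. The $\lambda_1$ half, by contrast, is the textbook moment-method argument sketched above.
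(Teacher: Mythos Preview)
The paper does not prove this lemma at all --- it is stated as a citation to \citet[Thm.~5.10]{BaiSi10} and used as a black box --- and you correctly flag this up front. Your sketch of the easy halves via the portmanteau lemma and of the $\lambda_1$ upper bound via truncation plus the high-moment walk expansion is the standard route and is fine.

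There is, however, a concrete slip in your $\lambda_n$ plan. You propose passing to the companion matrix $S_n' = \frac{1}{d} Z^\top Z \in \R^{d \times d}$ and then running the moment method on $\alpha I - S_n'$ so that ``$\lambda_-$ becomes the top of the shifted spectrum.'' But $S_n'$ carries $d - n$ exact zero eigenvalues, and after the shift these become $d - n$ copies of $\alpha$; since $\alpha > \alpha - \lambda_i^2/d$ for every nonzero eigenvalue, the top of the shifted spectrum is always $\alpha$ itself, with huge multiplicity, and tells you nothing about $\lambda_n$. Any high-moment trace $\Tr((\alpha I - S_n')^k)$ is dominated by $(d-n)\alpha^k$, swamping the signal you want. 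If you wish to run a shift-and-moment argument it must be on the $n \times n$ matrix $S_n$ (whose eigenvalues are a.s.\ all positive), not on $S_n'$; even then the expansion of $\Tr((\alpha I - S_n)^k)$ is a signed binomial sum of traces $\Tr(S_n^j)$, and controlling the cancellations is exactly the delicate part of the Bai--Yin proof. Your closing paragraph acknowledges this difficulty, but the specific device you name would not get off the ground as written.
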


We also consider random matrices whose rows have non-identity covariance. In
these cases, we assume a deterministic sequence of symmetric positive
definite matrices $\Sigma := \Sigma(n) \in \real^{d \times d}$.
We let $t_1 \ge t_2 \ge \cdots \ge t_d > 0$ denote the eigenvalues of
$\Sigma$ and let $T_n$ denote the associated c.d.f.\
\begin{equation*}
  T_n(s) \defeq \frac{1}{d} 
  \sum_{i=1}^d \ind_{t_i \leq s},
\end{equation*}
assuming that $T_n$ converges weakly to some c.d.f.\ $T$ on $\R_+$.
With this, we can state a limiting law for the spectral distribution
of $\frac{1}{d} Z \Sigma Z^\top$. In the statement of the lemma,
we require the \emph{Stieltjes transform} of a measure. Letting
$\C_+ \defeq \{z \in \C \mid \textup{Im}(z) > 0\}$ be those elements of $\C$
with positive imaginary part,
recall that for a measure on $\R$ with c.d.f.\ $F$, the
Stieltjes transform of $m_F : \C_+ \to \C_+$ of $F$ is
\begin{align}
  \label{eq:stieltjes-transform}
  m_F(z) \defeq \int \frac{1}{s -z} dF(s).
\end{align}
Then we have the following

\begin{lemma}[Deformed Marchenko-Pastur law, \citet{Silverstein95}]
  \label{thm:MP-deformed-law}
  Let the conditions of Lemma~\ref{thm:MP-law} and those on
  the spectral distribution $T_n$ of $\Sigma$ above hold.
  Let $\frac{1}{d} Z\Sigma Z^\top$ have spectral distribution with c.d.f.\
  \begin{align*}
    G_n(s) := \frac{1}{n} \sum_{i=1}^n \ind_{\lambda_i^2/d \leq s}.
  \end{align*}
  Then with probability one, $G_n$ converges weakly to the c.d.f.\
  $G$ whose  Stieltjes transform $m_G$
  satisfies the fixed point equation
  \begin{equation*}
    m_G(z) = - \prn{
      z - \int \frac{\tau }{1 + \tau m_G(z)/\gamma} d T(\tau)
    }^{-1}.
  \end{equation*}
\end{lemma}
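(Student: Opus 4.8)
\emph{Proof approach.} This is the classical deformed Marchenko--Pastur law of \citet{Silverstein95} (see also \citet[Chapter~4]{BaiSi10}), so the plan is to invoke that result, recalling the standard Stieltjes-transform argument and indicating where Assumptions~\ref{assp:proportional} and~\ref{assp:cov-spectral} enter. Fix $z \in \C_+$ and write $B_n \defeq \frac 1d Z \Sigma Z^\top \in \R^{n\times n}$, $Q_n(z) \defeq (B_n - z I_n)^{-1}$, and $m_n(z) \defeq m_{G_n}(z) = \frac 1n \Tr Q_n(z)$. It is convenient to pass to the companion matrix $\underline B_n \defeq \frac 1d \Sigma^{1/2} Z^\top Z \Sigma^{1/2} = \sum_{j=1}^n r_j r_j^\top \in \R^{d\times d}$ with $r_j \defeq \frac{1}{\sqrt d} \Sigma^{1/2} z_j$ and $z_j$ the $j$th row of $Z$: since $\underline B_n$ and $B_n$ have identical nonzero eigenvalues, $d\, \underline m_n(z) = n\, m_n(z) - (d-n)/z$ for $\underline m_n(z) \defeq \frac 1d \Tr(\underline B_n - z I_d)^{-1}$, and the relevant auxiliary quantity is the $\Sigma$-weighted resolvent trace $e_n(z) \defeq \frac 1d \Tr\brk{\Sigma (\underline B_n - z I_d)^{-1}}$. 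By the Bai--Yin law (Lemma~\ref{thm:BY-law}, using $\E[z_{ij}^4] \le M < \infty$ and $t_1 = 1$), the $G_n$ are a.s.\ supported on $[0, \lambda_+ + o(1)]$, hence a.s.\ tight; so by the standard continuity theorem for Stieltjes transforms it suffices to prove $m_n(z) \cas m_G(z)$ for each fixed $z \in \C_+$, where $m_G$ is the function one identifies as the Stieltjes transform of the limiting law $G$.

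\emph{Self-averaging.} We first show $m_n(z) - \E m_n(z) \cas 0$. Writing $\E_k \defeq \E[\,\cdot \mid z_1,\dots,z_k]$ and $m_n^{(k)}$ for the analogue of $m_n$ with row $z_k$ of $Z$ deleted, decompose $m_n - \E m_n = \sum_{k=1}^n (\E_k - \E_{k-1})\prn{m_n - m_n^{(k)}}$; Cauchy interlacing bounds each summand in modulus by $\frac{1}{n\,\mathrm{Im}(z)}$ deterministically, so Burkholder's inequality gives $\E\abs{m_n(z) - \E m_n(z)}^4 \le C(z)/n^2$, and Borel--Cantelli yields the claim. The same argument applies verbatim to $e_n(z)$ and to the empirical moments underlying tightness; this step needs no moment assumption beyond finiteness of the variance.

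\emph{The self-consistent equation.} With $\underline Q \defeq (\underline B_n - z I_d)^{-1}$ and $\underline Q_{(j)}$ the leave-one-out resolvent (omitting $r_j r_j^\top$), the Sherman--Morrison identity gives $\underline Q r_j = \underline Q_{(j)} r_j / (1 + r_j^\top \underline Q_{(j)} r_j)$. Two estimates drive the argument: (a) the trace lemma --- for $r_j = \frac{1}{\sqrt d} \Sigma^{1/2} z_j$ with $z_j$ independent of a matrix $M$, $\E\big|r_j^\top M r_j - \frac 1d \Tr(\Sigma M)\big|^2 \le C \normop{M}^2/d$, where $\E[z_{ij}^4]\le M$ is invoked (finiteness of the second moment alone yields the weaker $o(1)$ bound); and (b) rank-one stability --- $\abs{\Tr[(\underline Q - \underline Q_{(j)})M]} \le \normop{M}/\mathrm{Im}(z)$, while the random denominators $1 + r_j^\top \underline Q_{(j)} r_j$ are uniformly bounded below in modulus by $\mathrm{Im}(z)/\abs z$. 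Feeding (a)--(b) into the Schur-complement formula for $Q_n(z)_{ii}$, which expresses it through $r_i^\top (\sum_{k\ne i} r_k r_k^\top - z I_d)^{-1} r_i$, gives, uniformly in $i$, $Q_n(z)_{ii} = -\big(z(1 + e_n(z))\big)^{-1} + o_{\mathrm{a.s.}}(1)$, hence $m_n(z) = -\big(z(1 + e_n(z))\big)^{-1} + o_{\mathrm{a.s.}}(1)$. Expanding $(\underline B_n - z I_d)\underline Q = I$ through the same rank-one identities shows that $\underline Q(z)$ is a deterministic equivalent of $\big(\tfrac{n}{d(1 + e_n(z))}\Sigma - z I_d\big)^{-1}$, whence $e_n(z) = \frac 1d \sum_{l=1}^d t_l\big(\tfrac{n}{d(1 + e_n(z))} t_l - z\big)^{-1} + o_{\mathrm{a.s.}}(1)$. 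Letting $n \to \infty$ and using the self-averaging step, $d/n \to \gamma$, and $T_n \Rightarrow T$ (Assumption~\ref{assp:cov-spectral}), any subsequential a.s.\ limit $(m,e)$ of $(m_n,e_n)$ satisfies $m = -1/(z(1+e))$ and $e = \int \tau\big(\tfrac{\tau}{\gamma(1+e)} - z\big)^{-1} dT(\tau)$; substituting $1 + e = -1/(zm)$ into the second relation and simplifying collapses the pair to the single equation $m = -\big(z - \int \frac{\tau}{1 + \tau m/\gamma}\,dT(\tau)\big)^{-1}$. Finally, the standard argument of \citet{Silverstein95} shows this equation has a unique solution among $m:\C_+\to\C_+$ with $\abs{m(z)}\le 1/\mathrm{Im}(z)$, and that this solution is the Stieltjes transform of a probability measure $G$; uniqueness of the subsequential limit gives $m_n(z)\cas m_G(z)$ for every $z$, and with tightness this is equivalent to $G_n \Rightarrow G$ a.s.

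\emph{Main obstacle.} The crux is the third step: making the replacements $r_j^\top \underline Q_{(j)} r_j \approx e_n$ and $\underline B_n \approx \tfrac{n/d}{1+e_n}\Sigma$ (as operators precomposing $\underline Q$) rigorous \emph{uniformly in $j$}, with aggregate error $o(1)$ almost surely --- this combines the quadratic-form concentration of estimate (a), careful bookkeeping of the leave-one-out/martingale structure, and keeping the random denominators away from zero, the last using $\mathrm{Im}(z) > 0$ in an essential way. These are precisely the ingredients assembled in \citet{Silverstein95} and \citet[Chapter~4]{BaiSi10}.
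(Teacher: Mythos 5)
The paper does not prove this lemma; it simply cites Silverstein (1995) and remarks that the stated form follows by the change of variables $n/d \to 1/\gamma$ (normalizing by $1/d$ rather than $1/n$). Your sketch is a correct outline of the Stieltjes-transform argument underlying that citation --- passage to the companion matrix $\frac{1}{d}\Sigma^{1/2}Z^\top Z\Sigma^{1/2}$, martingale self-averaging via bounded rank-one increments and Burkholder, the Sherman--Morrison/Schur-complement identity $Q_n(z)_{jj} = -\big(z(1 + r_j^\top \underline{Q}_{(j)} r_j)\big)^{-1}$ combined with the quadratic-form trace lemma (which is where the bounded fourth moment of Assumption~\ref{assp:proportional} enters), the deterministic equivalent for the companion resolvent, and reduction of the pair $(m,e)$ to the single displayed fixed-point equation --- so your content matches the cited result and there is no gap.
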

\noindent
Lemma~\ref{thm:MP-deformed-law} is slightly different from the result of
\citet[Thm.~1.1]{Silverstein95}, whose original theorem holds for the
empirical spectral distributions of $\frac{1}{n} Z \Sigma Z^\top$.
Lemma~\ref{thm:MP-deformed-law} follows from the change of variables
$n = \frac{d}{\gamma}(1 + o(1))$.

\section{Proofs of identities in Theorem~\ref{thm:cost-isotropic}}
\label{proof:thechnical}

\subsection{Proof of Lemma~\ref{lem:closed-form-errors}} \label{proof:closed-form-errors}

This is essentially trivial: by definition, we can write
\begin{align*}
  \pred_X(\est{\theta})
  & = \Ep_\theta \brk{\pred_{X, \theta} (\est{\theta})}
  = \Ep_{\theta, w} \brk{\norm{(AX-I)\theta + Aw}_{\Sigma}^2 \mid X} \nonumber \\
  & = \Tr \prn{\Ep_{\theta, w} \brk{\prn{(AX-I)\theta + Aw}^\top  \Sigma \prn{(AX-I)\theta + Aw} \mid X}} \nonumber \\
  & = \Tr \prn{\Ep_\theta \brk{\Sigma (AX-I) \theta \theta^\top (AX-I)^\top  \mid X} } + \sigma^2 \Tr \prn{A^\top \Sigma A} \nonumber \\
  & =  \frac{1}{d} \norm{\Sigma^{\frac{1}{2}} \prn{AX-I}}_F^2 + \sigma^2 \norm{\Sigma^{\frac{1}{2}} A}_F^2,
\end{align*}
where in the last line we use $\Ep[\theta \theta^\top] = I_d/d$. Similarly
\begin{align*}
  \train_X(\est{\theta}) & = \Ep_\theta \brk{\train_{X, \theta}(\est{\theta})}
  = \frac{1}{n} \Ep_{\theta, w} \brk{\norm{\prn{XA - I}\prn{X \theta + w}}_2^2 \mid X} \nonumber \\
	& = \frac{1}{n} \Tr \prn{\Ep_{\theta, w} \brk{\prn{X \theta + w}^\top \prn{XA - I}^\top\prn{XA - I}\prn{X \theta + w} \mid X} } \nonumber \\
	& = \frac{1}{n} \Tr \prn{(XA - I)X \theta \theta^{\top} X^{\top}(XA - I)^\top} + \frac{\sigma^2}{n} \Tr \prn{(XA-I)(XA-I)^\top} \nonumber \\
	& = \frac{1}{nd} \norm{XAX-X}_F^2 + \frac{\sigma^2}{n} \norm{XA-I}_F^2 \, .
\end{align*}

\subsection{Proof of Lemma~\ref{lem:strong-duality}} \label{proof:strong-duality}

While problem~\eqref{eq:core-non-convex} is non-convex, it consists of a
quadratic objective and quadratic constraint, and taking $A \to \infty$
shows that there certainly exist feasible points in the interior of the set
of $A$ satisfying $\mc{T}(A; \Sigma) \ge \epsilon^2$. Thus, strong duality
holds~\citep[Appendix~B.1]{BoydVa04}. We therefore consider the Lagrangian
dual problem, introducing the dual multplier $\lambda \ge 0$ for
the constraint and writing the Lagrangian
\begin{align*}
  \mathcal{L}(A, \lambda) & = \mathcal{P}(A; \Sigma) + \lambda(\epsilon^2 - \mathcal{T}(A; \Sigma)) \nonumber \\
	& =  \frac{1}{d} \norm{\Sigma^{\frac{1}{2}} \prn{AX-I}}_F^2 + \sigma^2 \norm{\Sigma^{\frac{1}{2}}A}_F^2 - \lambda\prn{\frac{1}{nd} \norm{XAX-X}_F^2 + \frac{\sigma^2}{n} \norm{XA-I}_F^2 } + \lambda \epsilon^2.
\end{align*}

Using $\mc{L}$, we begin by demonstrating the first claim of
the lemma, that is, that if $\Sigma - \frac{\lambda}{n} X^T X \not\succ 0$,
then we have $\inf_A \mc{L}(A, \lambda) = -\infty$.
To see this, first 
let $V \in \real^{d \times r}$ be an orthogonal basis for $X$'s row space
and $V^\perp$ its orthogonal complement. Then $X V^\perp = 0$ and $\Sigma
- \frac{ \lambda}{n} X^\top X$ failing to be positive definite is equivalent to
\begin{align*}
  \begin{bmatrix} V & V^\perp \end{bmatrix}^\top \prn{\Sigma - \frac{ \lambda}{n} X^\top X} \begin{bmatrix} V & V^\perp \end{bmatrix} = \begin{bmatrix} V^\top  \prn{\Sigma - \frac{ \lambda}{n} X^\top X} V  & 0
    \\ 0 & (V^\perp)^\top \Sigma V^\perp \end{bmatrix} 
\end{align*}
failing to be positive definite. Then as $(V^\perp)^\top \Sigma V^\perp
\succ 0$ by assumption that $\Sigma \succ 0$, it must thus be the case that
$V^\top (\Sigma - \frac{ \lambda}{n} X^\top X) V \not\succ 0$. We leverage
this indefiniteness to observe that, as $V$ spans the row space of $X$, 
there exists a unit vector $\nu \in \real^d$, $\norm{\nu} = 1$, and
vector $\mu \in \R^n$ satisfying $\nu = X^\top \mu \in \real^d$ and
\begin{align}
  \alpha \defeq \nu^\top \prn{\Sigma - \frac{ \lambda}{n} X^\top X } \nu
  \le 0. \label{eq:strong-duality-mid-1}
\end{align}

To show that the non-positivity~\eqref{eq:strong-duality-mid-1} entails
$\inf_A \mc{L}(A, \lambda) = -\infty$ requires a few additional steps. We
detour by taking the gradient of the Lagrangian with respect to $A$
(this will be useful later), 
\begin{align}
  \lefteqn{\frac{\partial }{\partial A} \mathcal{L}(A,  \lambda)} \nonumber \\
  & = \frac{1}{d} \prn{\Sigma A XX^\top - \Sigma X^\top} + \sigma^2 \Sigma A - \frac{ \lambda}{n} \brc{\frac{1}{d} \prn{X^\top X A XX^\top - X^\top XX^\top} + \sigma^2 \prn{X^\top X A - X^\top}}
  \nonumber \\
  & = \frac{1}{d} \prn{\Sigma - \frac{ \lambda}{n} X^\top X} AXX^\top + \sigma^2 \prn{\Sigma - \frac{ \lambda}{n} X^\top X} A - \frac{1}{d} \prn{\Sigma - \frac{ \lambda}{n} X^\top X - \frac{ \lambda d\sigma^2}{n} I } X^\top
  \nonumber \\
  & = \frac{1}{d} \prn{\Sigma - \frac{ \lambda}{n} X^\top X} A \prn{XX^\top + d \sigma^2 I} - \frac{1}{d} \prn{\Sigma - \frac{ \lambda}{n} X^\top X - \frac{ \lambda d\sigma^2}{n} I} X^\top .
  \label{eqn:lagrangian-derivative}
\end{align}
Using the $\mu$ defining $\nu = X^\top \mu$ in
Eq.~\eqref{eq:strong-duality-mid-1}, let $t \in \R$ be unspecified and take
$A = t \nu \mu^\top$. Define the function $L(t) = \mathcal{L}(t \nu
\mu^\top, \lambda)$, for which we have
\begin{align*}
  \frac{d}{dt} L(t) & =
  \Tr \prn{\frac{\partial }{\partial A} \mc{L}( t \nu \mu^\top,  \lambda)
    (\nu \mu^\top)^\top} \nonumber \\
  & = \frac{t}{d} \nu^\top \prn{\Sigma - \frac{ \lambda}{n} X^\top X} \nu
  \mu^\top \prn{XX^\top + d \sigma^2 I} \mu -
  \frac{1}{d} \nu^\top \prn{\Sigma - \frac{ \lambda}{n} X^\top X - \frac{ \lambda d\sigma^2}{n} I} X^\top \mu
  \nonumber \\
  & \stackrel{(i)}{=}
  \frac{t}{d} \alpha \cdot (\ltwo{\nu}^2 + d \sigma^2 \ltwo{\mu}^2)
  - \frac{1}{d} \nu^\top \prn{\Sigma - \frac{ \lambda}{n} X^\top X - \frac{ \lambda d\sigma^2}{n} I} \nu \nonumber \\
  & \stackrel{(ii)}{=} \frac{t \alpha}{d} \cdot \prn{1 + d \sigma^2 \ltwo{\mu}^2}
  - \frac{\alpha}{d} + \frac{\lambda \sigma^2}{n}
\end{align*}
where step $(i)$ substitutes the definition~\eqref{eq:strong-duality-mid-1} of
$\alpha$ and that $X^\top \mu = \nu$, while step $(ii)$ similarly uses
the definition of $\alpha$ and that $\ltwo{\nu} = 1$ by assumption.
We consider two cases: if $\alpha < 0$, then taking
$t \to \infty$ yields $L'(t) \to -\infty$, so that $L(t) \to -\infty$
and $\inf_A \mc{L}(A, \lambda) = -\infty$. If $\alpha = 0$, then
$L'(t) = \frac{\lambda \sigma^2}{n} > 0$, and so taking $t \to -\infty$
yields $\mc{L}(A, \lambda) \to -\infty$ as well. As such, the optimal
$\lambda \ge 0$ must satisfy $\Sigma - \frac{\lambda}{n} X^\top X \succ 0$,
as we desired to show.

Having verified that $\Sigma - \frac{\lambda}{n} X^\top X \succ 0$, we can
use the derivative~\eqref{eqn:lagrangian-derivative} and solve for the $A$
satisfying the stationary condition $\frac{\partial }{\partial A}
\mathcal{L}(A, \lambda) = 0$, obtaining
\begin{equation*}
  \frac{1}{d} \prn{\Sigma - \frac{ \lambda}{n} X^\top X} A \prn{XX^\top + d \sigma^2 I} - \frac{1}{d} \prn{\Sigma - \frac{ \lambda}{n} X^\top X - \frac{ \lambda d\sigma^2}{n} I } X^\top = 0.
\end{equation*}
Solving this equation yields
\begin{align*}
  A & =  \prn{\Sigma - \frac{ \lambda}{n} X^\top X}^{-1}  \prn{\Sigma - \frac{ \lambda}{n} X^\top X - \frac{ \lambda d\sigma^2}{n} I} X^\top \prn{XX^\top + d \sigma^2 I}^{-1} \nonumber \\
	& = \prn{I - \frac{ \lambda d \sigma^2}{n} \prn{\Sigma - \frac{ \lambda}{n} X^\top X}^{-1}} X^\top (XX^\top + d \sigma^2 I)^{-1} \\
	& = \prn{I - \frac{ \lambda d \sigma^2}{n} \prn{\Sigma - \frac{ \lambda}{n} X^\top X}^{-1}}  (X^\top X + d \sigma^2 I)^{-1} X^\top \, . 
\end{align*}
In the last equation we use the matrix identity $ X^\top (XX^\top + d
\sigma^2 I)^{-1} = (X^\top X + d \sigma^2 I)^{-1} X^\top $, which follows
directly via the SVD of $X$. We complete the proof by identifying
$\rho_n := \frac{\lambda n}{d}$.

\subsection{Proof of Lemma~\ref{lem:rho-error-forms}} \label{proof:rho-error-forms}

The proof is essentially pure calculations. For reference, we divide the
proof into three parts.
\begin{enumerate}[label=\Roman*.]
\item We compute formulas for  $A(\rho; \Sigma)X - I$ and $XA(\rho; \Sigma)- I$.
\item Derive the expansion for $\mathcal{P}(A(\rho, \Sigma); \Sigma) -
  \mathcal{P}(A(0, \Sigma); \Sigma)$.
\item Derive the expansion for for $\mathcal{T}(A(\rho, \Sigma); \Sigma) $. 
\end{enumerate}
Throughout we write $A(\rho) = A(\rho; \Sigma)$ for simplicity.

\paragraph{Part I: Computing $A(\rho)X - I$ and $XA(\rho)- I$. }
We first substitute expression~\eqref{eq:A-1} for $A(\rho)$ into the
difference $A(\rho)X-I$ to obtain
\begin{align}
  \lefteqn{A(\rho)X - I} \nonumber \\
  & =  \prn{I - \rho \sigma^2 \prn{\Sigma - \frac{\rho}{d} X^\top X}^{-1}}  (X^\top X + d \sigma^2 I)^{-1} X^\top X - I \nonumber \\
	& = - \rho \sigma^2 \prn{\Sigma - \frac{\rho}{d} X^\top X}^{-1}  (X^\top X + d \sigma^2 I)^{-1} X^\top X +  (X^\top X + d \sigma^2 I)^{-1} \prn{X^\top X  - X^\top X - d \sigma^2 I} \nonumber  \\
	& \stackrel{(i)}{=} - \rho \sigma^2 \prn{\Sigma - \frac{\rho}{d} X^\top X}^{-1} X^\top X (X^\top X + d \sigma^2 I)^{-1}  - d\sigma^2 (X^\top X + d \sigma^2 I)^{-1} \nonumber \\
	& = \brc{- \rho \sigma^2 \prn{\Sigma - \frac{\rho}{d} X^\top X}^{-1} X^\top X - d\sigma^2 \prn{\Sigma - \frac{\rho}{d} X^\top X}^{-1}  \prn{\Sigma - \frac{\rho}{d} X^\top X}} (X^\top X + d \sigma^2 I)^{-1} \nonumber \\
	& = - \sigma^2 \prn{\Sigma - \frac{\rho}{d} X^\top X}^{-1} \brc{\rho X^\top X + d \prn{\Sigma - \frac \rho d X^\top X}}  (X^\top X + d \sigma^2 I)^{-1} \nonumber \\
	& = - d \sigma^2 \prn{\Sigma - \frac{\rho}{d} X^\top X}^{-1} \Sigma  (X^\top X + d \sigma^2 I)^{-1} \, , \label{eq:AX-minus-I}
\end{align}
where in step~$(i)$ we use that $X^\top X$ and $(X^\top X + d\sigma^2 I)^{-1}$
commute. Similarly, we can compute $XA(\rho)-I$ by using the alternative
formulation~\eqref{eq:A-2} for $A(\rho)$, substituting to obtain
\begin{align*}
  \lefteqn{XA(\rho) - I} \\
  & = X \prn{ I - \rho \sigma^2 \prn{\Sigma - \frac{\rho}{d} X^\top X}^{-1}}  X^\top (X X^\top + d \sigma^2 I)^{-1}   - I  \nonumber  \\
	& = - \rho \sigma^2 X \prn{\Sigma - \frac{\rho}{d} X^\top X}^{-1} X^\top (X X^\top + d \sigma^2 I)^{-1} + \prn{ X  X^\top - X X^\top - d \sigma^2 I } (X X^\top + d \sigma^2 I)^{-1}  \nonumber \\
	& = - \rho \sigma^2 X \prn{\Sigma - \frac{\rho}{d} X^\top X}^{-1} X^\top (X X^\top + d \sigma^2 I)^{-1} - d\sigma^2 (X X^\top + d \sigma^2 I)^{-1}  \nonumber \\
	& = - d\sigma^2 \brc{\frac \rho d X \prn{\Sigma - \frac \rho d X^\top X}^{-1} X^\top + I} \prn{XX^\top + d\sigma^2 I}^{-1} \, .
\end{align*}
As $X$ is wide and $XX^\top$ is non-singular by assumption, $\lim_{\lambda
  \downarrow 0} X(X^\top X+ \lambda I)^{-1} X^\top = I$ and therefore
\begin{align}
  \lefteqn{XA(\rho) - I} \nonumber \\
  & = - d\sigma^2 \brc{\frac \rho d X \prn{\Sigma - \frac \rho d X^\top X}^{-1} X^\top + \lim_{\lambda \downarrow 0} X(X^\top X+ \lambda I)^{-1} X^\top } \prn{XX^\top + d\sigma^2 I}^{-1} \nonumber \\
  & = - \lim_{\lambda \downarrow 0}
  d\sigma^2 X   \brc{ \prn{\frac d \rho \Sigma - X^\top X}^{-1} + \prn{X^\top X + \lambda I}^{-1} } \cdot X^\top \prn{XX^\top + d\sigma^2 I}^{-1} \nonumber \\
  & = - \lim_{\lambda \downarrow 0}
  d\sigma^2 X \cdot   \brc{ \prn{\frac d \rho \Sigma - X^\top X}^{-1} \prn{\lambda I + \frac d \rho \Sigma} \prn{X^\top X + \lambda I}^{-1}  } X^\top \cdot \prn{XX^\top + d\sigma^2 I}^{-1} \nonumber \\
  & \stackrel{(i)}{=} - \lim_{\lambda \downarrow 0} d\sigma^2 X \cdot  \brc{ \prn{\frac d \rho \Sigma - X^\top X}^{-1} \prn{\lambda I + \frac d \rho \Sigma} X^\top \prn{X X^\top + \lambda I}^{-1}  } \cdot \prn{XX^\top + d\sigma^2 I}^{-1} \nonumber \\
  & = - d\sigma^2 X \prn{\Sigma - \frac \rho d X^\top X}^{-1} \Sigma X^\top \prn{XX^\top}^{-1} \prn{XX^\top + d\sigma^2 I}^{-1} \, ,\label{eq:XA-minus-I}
\end{align}
where in step $(i)$ we use that $\prn{X^\top X + \lambda I}^{-1} X^\top =  X^\top \prn{X X^\top + \lambda I}^{-1} $.

\paragraph{Part II: Computing $\mathcal{P}(A(\rho, \Sigma); \Sigma) - \mathcal{P}(A(0, \Sigma); \Sigma)$.}
As
\begin{align}
  &\mathcal{P}(A(\rho, \Sigma); \Sigma) - \mathcal{P}(A(0, \Sigma); \Sigma)
  \label{eq:p-error-growth-mid-1} \\
	& = \frac{1}{d} \norm{\Sigma^{\frac{1}{2}} \prn{A(\rho)X-I}}_F^2 + \sigma^2 \norm{\Sigma^{\frac{1}{2}}A(\rho)}_F^2 - \frac{1}{d} \norm{\Sigma^{\frac{1}{2}} \prn{A(0)X-I}}_F^2 - \sigma^2 \norm{\Sigma^{\frac{1}{2}}A(0)}_F^2 \nonumber \\
	& =  \underbrace{\frac{1}{d}\prn{\norm{\Sigma^{\frac{1}{2}} \prn{A(\rho)X-I}}_F^2 - \norm{\Sigma^{\frac{1}{2}} \prn{A(0)X-I}}_F^2}}_{\mathrm{(I)}} + \underbrace{\sigma^2  \prn{\norm{\Sigma^{\frac{1}{2}}A(\rho)}_F^2 - \norm{\Sigma^{\frac{1}{2}}A(0)}_F^2}}_{\mathrm{(II)}}, \nonumber
\end{align}
we compute terms (I) and (II) separately. For (I) we substitute in the
explicit form~\eqref{eq:AX-minus-I} of $A(\rho)X-I$  to obtain
\begin{align*}
  \mathrm{(I)} & =  d \sigma^4 \Tr \prn{ \Sigma \prn{\Sigma - \frac{\rho}{d} X^\top X}^{-1} \Sigma \prn{X^\top X + d\sigma^2 I}^{-2}  \Sigma  \prn{\Sigma - \frac{\rho}{d} X^\top X}^{-1}} -  d \sigma^4 \Tr \prn{ \Sigma \prn{X^\top X + d\sigma^2 I}^{-2} }.
\end{align*}
We then use the identity
\begin{align*}
  \prn{\Sigma - \frac \rho d X^\top X}^{-1} \Sigma = I + \prn{\Sigma - \frac \rho d X^\top X}^{-1} \cdot \frac \rho d X^\top X
\end{align*}
to obtain further that
\begin{align}
	\mathrm{(I)} & = d \sigma^4 \Tr \prn{ \Sigma \prn{I + \prn{\Sigma - \frac \rho d X^\top X}^{-1} \cdot \frac \rho d X^\top X} \prn{X^\top X + d\sigma^2 I}^{-2} \prn{I + \frac \rho d X^\top X \cdot \prn{\Sigma - \frac \rho d X^\top X}^{-1} }} \nonumber \\
	& \qquad - d \sigma^4 \Tr \prn{ \Sigma \prn{X^\top X + d\sigma^2 I}^{-2} } \nonumber \\
	& =  d \sigma^4 \Tr \prn{ \Sigma  \prn{\Sigma - \frac \rho d X^\top X}^{-1} \cdot \frac \rho d X^\top X \prn{X^\top X + d\sigma^2 I}^{-2}} \nonumber \\
	& \qquad  + d \sigma^4 \Tr \prn{ \Sigma \prn{X^\top X + d\sigma^2 I}^{-2} \frac \rho d X^\top X \cdot \prn{\Sigma - \frac \rho d X^\top X}^{-1}  }  \nonumber \\
	& \qquad + d \sigma^4 \Tr \prn{ \Sigma \prn{\Sigma - \frac \rho d X^\top X}^{-1} \cdot \frac \rho d X^\top X \prn{X^\top X + d\sigma^2 I}^{-2} \frac \rho d X^\top X \cdot \prn{\Sigma - \frac \rho d X^\top X}^{-1}  }  \nonumber \\
	& =  \rho \sigma^4 \Tr \prn{ \Sigma  \prn{\Sigma - \frac \rho d X^\top X}^{-1} X^\top X \prn{X^\top X + d\sigma^2 I}^{-2}}
        \label{eq:p-error-growth-mid-3} \\
	& \qquad  + \rho \sigma^4 \Tr \prn{ \Sigma \prn{X^\top X + d\sigma^2 I}^{-2}  X^\top X   \prn{\Sigma - \frac \rho d X^\top X}^{-1}  }  \nonumber \\
	& \qquad + \frac{\rho^2 \sigma^4}{d} \Tr \prn{ \Sigma \prn{\Sigma - \frac \rho d X^\top X}^{-1} X^\top X \prn{X^\top X + d\sigma^2 I}^{-2}  X^\top X \prn{\Sigma - \frac \rho d X^\top X}^{-1}  }. \nonumber
\end{align}
For term (II), we substitute in formula~\eqref{eq:A-2} for $A(\rho)$ and use
that $X^\top X$ and $(X^\top X + d\sigma^2 I)^{-1}$ commute, yielding
that
\begin{align*}
  \mathrm{(II)} & =  \sigma^2 \Tr \prn{\Sigma \prn{I - \rho \sigma^2 \prn{\Sigma - \frac{\rho}{d} X^\top X}^{-1}}  X^\top X (X^\top X + d \sigma^2 I)^{-2}   \prn{I - \rho \sigma^2 \prn{\Sigma - \frac{\rho}{d} X^\top X}^{-1}}} \nonumber \\
  & \qquad - \sigma^2  \Tr \prn{\Sigma X^\top X (X^\top X + d \sigma^2 I)^{-2}} \nonumber \\
  & = -\rho \sigma^4 \Tr \prn{ \Sigma  \prn{\Sigma - \frac \rho d X^\top X}^{-1} X^\top X \prn{X^\top X + d\sigma^2 I}^{-2}} \nonumber \\
  & \qquad  - \rho \sigma^4 \Tr \prn{ \Sigma \prn{X^\top X + d\sigma^2 I}^{-2}  X^\top X   \prn{\Sigma - \frac \rho d X^\top X}^{-1}  }  \nonumber \\
  & \qquad + \rho^2 \sigma^6 \Tr \prn{ \Sigma \prn{\Sigma - \frac \rho d X^\top X}^{-1} X^\top X \prn{X^\top X + d\sigma^2 I}^{-2} \prn{\Sigma - \frac \rho d X^\top X}^{-1}  } .
\end{align*}
Substituting the equality~\eqref{eq:p-error-growth-mid-3} for term (I) and
the above identity for term (II) back into
the expansion~\eqref{eq:p-error-growth-mid-1} of
$\mc{P}(A(\rho, \Sigma); \Sigma) - \mc{P}(A(0,\Sigma);\Sigma)$,
we get our desired expansion:
\begin{align*}
  \lefteqn{\mathcal{P}(A(\rho, \Sigma); \Sigma) - \mathcal{P}(A(0, \Sigma); \Sigma)} \\
	& =\frac{\rho^2 \sigma^4}{d} \Tr \prn{ \Sigma \prn{\Sigma - \frac \rho d X^\top X}^{-1} X^\top X \prn{X^\top X + d\sigma^2 I}^{-2}  X^\top X \prn{\Sigma - \frac \rho d X^\top X}^{-1}  } \nonumber \\
	& \qquad + \rho^2 \sigma^6 \Tr \prn{ \Sigma \prn{\Sigma - \frac \rho d X^\top X}^{-1} X^\top X \prn{X^\top X + d\sigma^2 I}^{-2} \prn{\Sigma - \frac \rho d X^\top X}^{-1}  } \nonumber \\
	& = \frac{\rho^2 \sigma^4}{d} \Tr \prn{ \Sigma \prn{\Sigma - \frac \rho d X^\top X}^{-1} X^\top X \prn{X^\top X + d\sigma^2 I}^{-2}  \prn{X^\top X + d\sigma^2 I} \prn{\Sigma - \frac \rho d X^\top X}^{-1}  } \nonumber \\
	& = \frac{\rho^2  \sigma^4}{d} \Tr \prn{\prn{\Sigma - \frac \rho d X^\top X}^{-1}\Sigma \prn{\Sigma - \frac \rho d X^\top X}^{-1} X^\top X  \prn{X^\top X + d \sigma^2 I}^{-1} } \, .
\end{align*}

\paragraph{Part III: Computing $\mathcal{T}(A(\rho, \Sigma); \Sigma) $.}
Leveraging the expansion
\begin{align*}
	\mathcal{T}(A(\rho, \Sigma); \Sigma) & = \frac{1}{nd} \norm{XA(\rho)X-X}_F^2 + \frac{\sigma^2}{n} \norm{XA(\rho)-I}_F^2 \nonumber \\
	& =  \frac{1}{nd} \Tr \prn{(XA(\rho)-I)XX^\top (XA(\rho)-I)^\top} +  \frac{\sigma^2}{n} \Tr \prn{(XA(\rho)-I)(XA(\rho)-I)^\top} \nonumber \\
	& = \frac{1}{nd} \Tr \prn{(XA(\rho)-I) \prn{XX^\top + d\sigma^2 I} (XA(\rho)-I)^\top} \, ,
\end{align*}
we can substitute the expression~\eqref{eq:XA-minus-I} for $X A(\rho) - I$
to obtain
\begin{align*} 
  \lefteqn{\mathcal{T}(A(\rho, \Sigma); \Sigma) =}  \nonumber \\
  & \frac{d \sigma^4}{n} \Tr \prn{  X \prn{\Sigma - \frac \rho d X^\top X}^{-1} \Sigma X^\top \prn{XX^\top}^{-1} \prn{XX^\top + d\sigma^2 I}^{-1} \prn{XX^\top}^{-1}X \Sigma  \prn{\Sigma - \frac \rho d X^\top X}^{-1} X^\top }.
\end{align*}
Leveraging the identity $X^\top (XX^\top + d \sigma^2 I)^{-1} = (X^\top X + d\sigma^2 I)^{-1} X^\top$ and that $(XX^\top)^{-1}$ and $(XX^\top + \lambda I)^{-1}$
commute, we have
\begin{align*}
  X^\top (XX^\top)^{-1} (XX^\top + d\sigma^2 I)^{-1}
  (XX^\top)^{-1}X
  & = X^\top (XX^\top)^{-2} X (X^\top X + d \sigma^2 I)^{-1}  \\
  & = (X^\top X)^\dagger (X^\top X + d\sigma^2 I)^{-1}.
\end{align*}
Substituting this into the preceding display gives
\begin{align*}
  \lefteqn{\mc{T}(A(\rho, \Sigma); \Sigma)} \\
  & = \frac{d \sigma^4}{n}
  \Tr \prn{X \prn{\Sigma - \frac{\rho}{d} X^\top X}^{-1}
    \Sigma (X^\top X)^\dagger (X^\top X + d \sigma^2 I)^{-1}
    \Sigma \prn{\Sigma - \frac{\rho}{d} X^\top X}^{-1} X^\top} \\
  & = \frac{d \sigma^4}{n} \Tr \prn{\Sigma \prn{\Sigma - \frac \rho d X^\top X}^{-1} X^\top X \prn{\Sigma - \frac \rho d X^\top X}^{-1} \Sigma \prn{X^\top X}^\dagger \prn{X^\top X + d\sigma^2 I}^{-1} } 
\end{align*}
by the cyclic property of the trace, as desired.



\subsection{Proof of Lemma~\ref{lem:cost-isotropic-mid-1}} \label{proof:cost-isotropic-mid-1}
By \citet[Lemma~3.11]{BaiSi10} we can exactly compute
\begin{align*}
	\lim_{y \to 0^+} m_H(-\sigma^2 + i y) & = \frac{1 - 1/\gamma + \sigma^2 - \sqrt{\prn{1 + 1 / \gamma + \sigma^2 }^2 - 4/\gamma}}{- 2 \sigma^2/\gamma} \nonumber \\
	& =  \frac{ \sqrt{\prn{1 - 1 / \gamma + \sigma^2 }^2 + 4 \sigma^2/\gamma} - \prn{1 - 1 / \gamma + \sigma^2 }}{2 \sigma^2/\gamma} \nonumber \\
	& = \frac{2\sigma^2 / \gamma + o(\sigma^2/\gamma)}{2 \sigma^2/\gamma \cdot \prn{1 - 1/\gamma + \sigma^2}} \, ,
\end{align*}
completing the proof.

\subsection{Proof of Lemma~\ref{lem:cost-isotropic-mid-2}}
\label{proof:cost-isotropic-mid-2}

As the Bai-Yin law (Lemma~\ref{thm:BY-law}) guarantees the convergence of
the smallest eigenvalue of $\frac{1}{n} XX^\top$ and $XX^\top$ is eventually
non-singular, we can invoke the identities on the prediction and training
error in Lemma~\ref{lem:rho-error-forms}. Therefore
\begin{align*}
  \mathcal{P}(A(\rho, I); I) - \mathcal{P}(A(0, I); I) & = \frac{\rho^2  \sigma^4}{d} \Tr \prn{\prn{I - \frac{\rho}{d} X^\top X}^{-2}X^\top X  \prn{X^\top X + d \sigma^2 I}^{-1} } \nonumber \\
  & = \frac{\rho^2  \sigma^4}{d /n}  \cdot \frac{1}{n} \sum_{i=1}^n \frac{1}{\prn{1 - \rho\lambda_i^2/d}^2} \cdot \frac{\lambda_i^2}{d} \cdot \frac{1}{\lambda_i^2/d + \sigma^2}  \nonumber \\
  & = \frac{\rho^2  }{d /n} \int\frac{\sigma^4 s}{(1-\rho s)^2 (s + \sigma^2)} dH_n(s).
\end{align*}
By the assumption that $\rho < \lambda_+^{-1}$, the Bai-Yin law
(Lemma~\ref{thm:BY-law}) guarantees that $I - \frac{\rho}{d} XX^\top$ is
eventually positive definite and with probability one $\lambda_1^2/ d \to
\lambda_+$.  The function $s \mapsto \frac{\sigma^4 s}{(1 - \rho s)^2(s +
  \sigma^2)}$ is thus eventually bounded on the support of
$H_n$. Applying the  Marchenko-Pastur law, we
deduce
\begin{equation*}
  \lim_{n \to \infty} \prn{\mathcal{P}(A(\rho, I); I) - \mathcal{P}(A(0, I); I)}  = \frac{\rho^2  }{\gamma} \int\frac{\sigma^4 s}{(1-\rho s)^2 (s + \sigma^2)} dH(s).
\end{equation*}

For the second limit in Lemma~\ref{lem:cost-isotropic-mid-2}, we can
again leverage $\Sigma = I$ in Lemma~\ref{lem:rho-error-forms}
to compute
\begin{align*}
	\mathcal{T}(A(\rho, I); I) & =  \frac{d \sigma^4}{n} \Tr \prn{\prn{I - \frac{\rho}{d} X^\top X}^{-1} X^\top X \prn{I - \frac{\rho}{d} X^\top X}^{-1}  \prn{X^\top X}^\dagger \prn{X^\top X + d\sigma^2 I}^{-1} } \nonumber \\
	& =  \frac{ \sigma^4}{n} \Tr \prn{\prn{I - \frac{\rho}{d} X^\top X}^{-1} \frac{X^\top X}{d} \prn{I - \frac{\rho}{d} X^\top X}^{-1}  \prn{\frac{X^\top X}{d}}^\dagger \prn{\frac{X^\top X}{d} + \sigma^2 I}^{-1} }  \nonumber \\
	& = \sigma^4  \cdot \frac{1}{n} \sum_{i=1}^n \frac{1}{1 - \rho\lambda_i^2/d} \cdot \frac{\lambda_i^2}{d} \cdot \frac{1}{1 - \rho\lambda_i^2/d} \cdot  \frac{1}{\lambda_i^2/d} \cdot \frac{1}{\lambda_i^2/d + \sigma^2}  \nonumber \\
	& = \int \frac{\sigma^4 }{(1 - \rho s)^2(s + \sigma^2)} dH_n(s).
\end{align*}
Applying the Marchenko-Pastur law gives the desired limit.

\section{Proof of Theorem~\ref{thm:cost-interpolator-isotropic}}
\label{proof:cost-interpolator-isotropic}
  
We only need to prove under Assumption~\ref{assp:linear} thanks to Theorem~\ref{thm:strong-duality-hilbert}. First, we recall our standard notation that $X$ has singular values $\lambda_1 \geq
\lambda_2 \geq \cdots \geq \lambda_n \ge 0$ and empirical spectral
c.d.f.\ $H_n(s) = \frac{1}{n} \sum_{i = 1}^n \ind_{\lambda_i^2/d \le s}$.
We first prove (most of) part~\ref{item:relative-costs-ols} of the theorem,
which we state as a lemma. It is immediate by the
definitions~\eqref{eq:def-cost} and~\eqref{eq:def-cost-interpolator} of
$\cost$ and $\wb{\cost}$ that $\cost_X(\epsilon) - \wb{\cost}_X(\epsilon) =
\pred_X (\thetaols) - \pred_X (\est{\theta}(0))$, so we focus on the latter
quantity.
\begin{lemma}
  \label{lemma:cost-difference-ols}
  With probability 1
  \begin{equation*}
    \lim_{n \to \infty} \prn{\pred_X \prn{\thetaols} - \pred_X \prn{\est{\theta}(0)}}
    = \frac{\sigma^2}{\gamma} \prn{\int \frac{1}{s} dH(s)
      - \int \frac{1}{s + \sigma^2} dH(s)}.
  \end{equation*}
\end{lemma}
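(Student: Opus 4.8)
The plan is to evaluate both prediction errors in closed form via the isotropic objective $\mathcal{P}(A) = \frac{1}{d} \norm{AX-I}_F^2 + \sigma^2 \norm{A}_F^2$ appearing in problem~\eqref{eqn:linear-not-fitting}, to reduce each to a single trace against the empirical spectrum of $\frac{1}{d} X^\top X$, and then to pass to the limit using the Bai--Yin and Marchenko--Pastur laws. Note first that $\est{\theta}(0) = A(0, I) y$ is exactly the ridge estimator $\est{\theta}_{d\sigma^2} = (X^\top X + d\sigma^2 I)^{-1} X^\top y$.

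I would begin with the minimum-norm interpolant. Writing $\thetaols = A_{\mathsf{ols}} y$ with $A_{\mathsf{ols}} = X^\top (XX^\top)^{-1}$, the matrix $A_{\mathsf{ols}} X$ is the orthogonal projection onto the $n$-dimensional row space of $X$ (using that $XX^\top$ is nonsingular, which holds eventually with probability one by the Bai--Yin law, Lemma~\ref{thm:BY-law}), so that $\norm{A_{\mathsf{ols}} X - I}_F^2 = d - n$ and $\norm{A_{\mathsf{ols}}}_F^2 = \Tr\prn{(XX^\top)^{-1}}$. Rewriting in terms of the eigenvalues of $\frac{1}{d} XX^\top$ gives $\pred_X(\thetaols) = \frac{d-n}{d} + \sigma^2 \frac{n}{d} \int \frac{1}{s}\, dH_n(s)$. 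For the ridge estimator, specializing Eq.~\eqref{eq:AX-minus-I} at $\Sigma = I$ and $\rho = 0$ gives $A(0,I) X - I = -d\sigma^2 (X^\top X + d\sigma^2 I)^{-1}$, and since $X^\top X$ commutes with $(X^\top X + d\sigma^2 I)^{-1}$ the two Frobenius terms collapse:
\[
\pred_X(\est{\theta}(0)) = d\sigma^4 \Tr\prn{(X^\top X + d\sigma^2 I)^{-2}} + \sigma^2 \Tr\prn{X^\top X (X^\top X + d\sigma^2 I)^{-2}} = \sigma^2 \Tr\prn{(X^\top X + d\sigma^2 I)^{-1}} .
\]
Separating off the $d - n$ zero eigenvalues of $X^\top X$ yields $\pred_X(\est{\theta}(0)) = \frac{d-n}{d} + \sigma^2 \frac{n}{d} \int \frac{1}{s + \sigma^2}\, dH_n(s)$.

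Subtracting, the $\frac{d-n}{d}$ terms cancel and I am left with $\pred_X(\thetaols) - \pred_X(\est{\theta}(0)) = \sigma^2 \frac{n}{d} \prn{\int \frac{1}{s}\, dH_n(s) - \int \frac{1}{s + \sigma^2}\, dH_n(s)}$. By the Bai--Yin law the support of $H_n$ is, with probability one, eventually contained in a fixed compact subset of $(0, \infty)$ (the smallest eigenvalue of $\frac{1}{d} X^\top X$ converging to $\lambda_- = (1 - 1/\sqrt{\gamma})^2 > 0$), so $s \mapsto \frac{1}{s}$ and $s \mapsto \frac{1}{s + \sigma^2}$ are bounded and continuous on it; combining the weak convergence $H_n \Rightarrow H$ (Lemma~\ref{thm:MP-law}) with $n/d \to 1/\gamma$ then gives the claimed limit $\frac{\sigma^2}{\gamma} \prn{\int \frac{1}{s}\, dH(s) - \int \frac{1}{s + \sigma^2}\, dH(s)}$ with probability one.

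The only step that requires real care is the integrability of $s \mapsto 1/s$: it is the Bai--Yin law, rather than Marchenko--Pastur alone, that prevents the empirical spectrum from accumulating near $0$ and hence licenses exchanging the limit and the integral. Everything else is routine trace algebra; the closed-form evaluation $\int \frac{1}{s}\, dH(s) = (1 - 1/\gamma)^{-1}$ together with the $\sigma^2$-expansion of $\int \frac{1}{s + \sigma^2}\, dH(s)$ (obtained from the Stieltjes transform as in Lemma~\ref{lem:cost-isotropic-mid-1}) then recovers the $\frac{\sigma^4}{\gamma (1 - 1/\gamma)^3} + o(\sigma^4)$ form quoted in part~\ref{item:relative-costs-ols}.
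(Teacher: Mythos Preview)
Your proof is correct and follows essentially the same approach as the paper: compute $\mathcal{P}(\Aols;I)$ and $\mathcal{P}(A(0,I);I)$ as traces, subtract, and pass to the limit via the Bai--Yin and Marchenko--Pastur laws. Your computation of the ridge term is in fact slightly cleaner than the paper's---by working with $X^\top X$ and collapsing the two Frobenius pieces into $\sigma^2\Tr\prn{(X^\top X + d\sigma^2 I)^{-1}}$ before splitting off the $d-n$ zero eigenvalues, you arrive at the cancelling $\frac{d-n}{d}$ terms directly, whereas the paper manipulates $XX^\top$ and only sees the cancellation after an additional algebraic regrouping.
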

\begin{proof}
  By the Bai-Yin law (Lemma~\ref{thm:BY-law}) we may assume that
  $XX^\top \succ 0$, as this eventually holds with probability 1.
  Let $\est{\theta}(0) = A(0, I) y$ for
  $A(0, I) = (X^\top X + d \sigma^2 I)^{-1} X^\top$ be the optimal
  unconstrained estimator (recall Lemma~\ref{lem:strong-duality}) and
  $\thetaols = \Aols y$ for $\Aols = X^\top(X X^\top)^{-1} = X^\dagger$.
  Then
  \begin{equation}
    \label{eqn:cost-difference-ols}
    \pred_X \prn{\est{\theta}_{\mathsf{OLS}}} - \pred_X \prn{\est{\theta}(0)}
    = \mathcal{P}(\Aols; I)  - \mathcal{P}(A(0, I); I).
  \end{equation}
  We expand each of the
  prediction errors above in turn.

  For the first, we have the identity
  \begin{align*}
    \lefteqn{\mc{P}(\Aols; I)
      = \frac{1}{d}
      \norm{\Aols X-I}_F^2 + \sigma^2 \norm{\Aols}_F^2} \\
    & = \frac{1}{d}
    \Tr \prn{\prn{X^\top (X X^\top)^{-1} X - I_d}^2}
    + \sigma^2 \Tr \prn{X^\top (XX^\top)^{-2} X}
    = \frac{d - n}{d} + \sigma^2 \Tr\prn{(XX^\top)^{-1}},
  \end{align*}
  where we have used that $X^\top (XX^\top)^{-1} X - I_d$ is a projection matrix
  of rank $d - n$.
  For the second,
  \begin{align*}
    \lefteqn{\mc{P}(A(0, I); I)
      = 
      \frac{1}{d} \norm{A(0, I)X-I}_F^2 + \sigma^2 \norm{A(0, I)}_F^2} \\
    & =
    \frac{1}{d} \Tr \prn{\prn{X^\top (X X^\top + d\sigma^2 I)^{-1} X - I}^2}
    + \sigma^2 \Tr \prn{X^\top \prn{XX^\top + d\sigma^2 I}^{-2} X} \\
    & \stackrel{(i)}{=} 1 + \frac{1}{d} \Tr\prn{
      (X X^\top)^2 (XX^\top + d \sigma^2 I)^{-2}
      - 2 XX^\top (XX^\top + d \sigma^2 I)^{-1}}
    + \sigma^2 \Tr\prn{XX^\top (XX^\top + d \sigma^2 I)^{-2}} \\
    & = 1 + \frac{1}{d}
    \Tr\prn{XX^\top \prn{XX^\top - 2 (XX^\top + d \sigma^2 I)
        + d \sigma^2 I} \prn{XX^\top + d \sigma^2 I}^{-2}} \\
    & = 1 + \frac{1}{d} \Tr\prn{XX^\top \prn{XX^\top + d \sigma^2 I}^{-1}},
  \end{align*}
  where in step~$(i)$ we use that $XX^\top$ and $(XX^\top + d \sigma^2 I)^{-1}$
  commute and the cyclic property of the trace.
  Substituting these equalities into expression~\eqref{eqn:cost-difference-ols}
  yields
  \begin{equation*}
    \pred_X \prn{\est{\theta}_{\mathsf{OLS}}} - \pred_X \prn{\est{\theta}(0)}
    = -\frac{n}{d}
    + \sigma^2 \Tr \prn{(XX^\top)^{-1}} + \frac{1}{d} \Tr \prn{XX^\top \prn{XX^\top + d\sigma^2 I}^{-1}}.
  \end{equation*}

  From this point, we expand the traces in terms of the empirical
  spectral distributions $H_n$, so multiplying and dividing
  $XX^\top$ by $d$ and normalizing the traces by $n$, we obtain
  \begin{align*}
    \pred_X \prn{\est{\theta}_{\mathsf{OLS}}} - \pred_X \prn{\est{\theta}(0)}
    & = -\frac{n}{d}
    + \frac{\sigma^2 n}{d} \int \frac{1}{s} dH_n(s)
    + \frac{n}{d} \int \frac{s}{s + \sigma^2} dH_n(s).
  \end{align*}
  We may apply the Bai-Yin law (Lemma~\ref{thm:BY-law}) and the
  Marchenko-Pastur law (Lemma~\ref{thm:MP-law}), so $\lambda_{\min}(XX^\top
  / d)$ converges with probability 1, and thus almost surely
  \begin{align*}
    \lim_{n \to \infty}
    \prn{
      \pred_X \prn{\est{\theta}_{\mathsf{OLS}}} - \pred_X \prn{\est{\theta}(0)}}
    & = -\frac{1}{\gamma}
    + \frac{\sigma^2}{\gamma} \int \frac{1}{s} dH(s)
    + \frac{1}{\gamma} \int \frac{s}{s + \sigma^2} dH(s).
  \end{align*}
  An algebraic manipulation gives the lemma.
\end{proof}

Noting that $\frac{1}{s} - \frac{1}{s + \sigma^2}
= \frac{\sigma^2}{s(s + \sigma^2)}$ gives the first equality of
part~\ref{item:relative-costs-ols} of the theorem.
We divide the remainder of the proof into two parts. In the first, we
perform an asymptotic expansion of the integral in
Lemma~\ref{lemma:cost-difference-ols} to finalize
part~\ref{item:relative-costs-ols}. In the second, we prove
part~\ref{item:interp-threshold-ols}, including the existence
of the threshold $\rho$ and the limiting values of $\wb{\cost}_X(\epsilon)$.

\paragraph{Finalizing Theorem~\ref{thm:cost-interpolator-isotropic}
  \ref{item:relative-costs-ols}: The cost of minimum norm interpolation.}

As in our derivation of Eq.~\eqref{eq:using-stieljes-transform}, we can
apply \citet[Lemma~3.11]{BaiSi10} to the integral form of
Lemma~\ref{lemma:cost-difference-ols}. Recalling \citeauthor{BaiSi10}'s result,
we have
\begin{align}
  \label{eqn:bai-silverstein-transform}
  \int \frac{1}{s + \sigma^2} dH(s)
  = \frac{1 - 1/\gamma + \sigma^2 - \sqrt{\prn{1 - 1 / \gamma + \sigma^2 }^2
      + 4 \sigma^2/\gamma}}{- 2 \sigma^2/\gamma}.
\end{align}
As
\begin{equation*}
  \left[\prn{1 - \frac{1}{\gamma} + \sigma^2} -
    \sqrt{\prn{1 - \frac{1}{\gamma} + \sigma^2}^2
      + \frac{4 \sigma^2}{\gamma}}\right]
  \left[\prn{1 - \frac{1}{\gamma} + \sigma^2} +
    \sqrt{\prn{1 - \frac{1}{\gamma} + \sigma^2}^2
      + \frac{4 \sigma^2}{\gamma}}\right]
  = \frac{4 \sigma^2}{\gamma},
\end{equation*}
we then use that $H$ has support bounded away from zero to immediately
obtain
\begin{align*}
  \int \frac{1}{s} dH(s)
  & = \lim_{\sigma \downarrow 0} \frac{1 - 1/\gamma + \sigma^2 - \sqrt{\prn{1 - 1 / \gamma + \sigma^2 }^2 + 4
			\sigma^2/\gamma}}{- 2 \sigma^2/\gamma}  \nonumber \\
	& = \lim_{\sigma \downarrow 0} \frac{2}{  \sqrt{\prn{1 - 1 / \gamma + \sigma^2 }^2 + 4 \sigma^2/\gamma} + \prn{1 - 1 / \gamma + \sigma^2 }}
  = \frac{1}{1 - 1/\gamma}.
\end{align*}
As $\frac{\sigma^2}{s(s + \sigma^2)} = \frac{1}{s} - \frac{1}{s +
  \sigma^2}$, we then again use identity~\eqref{eqn:bai-silverstein-transform}
and Lemma~\ref{lemma:cost-difference-ols} to see that
\begin{align*}
  \lefteqn{
    \frac{\sigma^2}{\gamma} \cdot \prn{	\int \frac{1}{s} dH(s) - \int \frac{1}{s + \sigma^2} dH(s) }} \\
  & = \frac{\sigma^2}{\gamma} \cdot \prn{\frac{1}{1 - 1/\gamma}  - \frac{2}{  \sqrt{\prn{1 - 1 / \gamma + \sigma^2 }^2 + 4 \sigma^2/\gamma} + \prn{1 - 1 / \gamma + \sigma^2 }} } \nonumber \\
	& = \frac{\sigma^2}{\gamma} \cdot \prn{\frac{1}{1 - 1/\gamma}  - \frac{2}{ \prn{1 - 1 / \gamma + \sigma^2 } + \frac{4 \sigma^2/\gamma}{2 \prn{1 - 1 / \gamma + \sigma^2 }} + \prn{1 - 1 / \gamma + \sigma^2 } +o(\sigma^2)} } \nonumber \\
	& = \frac{\sigma^2}{\gamma} \cdot \prn{\frac{1}{1 - 1/\gamma}  - \frac{1}{ 1 - 1 / \gamma + \frac{\sigma^2 }{1-1/\gamma}+o(\sigma^2)} }  \nonumber \\
	& = \frac{\sigma^4}{\gamma\prn{1 - 1/\gamma}^3} + o(\sigma^4) \, ,
\end{align*}
where we use the Taylor expansions $\sqrt{x^2 + t} = x + \frac{t}{2x} +
o(t^2)$ and $\frac{1}{x + t} = \frac{1}{x} - \frac{t}{x^2} + o(t^2)$, valid
for any fixed $x > 0$.

\paragraph{Proving Theorem~\ref{thm:cost-interpolator-isotropic}
  \ref{item:interp-threshold-ols}: interpolation threshold.}
To obtain the threshold value $\rhools$, we derive the limit $\lim_{n \to
  \infty} \wb{\cost}_X(\epsilon)$ for any $\epsilon >0$. As
Lemma~\ref{lemma:cost-difference-ols} shows,
\begin{align*}
  \lim_{n \to \infty} \prn{\cost_X(\epsilon) - \wb{\cost}_X(\epsilon)}
  = \frac{\sigma^4}{\gamma}  \int \frac{1}{s(s+\sigma^2)} dH(s) .
\end{align*}
Applying Theorem~\ref{thm:cost-isotropic} for the limiting value of
$\cost_X(\epsilon)$, we recall the definition~\eqref{eq:threshold-isotropic}
of $\epsilon_\sigma^2 = \int \frac{\sigma^4}{s + \sigma^2} dH(s)$. Choose
$\rho = \rho(\epsilon)$ to be $\rho(\epsilon) = 0$ if $\epsilon <
\epsilon_\sigma$ and to satisfy $\epsilon^2 =\int \frac{\sigma^4}{(1 - \rho
  s)^2 \prn{s + \sigma^2}} dH(s)$ when $\epsilon \geq \epsilon_\sigma$, as
in Eq.~\eqref{eq:rho-epsilon-equation} in Theorem~\ref{thm:cost-isotropic},
which decreases continuously to $\rho(\epsilon_\sigma) = 0$. The theorem then
implies
\begin{align*}
  \lim_{n \to \infty} \cost_X(\epsilon) = \frac{\rho^2}{\gamma} \int \frac{\sigma^4 s}{(1-\rho s)^2 (s + \sigma^2)} dH(s).
\end{align*}
Adding and subtracting $\cost_X(\epsilon)$, we therefore
have with probability 1  that
\begin{align}
  \nonumber
  \lim_{n \to \infty} \wb{\cost}_X(\epsilon) & = 	\lim_{n \to \infty} \cost_X(\epsilon)
  - \lim_{n \to \infty} \prn{\cost_X(\epsilon) - \wb{\cost}_X(\epsilon)} \\
  & = \frac{\rho^2}{\gamma}
  \int \frac{\sigma^4 s}{(1-\rho s)^2 (s + \sigma^2)} dH(s)
  - \frac{\sigma^4}{\gamma}  \int \frac{1}{s(s+\sigma^2)} dH(s)
  \label{eqn:rho-ols-cost-limit}
\end{align}
(compare with Eq.~\eqref{eq:rho-sigma-equation}).
Notably, $\rho = \rho(\epsilon)$ satisfies $\rho = 0$
whenever $\epsilon < \epsilon_\sigma$, so
that
\begin{equation*}
  \lim_{n \to \infty} \wb{\cost}_X(\epsilon) = -\frac{\sigma^4}{\gamma} \int
  \frac{1}{s(s+\sigma^2)} dH(s)< 0
\end{equation*}
for $\epsilon < \epsilon_\sigma$.

Now, consider the $\rhools$ solving identity~\eqref{eq:rho-sigma-equation}
and the associated value $\epsols$, where it is evident
that $\rhools > 0$. Then the preceding
calculations yield immediately that
\begin{align*}
  \lim_{n \to \infty} \wb{\cost}_X(\epsols)
  = \frac{\sigma^4}{\gamma} \cdot \prn{
    \rhools^2 \int \frac{s}{(1 - \rhools s)^2
      \prn{s + \sigma^2}} dH(s) -\int \frac{1}{s(s+\sigma^2)} dH(s) } = 0.
\end{align*}
Because the value $\rho = \rho(\epsilon)$ solving the
identity~\eqref{eq:rho-epsilon-equation} is increasing in $\epsilon \ge
\epsilon_\sigma$, we conclude that $\rho(\epsilon) > \rhools$ for $\epsilon
> \epsols$ and $\epsols > \epsilon_\sigma$.  Combining everything to this
point and the limit~\eqref{eqn:rho-ols-cost-limit}, we see that
\begin{equation*}
  \lim_{n \to \infty} \wb{\cost}_X(\epsilon)
  \begin{cases} > 0 & \mbox{if~} \epsilon > \epsols \\
    < 0 & \mbox{if~} \epsilon < \epsols.
  \end{cases}
\end{equation*}
 
Lastly, we provide the concrete claimed bounds on $\epsols$ in terms of
$\epsilon_\sigma$. We have already seen that $\epsols > \epsilon_\sigma$,
and so the claimed upper bound revolves around lower bounding $\rhools$ so
that we may provide an upper bound on $\epsols = \int \frac{\sigma^4}{(1 -
  \rhools s)^2 (s + \sigma^2)} dH(s)$. To that end, note that
identity~\eqref{eq:rho-sigma-equation} gives a lower bound for
$\rhools$: as
\begin{align*}
  \int \prn{\frac{\rhools^2 s^2}{(1 - \rhools s)^2} - 1} \cdot \frac{1}{s\prn{s + \sigma^2}} dH(s) = 0 \, ,
\end{align*}
we must have
\begin{equation*}
  \sup_{s \in [\lambda_-, \lambda_+]} \frac{\rhools^2 s^2}{(1 - \rhools s)^2} - 1
  \geq 0,
  ~~ \mbox{so} ~~
  \rhools \ge \frac{1}{2 \lambda_+}.
\end{equation*}
Invoking the lower bound
$\rhools \cdot 2 \lambda_+ \geq 1$ and that $s / \lambda_- \geq 1$ on the
support of $H$, we have
\begin{align*}
  \epsols^2
  & = \int \frac{\sigma^4}{(1 - \rhools s)^2 \prn{s + \sigma^2}} dH(s) \nonumber \\
  & \le \frac{4 \lambda_+^2}{\lambda_-} \cdot \sigma^4 \cdot \rhools^2 \int \frac{s}{(1 - \rhools s)^2 \prn{s + \sigma^2}} dH(s)
  = \frac{4 \lambda_+^2 \sigma^4}{\lambda_-} \int \frac{1}{s(s+\sigma^2)} dH(s),
\end{align*}
where we used the identity~\eqref{eq:rho-sigma-equation}.
Noting that $\frac{1}{s} \le \frac{1}{\lambda_-}$ and using the
definition~\eqref{eq:threshold-isotropic} of
$\epsilon_\sigma = \int \frac{\sigma^4}{s + \sigma^2} dH(s)$ gives the
final bound
that $\epsols^2 \le \frac{4 \lambda_+^2}{\lambda_-^2} \epsilon_\sigma^2$,
as desired.


\section{Proof of Theorem~\ref{thm:cost-general-cov}}
  
\label{proof:cost-general-cov}

The proof follows a similar approach to that we use in the proof of
Theorem~\ref{thm:cost-isotropic} in Section~\ref{proof:cost-isotropic}: we
compute formulae for the training and prediction errors conditional on the
data matrices $X$, then use these to provide the bounds on the memorization
threshold and costs for fitting to accuracy worse than that threshold. While
in the proof of Theorem~\ref{thm:cost-isotropic}, we could develop explicit
spectral limits for the error measures of interest, here exact forms are
difficult, but we can obtain tight enough bounds (mitigated by the condition
number $\kappa$ of the covariance $\Sigma$ of the data vectors $x$) to
give the desired results.
With that in mind, we note that Lemmas~\ref{lem:closed-form-errors},
\ref{lem:strong-duality}, and \ref{lem:rho-error-forms} all continue to
hold, so that the reduction via strong duality applies. In particular, the
optimal linear estimator $A$ in the form $\est{\theta} = A y$ continues to
take the form $A(\rho, \Sigma)$ in~\eqref{eqn:A}.

Throughout the proof, we let $\lambda_1 \ge \lambda_2 \ge \cdots \ge
\lambda_n \ge 0$ denote the singular values of $X$ and $\mu_1 \geq \mu_2
\geq \cdots \geq \mu_n \ge 0$ those of $Z$, and so the
empirical spectral c.d.f.s of $\frac{1}{d} XX^\top$ and $\frac{1}{d}
ZZ^\top$ are (respectively)
\begin{equation*}
  G_n(s) \defeq \frac{1}{n} \sum_{i=1}^n \ind_{\lambda_i^2/d \leq s}
  ~~ \mbox{and} ~~
  H_n(s) \defeq \frac{1}{n} \sum_{i=1}^n \ind_{\mu_i^2/d \leq s}.
\end{equation*} 
By the Marchenko-Pastur and deformed Marchenko-Pastur laws
(Lemmas~\ref{thm:MP-law} and~\ref{thm:MP-deformed-law}),
$G_n$ and $H_n$ converge weakly (almost surely) to c.d.f.s
$G$ and $H$, respectively. Again, we only need to prove under Assumption~\ref{assp:linear} by applying Theorem~\ref{thm:strong-duality-hilbert}.

\paragraph{Part I: Memorization threshold.}
We begin with the expansion of $\epsdeformed$ and the bound $\epsdeformed^2
\le \epsilon_{\sqrt{\kappa} \sigma}^2 / \kappa$.
Rewriting $\epsilon_\sigma$ and $\epsdeformed$ in terms of the limits
arising from their respective Marchenko-Pastur laws,
we have
\begin{align*}
  \epsdeformed^2 & = \int \frac{\sigma^4}{s +\sigma^2} dG(s) = \lim_{n \to \infty} \int \frac{\sigma^4}{s +\sigma^2} dG_n(s) = \lim_{n \to \infty} \frac{d\sigma^4}{n} \Tr \prn{(XX^\top + d \sigma^2 I)^{-1}} \, , \nonumber \\
  \epsilon_{\sqrt{\kappa}\sigma}^2 / \kappa & =  \int \frac{\kappa\sigma^4}{s + \kappa\sigma^2} dH(s) = \lim_{n \to \infty} \int \frac{\kappa\sigma^4}{s + \kappa\sigma^2} dH_n(s) = \lim_{n \to \infty} \frac{d\sigma^4}{n} \Tr \prn{(ZZ^\top /\kappa + d \sigma^2 I)^{-1}} \, .
\end{align*}
As $XX^\top = Z \Sigma Z^\top \succeq ZZ^\top / \kappa$, we have $\Tr
(ZZ^\top /\kappa + d \sigma^2 I)^{-1} \geq \Tr (XX^\top + d \sigma^2
I)^{-1}$ and thus $\epsdeformed^2 \leq \epsilon_{\sqrt{\kappa} \sigma}^2 /
\kappa$.

\paragraph{Part II: No cost below threshold.}
It is immediate via Lemma~\ref{lem:strong-duality} that the
global minimizer for the unconstrained problem~\eqref{eq:not-fitting-problem}
(with $\epsilon = 0$) is $A(0, \Sigma)$, that is,
$\rho = 0$ as the
constraint is inactive and
\begin{align*}
  A(0, \Sigma)  = X^\top (XX^\top + d \sigma^2 I)^{-1}
  = (X^\top X + d \sigma^2 I)^{-1} X^\top.
\end{align*}
Then as usual $\inf_{\est{\theta} \in \mc{H}(0)} \pred_X(\est{\theta}) =
\pred_X(\est{\theta}_{d \sigma^2})$, where we recall $\est{\theta}_{d
  \sigma^2}$ is the ridge estimator.  To prove that $\lim_{n \to \infty}
\cost_X(\epsilon) = 0$ when $\epsilon < \epsdeformed$, it is
thus sufficient to show that $\est{\theta}_{d \sigma^2}$ is
contained in $\mathcal{H}(\epsilon)$ eventually, which amounts to proving
\begin{align*}
  \liminf_{n \to \infty}	\train_X\prn{\est{\theta}_{d \sigma^2}} =
  \liminf_{n \to \infty} \mathcal{T}(A(0, \Sigma); \Sigma)> \epsilon^2.
\end{align*}
Invoking the expansion of $\mc{T}(A(\rho, \Sigma); \Sigma)$
in Lemma~\ref{lem:rho-error-forms} and setting $\rho = 0$, we obtain
\begin{align*}
  \mc{T}(A(0, \Sigma); \Sigma)
  & =  \frac{d \sigma^4}{n}
  \Tr \prn{X^\top X \prn{X^\top X}^\dagger \prn{X^\top X + d\sigma^2 I}^{-1} }
  = \int \frac{\sigma^4 }{s + \sigma^2} dG_n(s) .
\end{align*}
By weak convergence,
\begin{equation*}
  \lim_{n \to \infty}
  \mathcal{T}(A(0, \Sigma); \Sigma)
  = \lim_{n \to \infty} \int \frac{\sigma^4 }{s + \sigma^2} dG_n(s)
  = \int \frac{\sigma^4 }{s + \sigma^2} dG(s)
  = \epsdeformed^2 > \epsilon^2,
\end{equation*}
so indeed we have $\est{\theta}_{d\sigma^2} \in \mc{H}(\epsilon)$ as desired.

\paragraph{Part III: Cost of not-fitting above threshold.}

Our starting point is to demonstrate the existence and uniqueness
of $\rhodeformed \in
\openright{0}{\lambda_+^{-1}}$ solving the
identity~\eqref{eq:rho-epsilon-equation-general-cov}. For this, we
note that the difference
\begin{equation*}
  \Delta_H(\rho) \defeq \int\left[\frac{1}{(1 - \rho s)^2
      (s + \kappa \sigma^2)} - \frac{1}{s + \sigma^2} \right] dH(s)
\end{equation*}
is monotone increasing in $\rho$, and $\Delta_H(0) = 0$. That
$\Delta_H(\rho) \to \infty$ as $\rho \uparrow \lambda_+^{-1}$ is then an
immediate consequence of the expansion~\eqref{eq:integral-blows-up} of the
left integrand above.

We turn to the second claim in part~\ref{item:cost-not-fit-general}: the
lower bound on $\cost_X(\epsilon)$. We (roughly) reduce the general
covariance case to the isotropic case, then apply our previous results and
techniques. To do so, we require the following lemma, which upper-bounds the
training error growth and lower-bounds the prediction error growth. The
proof is essentially tedious algebraic manipulations, so we defer
it to Appendix~\ref{proof:growth-control-via-isotropic}.
\begin{lemma} \label{lem:growth-control-via-isotropic}
  Let the same conditions of Lemma~\ref{lem:rho-error-forms} hold and
  assume $\rho \lambda_1^2 /d < 1$.
  Then
  \begin{align*}
    \mc{P}(A(\rho,\Sigma); \Sigma)
    - \mc{P}(A(0,\Sigma); \Sigma)
    & \geq \frac{\rho^2  \sigma^4}{d} \Tr \prn{\prn{I- \frac \rho d  ZZ^\top}^{-2}  \frac{ZZ^\top}{d} \cdot  \prn{ \frac{Z Z^\top}{d} +\sigma^2 I}^{-1}  }, \\
    \mc{T}(A(\rho,\Sigma); \Sigma)
    - \mc{T}(A(0,\Sigma); \Sigma) 
    & \le \frac{\kappa \sigma^4}{n}
    \Tr\left[ \prn{\prn{I - \frac{\rho}{d} Z Z^\top}^{-2} - I}
      \prn{\frac{1}{d} ZZ^\top + \kappa \sigma^2 I}^{-1}\right].
  \end{align*}
\end{lemma}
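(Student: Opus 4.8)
The plan is to first collapse the two error-growth expressions of Lemma~\ref{lem:rho-error-forms} into traces that involve only $ZZ^\top$, $XX^\top = Z\Sigma Z^\top$, and $d\sigma^2 I$, and then to conclude from the operator sandwich $\tfrac1\kappa ZZ^\top \preceq XX^\top \preceq ZZ^\top$ together with monotonicity of the trace against a fixed positive semidefinite factor. Throughout I would work on the almost-sure event where $XX^\top$ is nonsingular (Bai--Yin, Lemma~\ref{thm:BY-law}), so that the $(X^\top X)^\dagger$ in Lemma~\ref{lem:rho-error-forms} behaves as an ordinary inverse on the row space and the manipulations below are unambiguous.

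The engine of the reduction is the factorization $X = Z\Sigma^{1/2}$, which gives $\Sigma - \tfrac\rho d X^\top X = \Sigma^{1/2}\prn{I - \tfrac\rho d Z^\top Z}\Sigma^{1/2}$, hence $\prn{\Sigma - \tfrac\rho d X^\top X}^{-1} = \Sigma^{-1/2}\prn{I - \tfrac\rho d Z^\top Z}^{-1}\Sigma^{-1/2}$, together with the push-through identities $\prn{I - \tfrac\rho d Z^\top Z}^{-1}Z^\top = Z^\top\prn{I - \tfrac\rho d ZZ^\top}^{-1}$ and $X^\top X\prn{X^\top X + d\sigma^2 I}^{-1} = X^\top\prn{XX^\top + d\sigma^2 I}^{-1}X$. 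Using these I would verify the telescoping identities
\begin{align*}
  X\prn{\Sigma - \tfrac\rho d X^\top X}^{-1}\Sigma\prn{\Sigma - \tfrac\rho d X^\top X}^{-1}X^\top &= \prn{I - \tfrac\rho d ZZ^\top}^{-2}ZZ^\top, \\
  X\prn{\Sigma - \tfrac\rho d X^\top X}^{-1}\Sigma X^\top &= \prn{I - \tfrac\rho d ZZ^\top}^{-1}XX^\top,
\end{align*}
in which all the $\Sigma^{\pm1/2}$ factors cancel internally. Feeding the first into the prediction formula of Lemma~\ref{lem:rho-error-forms} (after rewriting $X^\top X(X^\top X + d\sigma^2 I)^{-1}$ by push-through and using the cyclic property of the trace), and feeding the second into the expression~\eqref{eq:XA-minus-I} for $XA(\rho) - I$ — which collapses to $XA(\rho) - I = -d\sigma^2\prn{I - \tfrac\rho d ZZ^\top}^{-1}\prn{XX^\top + d\sigma^2 I}^{-1}$, and at $\rho = 0$ to $-d\sigma^2\prn{XX^\top + d\sigma^2 I}^{-1}$ — and then substituting into $\mc{T}(A(\rho,\Sigma);\Sigma) = \tfrac1{nd}\Tr\prn{(XA(\rho)-I)(XX^\top + d\sigma^2 I)(XA(\rho)-I)^\top}$, I obtain
\begin{align*}
  \mc{P}(A(\rho,\Sigma);\Sigma) - \mc{P}(A(0,\Sigma);\Sigma) &= \frac{\rho^2\sigma^4}{d}\Tr\prn{\prn{I - \tfrac\rho d ZZ^\top}^{-2}ZZ^\top\prn{XX^\top + d\sigma^2 I}^{-1}}, \\
  \mc{T}(A(\rho,\Sigma);\Sigma) - \mc{T}(A(0,\Sigma);\Sigma) &= \frac{d\sigma^4}{n}\Tr\prn{\prn{\prn{I - \tfrac\rho d ZZ^\top}^{-2} - I}\prn{XX^\top + d\sigma^2 I}^{-1}}.
\end{align*}

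With these in hand the bounds are immediate. Assumption~\ref{assp:cov-spectral} gives $\tfrac1\kappa I \preceq \Sigma \preceq I$, so conjugation by $Z$ yields $\tfrac1\kappa ZZ^\top \preceq XX^\top \preceq ZZ^\top$, hence $\prn{XX^\top + d\sigma^2 I}^{-1} \succeq \prn{ZZ^\top + d\sigma^2 I}^{-1}$ and $\prn{XX^\top + d\sigma^2 I}^{-1} \preceq \kappa\prn{ZZ^\top + \kappa d\sigma^2 I}^{-1}$. The hypothesis $\rho \in D$ of Lemma~\ref{lem:strong-duality}, transported through the factorization (as $Z^\top Z$ and $ZZ^\top$ share nonzero spectrum), is exactly $0 \preceq \tfrac\rho d ZZ^\top \prec I$, so both $\prn{I - \tfrac\rho d ZZ^\top}^{-2}ZZ^\top$ and $\prn{I - \tfrac\rho d ZZ^\top}^{-2} - I$ are positive semidefinite; applying $\Tr(AM_1) \ge \Tr(AM_2)$ for $A \succeq 0$, $M_1 \succeq M_2$, together with $ZZ^\top\prn{ZZ^\top + d\sigma^2 I}^{-1} = \tfrac1d ZZ^\top\prn{\tfrac1d ZZ^\top + \sigma^2 I}^{-1}$ and $\prn{ZZ^\top + \kappa d\sigma^2 I}^{-1} = \tfrac1d\prn{\tfrac1d ZZ^\top + \kappa\sigma^2 I}^{-1}$, gives the claimed lower bound on the prediction-error growth and the claimed upper bound (the constant being $\tfrac{d\sigma^4}{n}\cdot\tfrac\kappa d = \tfrac{\kappa\sigma^4}{n}$) on the training-error growth.

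I expect the only real work to be the bookkeeping in the reduction step: one must keep careful track of which matrices commute (only $ZZ^\top$ and its functions commute among themselves, while $XX^\top$ does not commute with $ZZ^\top$) so that every push-through and every appeal to cyclicity is legitimate, and one must verify the two telescoping identities above — this is the place where the $\Sigma^{1/2}$'s disappear, and it is what makes the whole argument go through. There is no conceptual subtlety beyond this, apart from the already-noted restriction to the event on which $XX^\top$ is invertible so that $(X^\top X)^\dagger$ may be treated as an inverse.
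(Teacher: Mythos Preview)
Your proposal is correct and follows essentially the same approach as the paper: reduce via the factorization $X = Z\Sigma^{1/2}$ to obtain exact trace identities involving only $ZZ^\top$ and $XX^\top = Z\Sigma Z^\top$, then invoke the sandwich $\tfrac{1}{\kappa}ZZ^\top \preceq XX^\top \preceq ZZ^\top$ together with trace monotonicity against a fixed PSD factor. Your derivation of the training-error identity is in fact a bit cleaner than the paper's: you go directly through the simplified form $XA(\rho)-I = -d\sigma^2\bigl(I-\tfrac{\rho}{d}ZZ^\top\bigr)^{-1}(XX^\top+d\sigma^2 I)^{-1}$ (which follows from~\eqref{eq:XA-minus-I} and your second telescoping identity) and substitute into $\mathcal{T} = \tfrac{1}{nd}\Tr\bigl((XA-I)(XX^\top+d\sigma^2 I)(XA-I)^\top\bigr)$, whereas the paper starts from the final trace expression for $\mathcal{T}$ in Lemma~\ref{lem:rho-error-forms} and unwinds it through several more substitutions; both routes land on the same identity $\Delta_{\mathcal{T}}(\rho) = \tfrac{d\sigma^4}{n}\Tr\bigl[\bigl((I-\tfrac{\rho}{d}ZZ^\top)^{-2}-I\bigr)(XX^\top+d\sigma^2 I)^{-1}\bigr]$.
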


We use the upper and lower bounds in
Lemma~\ref{lem:growth-control-via-isotropic}, coupled with the strong
duality guarantees in Lemma~\ref{lem:strong-duality} (and the
identities~\eqref{eqn:A}), to prove the desired growth of the
$\cost_X(\epsilon)$. Consider any $0 \le \rho < \rhodeformed$, where
$\rhodeformed$ satisfies the
identity~\eqref{eq:rho-epsilon-equation-general-cov}.
By construction and duality,
$A(\rho, \Sigma)$ is the optimal solution to the problem
\begin{equation*}
  \begin{aligned}
    \minimize_{A \in \real^{d \times n}} & ~~ \mathcal{P}(A; \Sigma)  \\
    \subjectto & ~~ \mc{T}(A(\rho, \Sigma); \Sigma)
    - \mathcal{T}(A; \Sigma) \le 0.
  \end{aligned}
\end{equation*}
Thus, whenever $\mathcal{T}(A(\rho, \Sigma); \Sigma) < \epsilon^2$ it
holds that
\begin{align}
  \cost_X (\epsilon) & \geq \mathcal{P}(A(\rho, \Sigma); \Sigma)
  - \mathcal{P}(A(0, \Sigma); \Sigma).
  \label{eqn:intermediate-cost-bound}
\end{align}
Therefore, to prove that $\cost_X(\epsilon)$ grows it is sufficient
to show that  eventually $\mc{T}(A(\rho, \Sigma); \Sigma) < \epsilon$ for
our chosen $\rho$ and
provide lower bounds on the difference
$\mathcal{P}(A(\rho, \Sigma); \Sigma)
- \mathcal{P}(A(0, \Sigma); \Sigma)$.

To that end, let us take limits of $\mc{T}$. Applying
the upper bound in Lemma~\ref{lem:growth-control-via-isotropic}, we have
\begin{align*}
  \lefteqn{
    \limsup_{n \to \infty} \mathcal{T}(A(\rho, \Sigma); \Sigma)} \\
  & \leq \limsup_{n \to \infty} \mathcal{T}(A(0, \Sigma); \Sigma)
  + \limsup_{n \to \infty} \frac{\kappa \sigma^4 }{n} \Tr
  \left[\prn{ \prn{I - \frac{\rho}{d} ZZ^\top}^{-2} - I}
    \prn{\frac{1}{d} ZZ^\top + \kappa \sigma^2 I}^{-1} \right]
  \nonumber \\
  & = \epsdeformed^2
  +  \limsup_{n \to \infty}
  \kappa \sigma^4 \int \frac{\rho s(2 - \rho s)}{(1 - \rho s)^2 \prn{s + \kappa \sigma^2}} d H_n(s)
\end{align*}
with probability 1. As $\rho < \rhodeformed < \lambda_+^{-1}$, the quantity
$\frac{s(2 - \rho s)}{(1 - \rho s)^2 \prn{s + \kappa \sigma^2}}$ is
eventually bounded on the support $[\lambda_-, \lambda_+] + o(1)$ of $H_n$
by the Bai-Yin law (Lemma~\ref{thm:BY-law}), and so with probability one
\begin{align*}
  \kappa \sigma^4 \int \frac{\rho s(2 - \rho s)}{(1 - \rho s)^2 \prn{s + \kappa \sigma^2}} d H_n(s)
  & \to \kappa \sigma^4 \int \frac{\rho s(2 - \rho s)}{(1 - \rho s)^2 \prn{s + \kappa \sigma^2}} d H(s) \\
  & =  \kappa \sigma^4 \int \prn{\frac{1}{(1 - \rho s)^2 \prn{s + \kappa \sigma^2}} - \frac{1}{s + \kappa \sigma^2}} d H(s) \nonumber \\
  & < \kappa \sigma^4 \int \prn{\frac{1}{(1 - \rhodeformed s)^2
      \prn{s + \kappa \sigma^2}} - \frac{1}{s + \kappa \sigma^2}} d H(s) \nonumber \\
  & = \epsilon^2 - \epsdeformed^2,
\end{align*}
where the last line follows from the
definition~\eqref{eq:rho-epsilon-equation-general-cov} of $\rhodeformed$.
In particular, with probability 1 we have
\begin{align*}
  \limsup_{n \to \infty} \mathcal{T}(A(\rho, \Sigma); \Sigma) & <
  \epsdeformed^2 + \epsilon^2 - \epsdeformed^2 = \epsilon^2,
\end{align*}
and therefore inequality~\eqref{eqn:intermediate-cost-bound}
implies that with probability 1,
\begin{equation*}
  \liminf_{n \to \infty} \cost_X(\epsilon)
  \ge \liminf_{n \to \infty}
  \left[\mc{P}(A(\rho, \Sigma); \Sigma) - \mc{P}(A(0, \Sigma); \Sigma)\right].
\end{equation*}

We now apply Lemma~\ref{lem:growth-control-via-isotropic} again,
invoking the lower bound on the prediction errors to obtain
\begin{align*}
	\liminf_{n \to \infty} \cost_X (\epsilon) & \geq \lim_{n \to \infty} \frac{\rho^2  \sigma^4}{d} \Tr \prn{\prn{I- \frac{\rho}{d}  ZZ^\top}^{-2}  \frac{ZZ^\top}{d} \cdot  \prn{ \frac{Z Z^\top}{d} +\sigma^2 I}^{-1}  } \nonumber \\
	&= \lim_{n \to \infty} \rho^2  \sigma^4 \cdot \frac{n}{d} \cdot \int \frac{s}{\prn{1 - \rho s}^2 (s + \sigma^2)} dH_n(s) \nonumber \\
	& = \frac{\rho^2}{\gamma} \int \frac{\sigma^4 s}{(1-\rho s)^2 (s + \sigma^2)} dH(s) \, .
\end{align*}
Taking $\rho \uparrow \rhodeformed$ yields the second
claim of part~\ref{item:cost-not-fit-general}.

Our last step is to prove a concrete lower bound showing that
$\cost_X(\epsilon)$ grows linearly in $\epsilon^2$ provided that $\epsilon^2
\ge \frac{2 \kappa \sigma^4}{\lambda_- + \kappa \sigma^2}$, in parallel to
the result in Lemma~\ref{lem:cost-isotropic-mid-3}. We state
a small integral inequality:
\begin{lemma}
  \label{lemma:rho-to-epsilon-integral}
  Let $\rho = \rhodeformed$ solve the fixed
  point~\eqref{eq:rho-epsilon-equation-general-cov}. Then
  \begin{equation*}
    \int \frac{\kappa \sigma^4}{(1 - \rho s)^2(s + \kappa \sigma^2)} dH(s)
    \ge \epsilon^2.
  \end{equation*}
\end{lemma}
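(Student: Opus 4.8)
The plan is to read the claim directly off the defining fixed point~\eqref{eq:rho-epsilon-equation-general-cov} of $\rhodeformed$, reducing it to an inequality already established in Part~I of this proof. Writing $\rho = \rhodeformed$ and rearranging~\eqref{eq:rho-epsilon-equation-general-cov} gives
\begin{equation*}
  \int \frac{\kappa \sigma^4}{(1 - \rho s)^2 (s + \kappa \sigma^2)} dH(s)
  = \epsilon^2 - \epsdeformed^2 + \kappa \sigma^4 \int \frac{1}{s + \kappa \sigma^2} dH(s),
\end{equation*}
so the asserted bound $\int \frac{\kappa \sigma^4}{(1 - \rho s)^2 (s + \kappa \sigma^2)} dH(s) \ge \epsilon^2$ is equivalent to the $\rho$-free statement
\begin{equation*}
  \kappa \sigma^4 \int \frac{1}{s + \kappa \sigma^2} dH(s) \ge \epsdeformed^2.
\end{equation*}

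Next I would recognize the left-hand side as a rescaled isotropic threshold: from the definition~\eqref{eq:threshold-isotropic} applied at noise level $\sqrt{\kappa}\sigma$ we have $\epsilon_{\sqrt{\kappa}\sigma}^2 = \int \frac{\kappa^2 \sigma^4}{s + \kappa \sigma^2} dH(s)$, hence $\kappa \sigma^4 \int \frac{1}{s + \kappa \sigma^2} dH(s) = \epsilon_{\sqrt{\kappa}\sigma}^2 / \kappa$. The inequality to prove is therefore exactly $\epsdeformed^2 \le \epsilon_{\sqrt{\kappa}\sigma}^2 / \kappa$, which is the content of Theorem~\ref{thm:cost-general-cov}\ref{item:general-threshold} and was already shown in Part~I: using $t_d \ge 1/\kappa$ from Assumption~\ref{assp:cov-spectral} gives $XX^\top = Z \Sigma Z^\top \succeq ZZ^\top/\kappa$, so $\Tr(ZZ^\top/\kappa + d \sigma^2 I)^{-1} \ge \Tr(XX^\top + d \sigma^2 I)^{-1}$, and passing to the limit via the Marchenko-Pastur and deformed Marchenko-Pastur laws (Lemmas~\ref{thm:MP-law} and~\ref{thm:MP-deformed-law}) turns this operator inequality into $\epsdeformed^2 = \int \frac{\sigma^4}{s + \sigma^2} dG(s) \le \int \frac{\kappa \sigma^4}{s + \kappa \sigma^2} dH(s)$.

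Since Part~I already supplies precisely this inequality, the proof is essentially a bookkeeping reduction and needs no new random-matrix input; there is no genuine obstacle. The only thing to be careful about is the algebra of the rescaling $\sigma \mapsto \sqrt{\kappa}\sigma$ in~\eqref{eq:threshold-isotropic} and making sure that all integrals throughout are taken against the limiting measures $G$ and $H$ rather than their empirical counterparts, so that the Part~I computation applies verbatim.
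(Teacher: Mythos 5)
Your proof is correct and matches the paper's argument exactly: rearrange the fixed-point equation~\eqref{eq:rho-epsilon-equation-general-cov} to isolate the integral, then observe that the claim reduces to $\epsdeformed^2 \le \epsilon_{\sqrt{\kappa}\sigma}^2/\kappa = \int \frac{\kappa\sigma^4}{s+\kappa\sigma^2}dH(s)$, which is part~\ref{item:general-threshold} of Theorem~\ref{thm:cost-general-cov} and was already established in Part~I. The extra recapitulation of the operator-inequality argument from Part~I is not needed but is harmless.
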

\begin{proof}
  The identity~\eqref{eq:rho-epsilon-equation-general-cov} shows
  that
  the integral in the statement of the lemma equals
  $\int \frac{\kappa \sigma^4}{s + \kappa \sigma^2} dH(s)
  + \epsilon^2 - \epsdeformed^2$. Recall that
  by
  part~\ref{item:general-threshold} of Theorem~\ref{thm:cost-general-cov},
  we have
  $\epsdeformed^2 \le \epsilon^2_{\sqrt{\kappa} \sigma} / \kappa
  = \int \frac{\kappa \sigma^4}{s + \kappa \sigma^2} dH(s)$.
\end{proof}

Taking $\rho =
\rhodeformed$ to solve the fixed
point~\eqref{eq:rho-epsilon-equation-general-cov}, we apply
the second claim in part~\ref{item:cost-not-fit-general} to see that
\begin{align}
  \nonumber \liminf_{n \to \infty}
  \cost_X(\epsilon)
  & \ge \frac{\rho^2}{\gamma}
  \int \frac{\sigma^4 s}{(1 - \rho s)^2 (s + \sigma^2)} dH(s)
  \ge \frac{\rho^2 \lambda_-}{\gamma} \int \frac{\sigma^4}{
    (1 - \rho s)^2 (s + \sigma^2)} dH(s) \\
  & \ge \frac{\rho^2 \lambda_-}{\kappa \gamma}
  \int \frac{\kappa \sigma^4}{(1 - \rho s)^2 (s + \kappa \sigma^2)} dH(s)
  \ge \frac{\rho^2 \lambda_-}{\kappa \gamma} \epsilon^2
  \label{eqn:cost-with-kappa}
\end{align}
by Lemma~\ref{lemma:rho-to-epsilon-integral}.
It remains to lower bound $\rho = \rhodeformed < \lambda_+^{-1}$. For this,
we observe that
\begin{align*}
  \frac{1}{(1 - \rho \lambda_+)^2}
  \ge \int \frac{1}{(1 - \rho s)^2} dH(s)
  & \ge \frac{\lambda_- + \kappa \sigma^2}{\kappa \sigma^4}
  \int \frac{\kappa \sigma^4}{(1 - \rho s)^2 (s + \kappa \sigma^2)}
  dH(s)
  \ge \frac{\lambda_- + \kappa \sigma^2}{\kappa \sigma^4}
  \cdot \epsilon^2,
\end{align*}
again applying Lemma~\ref{lemma:rho-to-epsilon-integral}.
In particular, whenever $\frac{\lambda_+ + \kappa \sigma^2}{\kappa \sigma^4}
\epsilon^2 \ge 2$, we obtain
$(1 - \rho \lambda_+)^{-2} \ge 2$, or
$\rhodeformed \ge \frac{1}{\lambda_+}(1 - 1/\sqrt{2})$.
Substituting in inequality~\eqref{eqn:cost-with-kappa}
gives the lower bound on $\liminf_n \cost_X(\epsilon)$.

\subsection{Proof of Lemma~\ref{lem:growth-control-via-isotropic}}
\label{proof:growth-control-via-isotropic}

We prove each claim of the lemma in turn. For the first,
we use the shorthand $\Delta_\mc{P}(\rho) \defeq \mc{P}(A(\rho, \Sigma); \Sigma)
- \mc{P}(A(0, \Sigma); \Sigma)$. Then applying
Lemma~\ref{lem:rho-error-forms}, we have
\begin{equation*}
  \Delta_\mc{P}(\rho)
  = \frac{\rho^2  \sigma^4}{d} \Tr
  \prn{\prn{\Sigma - \frac \rho d X^\top X}^{-1}\Sigma
    \prn{\Sigma - \frac \rho d X^\top X}^{-1} X^\top
    \prn{X X^\top + d \sigma^2 I}^{-1}  X},
\end{equation*}
and making the substitution
$X = Z\Sigma^{\half}$ immediately yields
\begin{align*}
  \lefteqn{\Delta_{\mc{P}}(\rho)} \\
  & = \frac{\rho^2  \sigma^4}{d} \Tr \prn{\prn{
      \Sigma - \frac \rho d \Sigma^{\half} Z^\top Z \Sigma^{\half}}^{-1}
    \Sigma \prn{\Sigma - \frac \rho d\Sigma^{\half} Z^\top
      Z \Sigma^{\half}}^{-1}\Sigma^{\half} Z^\top
    \prn{ Z \Sigma Z^\top + d \sigma^2 I}^{-1} Z \Sigma^{\half}} \\
  & = \frac{\rho^2  \sigma^4}{d} \Tr \prn{Z \prn{I- \frac \rho d  Z^\top Z}^{-2}  Z^\top \cdot  \prn{ Z \Sigma Z^\top + d \sigma^2 I}^{-1}  }.
\end{align*}
As $Z (I- \frac \rho d Z^\top Z)^{-2} Z^\top \succeq 0$ and $Z \Sigma Z^\top
+ d\sigma^2 I \preceq ZZ^\top + d\sigma^2 I$ as $\Sigma \preceq I$ by
assumption, we can leverage that the mapping $A \mapsto \Tr(AC)$ is
increasing in the positive definite order for $C \succeq 0$ to obtain that
\begin{align*}
  \Delta_{\mc{P}}(\rho)
  & \ge \frac{\rho^2  \sigma^4}{d} \Tr \prn{Z \prn{I- \frac \rho d  Z^\top Z}^{-2}
    Z^\top \cdot  \prn{ ZZ^\top + d \sigma^2 I}^{-1}  } \\
  & = \frac{\rho^2  \sigma^4}{d} \Tr \prn{\prn{I- \frac \rho d  ZZ^\top}^{-2}  \frac{ZZ^\top}{d} \cdot  \prn{ \frac{Z Z^\top}{d} +\sigma^2 I}^{-1}  },
\end{align*}
where in the last line we used the identity $Z (I - \frac{\rho}{d}Z^\top
Z)^{-1} = (I - \frac{\rho}{d}ZZ^\top)^{-1} Z$. This gives the first claim of
Lemma~\ref{lem:growth-control-via-isotropic}.

We turn to the upper bound on the training error, for which we use the
shorthand $\Delta_{\mc{T}}(\rho) \defeq \mc{T}(A(\rho, \Sigma); \Sigma) -
\mc{T}(A(0, \Sigma); \Sigma)$. Beginning from
the expansion of $\mc{T}$ in Lemma~\ref{lem:rho-error-forms}, we have
\begin{align}
  \nonumber
  \frac{n}{d \sigma^4}
  \Delta_{\mc{T}}(\rho)
  & = \Tr \left[
    \Sigma \prn{\Sigma - \frac \rho d X^\top X}^{-1} X^\top X \prn{\Sigma -
      \frac{\rho}{d} X^\top X}^{-1}
    \Sigma \prn{X^\top X}^\dagger \prn{X^\top X + d\sigma^2 I}^{-1}   \right] \\
  & \qquad - \Tr\left[X^\top X (X^\top X)^\dagger (X^\top X + d \sigma^2 I)^{-1}
    \right].
  \label{eqn:train-error-aniso-diff}
\end{align}
Leveraging the identities $X = Z \Sigma^\half$ and that
\begin{align*}
  (X^\top X)^\dagger (X^\top X + d\sigma^2 I)^{-1}
  & = X^\top (XX^\top)^{-2} (XX^\top + d\sigma^2 I)^{-1} X \\
  & = \Sigma^\half Z^\top (Z \Sigma Z^\top)^{-2} (Z\Sigma Z^\top + d \sigma^2 I)^{-1}
  Z\Sigma^\half,
\end{align*}
the right hand side of the expansion~\eqref{eqn:train-error-aniso-diff}
becomes
\begin{align*}
  \lefteqn{\Tr\left[
      \prn{\Sigma^\half \prn{I - \frac{\rho}{d} Z^\top Z}^{-1} Z^\top Z
        \prn{I - \frac{\rho}{d} Z^\top Z}^{-1}\Sigma^\half - \Sigma^\half Z^\top Z
        \Sigma^\half}
      X^\top (X X^\top)^{-2} (XX^\top + d \sigma^2 I)^{-1} X\right]} \\
  & = \Tr\left[
    \Sigma^\half \prn{\prn{I - \frac{\rho}{d} Z^\top Z}^{-1}
      Z^\top Z \prn{I - \frac{\rho}{d} Z^\top Z}^{-1} - Z^\top Z}
    \Sigma Z^\top (Z\Sigma Z^\top)^{-2}
    (Z \Sigma Z^\top + d \sigma^2 I)^{-1} Z \Sigma^\half \right] \\
  & = \Tr\left[
    \Sigma^\half \prn{\prn{I - \frac{\rho}{d} Z^\top Z}^{-2} - I} Z^\top
    (Z \Sigma Z^\top)^{-1} \prn{Z \Sigma Z^\top + d \sigma^2 I}^{-1} Z \Sigma^\half
    \right],
\end{align*}
where we have used that $(I - \frac{\rho}{d} Z^\top Z)^{-1}$ and
$Z^\top Z$ commute and eliminated one inverse of $Z\Sigma Z^\top$.
The singular value decomposition gives the equality
$(I - \frac{\rho}{d} Z^\top Z)^{-2} Z^\top
= Z^\top (I - \frac{\rho}{d} ZZ^\top)^{-2}$, where $I$ is an identity
matrix of appropriate size. The cyclic property of the trace and
that $(Z\Sigma Z^\top)^{-1}$ and
$(Z\Sigma Z^\top + d\sigma^2 I)^{-1}$ comute then allows us to substitute into
the identity~\eqref{eqn:train-error-aniso-diff} to obtain
\begin{align*}
  \Delta_{\mc{T}}(\rho)
  & =   \frac{d \sigma^4}{n}
  \Tr\left[\prn{\prn{I - \frac{\rho}{d} Z Z^\top}^{-2} - I}
    \prn{Z\Sigma Z^\top + d \sigma^2 I}^{-1}\right].
\end{align*}
Lastly, we again use the monotonicity of
$A \mapsto \Tr(A C)$ for $C \succeq 0$ and that
$Z \Sigma Z^\top + d \sigma^2 I \succeq ZZ^\top / \kappa + d \sigma^2 I$ to
get claimed upper bound in the lemma.

\section{Proof of Theorem~\ref{thm:strong-duality-hilbert}}  \label{proof:strong-duality-hilbert}

We provide the proof conditional on $X$, implicitly conditioning
throughout. As $\Hlin(\epsilon) \subset \Hsq(\epsilon)$, we
only need to show
\begin{align*}
  \inf_{\est{\theta} \in \Hsq(\epsilon)} \pred_X \prn{\est{\theta}} \geq \min_{\est{\theta} \in \Hlin(\epsilon)}  \pred_X \prn{\est{\theta}} \, .
\end{align*}
First we note that in the Gaussian setting
that $\theta \sim \normal(0, \frac{1}{d} I)$,
we have $y = X\theta + \varepsilon \sim \normal(0,
\frac{XX^\top}{d} + \sigma^2 I)$. By a standard
calculation,
the conditional distribution of $\theta$ given $y$ is
\begin{align*}
  \theta \mid y \sim
  \normal \prn{\prn{X^\top X + d \sigma^2 I}^{-1}X^\top y,
    \sigma^2 \prn{X^\top X +d \sigma^2 I}^{-1}},
\end{align*}
and therefore for any $\est{\theta}(X,y) \in \Hsq$,
\begin{align*}
  \pred_X \prn{\est{\theta}}
  & = \Ep_y \brk{\Ep_{\theta \mid y} \brk{\norm{\Sigma^{\frac{1}{2}} \prn{\est{\theta} - \theta}}_2^2 \mid y}} \\ 
  & = \Ep_y \brk{\norm{\Sigma^{\frac{1}{2}} \prn{\est{\theta} - \prn{X^\top X + d \sigma^2 I}^{-1}X^\top y}}_2^2 + \sigma^2 \Tr \prn{\Sigma  \prn{X^\top X +d \sigma^2 I}^{-1}} }. \nonumber
\end{align*}
Notably, the posterior mean $\E[\theta \mid y]$ always minimizes the prediction
risk.
By Lemma~\ref{lem:strong-duality} we know there is a $\rho$ such that
$\est{\theta}(\rho) := A(\rho, \Sigma) y$ is optimal for
problem~\eqref{eq:not-fitting-problem} where
\begin{align*}
  A(\rho, \Sigma) = \prn{I - \rho \sigma^2 \prn{\Sigma - \frac{\rho}{d} X^\top X}^{-1}}  (X^\top X + d \sigma^2 I)^{-1} X^\top.
\end{align*}
We consider two cases, depending on whether the value of the dual variable
$\rho = 0$ or $\rho > 0$.

\paragraph{Case I: $\rho = 0$.}
In this case $\est{\theta}(0) = \prn{X^\top X + d \sigma^2 I}^{-1}X^\top y
\in \Hlin(\epsilon) \subset \Hsq(\epsilon)$. But this
is the posterior mean, that is, $\est{\theta}(0) = \E[\theta \mid y]$, which
is thus optimal.


\paragraph{Case II: $\rho > 0$.}
As $\pred_X (\est{\theta}(\rho))$ is continuous in $\rho$,
if we can prove for any $\est{\theta} \in \Hsq(\epsilon)$ and any
$0 \leq \wb{\rho} < \rho$ that
\begin{align}
  \pred_X \prn{\est{\theta}} \geq \pred_X \prn{\est{\theta}(\wb{\rho})},
  \label{eq:strong-duality-hilbert-claim}
\end{align}
taking $\wb{\rho} \uparrow \rho$ completes the proof. (Note that $\rho$ is
the optimal dual variable for problem~\eqref{eq:not-fitting-problem},
and so $\est{\theta}(\wb{\rho}) \in \mc{F}(\epsilon)$.)

To show claim~\eqref{eq:strong-duality-hilbert-claim}, let $\mu = \normal(0,
\frac{1}{d} XX^\top + \sigma^2 I)$ be the marginal distribution over $y$.
We construct a sequence of random measures $\mu_1, \mu_2, \cdots, $ by
sampling $y_i \simiid \mu$ and constructing the empirical measure
\begin{align*}
  \mu_m = \frac{1}{m} \sum_{i=1}^m \delta_{y_i}.
\end{align*}
In this case the optimization problem
\begin{equation} \label{eq:not-fitting-problem-hilbert-discretized}
	\begin{aligned}
		\minimize_{\est{\theta}(X, y_i) \in \real^{d}, 1 \leq i \leq m} & ~~ \int  \norm{\Sigma^{\frac{1}{2}} \prn{\est{\theta} - \prn{X^\top X + d \sigma^2 I}^{-1}X^\top y}}_2^2 d \mu_m  \nonumber \\
		\subjectto & ~~ \int \normtwo{X \est{\theta} - y}^2 d\mu_m \geq  \int \normtwo{X \est{\theta}(\wb{\rho}) - y}^2 d\mu_m 
	\end{aligned}
\end{equation}
is a finite dimensional optimization problem with (strongly convex)
quadratic objective and a single quadratic constraint. Then strong duality
obtains~\citep[Appendix~B.1]{BoydVa04}, so we can
write the stationary condition that for some $\lambda \geq 0$,
\begin{align*}
  \Sigma\prn{\est{\theta}(X,y_i) - \prn{X^\top X + d \sigma^2 I}^{-1}X^\top y_i} - \lambda X^\top (X\est{\theta}(X,y_i) -y_i) = 0
\end{align*}
simultaneously for $i = 1, \ldots, m$. Rewriting gives
\begin{align*}
  \prn{\Sigma - \lambda X^\top X} \est{\theta}(X, y_i) = \prn{\Sigma \prn{X^\top X + d \sigma^2 I}^{-1} - \lambda I} X^\top y_i,
  ~~ \mbox{for~} i = 1, \ldots , m.
\end{align*}
By an identical argument to that we use to prove
Lemma~\ref{lem:strong-duality} in Appendix~\ref{proof:strong-duality}, it
must be the case that
$\Sigma - \lambda X^\top X \succ 0$ and thus for each $i = 1, \ldots, m$,
\begin{align*}
	\est{\theta}(X, y_i) & = \prn{\Sigma - \lambda X^\top X}^{-1} \prn{\Sigma - \lambda X^\top X - \lambda d \sigma^2 I} \prn{X^\top X + d \sigma^2 I}^{-1} X^\top y_i \nonumber \\
	& = \prn{I - \lambda d \sigma^2 \prn{\Sigma - \lambda X^\top X}^{-1} } \prn{X^\top X + d \sigma^2 I}^{-1} X^\top y_i.
\end{align*}
By inspection,  this estimator is linear in $y$,
and for the choice $\lambda =
\frac{\wb{\rho}}{d}$ 
takes identical values at
$y_1, \ldots, y_m$ as $\est{\theta}(\wb{\rho})$. The constraints
of the problem~\eqref{eq:not-fitting-problem-hilbert-discretized}
are satisfied and the KKT conditions hold, so (an) optimal solution
is
$\est{\theta}(\wb{\rho})$.

For any $\est{\theta} \in
\Hsq(\epsilon)$, whenever the training errors satisfy
\begin{align*}
  \int \normtwo{X \est{\theta} - y}^2 d\mu_m \geq  \int \normtwo{X \est{\theta}(\wb{\rho}) - y}^2 d\mu_m,
\end{align*}
we must have
\begin{align}
  \int  \norm{\Sigma^{\frac{1}{2}} \prn{\est{\theta} -
      \E[\theta \mid y]}}_2^2 d \mu_m
  \geq \int \norm{\Sigma^{\frac{1}{2}} \prn{\est{\theta}(\wb{\rho})
      - \E[\theta \mid y]}}_2^2 d \mu_m.
  \label{eq:strong-duality-hilbert-mid-1}
\end{align}
By the law of large numbers, if $\est{\theta}$ is square integrable,
then with probability one
\begin{align*}
  \lim_{m \to \infty} \int \normtwo{X \est{\theta} - y}^2 d\mu_m = \int \normtwo{X \est{\theta} - y}^2 d\mu \geq \epsilon^2
  \stackrel{(\star)}{>}
  \int \normtwo{X \est{\theta}(\wb{\rho}) - y}^2 d\mu = \lim_{m \to \infty} \int \normtwo{X \est{\theta} - y}^2 d\mu_m,
\end{align*}
where inequality $(\star)$ holds by the assumption that $\wb{\rho} < \rho =
\rho(\epsilon)$, yielding the difference in training errors.  Thus
Eq.~\eqref{eq:strong-duality-hilbert-mid-1} holds eventually for all large
$m$. Again applying the law of large numbers and taking $m \to \infty$, we
establish the desired prediction error
gap~\eqref{eq:strong-duality-hilbert-claim}.

\end{document}